\pgfplotsset{compat=1.18}
\newcommand*{\R}{{\mathbb{R}}}
\newcommand*{\calG}{{\mathcal{G}}}
\newcommand*{\calD}{{\mathcal{D}}}
\newcommand*{\calN}{{\mathcal{N}}}
\newcommand*{\calX}{{\mathcal{X}}}
\newcommand*{\calY}{{\mathcal{Y}}}
\newcommand*{\Y}{{\calY}}
\newcommand*{\DY}{\Delta \calY}
\newcommand*{\DDY}{\Delta \Delta \calY}
\let\phi\varphi
\DeclareMathOperator*{\pr}{\mathbb{P}}
\DeclareMathOperator*{\ex}{\mathbb{E}}
\DeclareMathOperator{\var}{Var}
\DeclareMathOperator{\unif}{Unif}
\DeclareMathOperator*{\argmin}{arg\,min}
\DeclarePairedDelimiter{\inn}{\langle}{\rangle}
\renewcommand{\th}{^\text{th}}
\let\hat\widehat
\DeclareMathOperator{\ind}{\mathbbm{1}}
\newcommand{\ignore}[1]{}
\newcommand{\eat}[1]{}
\newcommand{\rv}{\bm}
\newcommand{\distr}{p}
\newcommand{\Distr}{\rv{\distr}}
\newcommand{\truedistr}{\distr^*}
\newcommand{\meandistr}{\overline{\distr}}
\newcommand{\Ysnap}{\Y^{(k)}}
\newcommand{\spred}{g}
\newcommand{\proj}{\mathrm{proj}}
\newcommand{\mix}{\pi}
\newcommand{\mixbayes}{\mix^*}
\newcommand{\pred}{f}
\newcommand{\pbayes}{\pred^*}
\newcommand{\shn}{\text{Shannon}}
\newcommand{\brier}{\text{Brier}}
\newcommand{\x}{\bm{x}}
\newcommand{\y}{\rv{y}}
\newcommand{\p}{\rv{p}}
\newcommand{\s}{\rv{s}}
\newcommand{\Eta}{\rv{\eta}}
\DeclareMathOperator{\AU}{AU}
\DeclareMathOperator{\EU}{EU}
\DeclareMathOperator{\PU}{PU}
\DeclareMathOperator{\KL}{KL}
\DeclareMathOperator{\RMI}{RMI}
\DeclareMathOperator{\TMI}{TMI}
\DeclarePairedDelimiterX{\divx}[2]{(}{)}{%
  #1\;\delimsize\|\;#2%
}
\theoremstyle{plain}
\newtheorem{theorem}{Theorem}[section]
\newtheorem{lemma}[theorem]{Lemma}
\newtheorem{corollary}[theorem]{Corollary}
\theoremstyle{definition}
\newtheorem{definition}[theorem]{Definition}
\theoremstyle{remark}
\newtheorem{remark}[theorem]{Remark}
\numberwithin{equation}{section}
\newcommand{\CP}[1]{{\color{magenta}[CP: #1]}}
\newcommand{\AG}[1]{{\color{brown}[AG: #1]}}
\newif\ifshowedit
\newcommand{\edit}[1]{%
    \ifshowedit
        {\color{orange}#1}%
    \else
        #1%
    \fi
}
\let\citep\cite
\title{Provable Uncertainty Decomposition via Higher-Order Calibration}
\author{Gustaf Ahdritz\\Harvard University\thanks{Work done while interning at Apple. Authors in alphabetical order.}  \and Aravind Gollakota\\Apple \and Parikshit Gopalan \\Apple \and Charlotte Peale\\Stanford University\footnotemark[1]  \and Udi Wieder\\Apple }
\date{December 24, 2024}
\begin{document}

\maketitle

\begin{abstract}
We give a principled method for decomposing the predictive uncertainty of a model into aleatoric and epistemic components with explicit semantics relating them to the real-world data distribution. While many works in the literature have proposed such decompositions, they lack the type of formal guarantees we provide. Our method is based on the new notion of higher-order calibration, which generalizes ordinary calibration to the setting of higher-order predictors that predict \emph{mixtures} over label distributions at every point. We show how to measure as well as achieve higher-order calibration using access to $k$-snapshots, namely examples where each point has $k$ independent conditional labels. Under higher-order calibration, the estimated aleatoric uncertainty at a point is guaranteed to match the real-world aleatoric uncertainty averaged over all points where the prediction is made. To our knowledge, this is the first formal guarantee of this type that places no assumptions whatsoever on the real-world data distribution. Importantly, higher-order calibration is also applicable to existing higher-order predictors such as Bayesian and ensemble models and provides a natural evaluation metric for such models. We demonstrate through experiments that our method produces meaningful uncertainty decompositions for image classification.

\end{abstract}

\section{Introduction}
Decomposing predictive uncertainty into aleatoric and epistemic components is of fundamental importance in machine learning and statistical prediction \citep{hullermeier2021aleatoric,abdar2021review}. Aleatoric (or data) uncertainty is the inherent uncertainty in the prediction task at hand, arising from randomness present in nature's data generating process, while epistemic (or model) uncertainty arises from a model's lack of perfect knowledge about nature. Accurate estimates of these quantities are of great use to practitioners, as they allow them to understand whether the difficulty of their prediction task arises primarily from the data or their model.
\ifbool{arxiv}{
Importantly, aleatoric uncertainty cannot be reduced by improving the model. Thus this distinction is particularly relevant for the pervasive use case of diagnosing model failures and trying to build an improved model (e.g., for active and continual learning). It also gives insight into where a model's predictions may be trusted (e.g., for selective prediction and trustworthy ML more broadly).
}{}

As a running example, consider the task of predicting based on an X-ray scan whether a bone will require a cast. Suppose our model issues a 50\% prediction for a scan. This could occur because the scan is genuinely ambiguous. A very different scenario is one where the model is unable to decide between two diagnoses: one where a cast is unambiguously necessary and one where it is not. In both scenarios, the total predictive uncertainty is the same. However, the nature of this uncertainty differs; in the first scenario, it is purely aleatoric, while in the second, it is entirely epistemic.

For any such explanation of a model's uncertainty to be trusted, it must be accompanied by some meaningful semantics telling us what it means about the real world. 
The canonical example of such a notion in the prediction literature is \emph{calibration}, which requires that whenever our model predicts a certain distribution of outcomes, we actually observe that distribution of outcomes on average. In our X-ray example, a 50\% prediction would be calibrated if exactly half of all scans with the same prediction did ultimately require casts. But this would be true regardless of which of the two scenarios above we were in. Standard calibration is purely a guarantee regarding total predictive uncertainty, and does not account for the source of uncertainty. Accordingly, we ask: \begin{quotation}
    \noindent
    \emph{Can we build a theory of calibration that allows us to meaningfully decompose our overall predictive uncertainty into epistemic and aleatoric components and give explicit semantics for these components?}
\end{quotation}
In this work we answer this question by proposing a theory of \emph{higher-order calibration}, a generalization of ordinary (first-order) calibration. 
This theory pertains to \emph{higher-order predictors}, which predict \emph{mixtures} of distributions over outcomes rather than just single distributions. Thus in the first scenario of the X-ray example, we might predict a mixture concentrated on the 50\% distribution, while in the second we might predict an equal mixture of 100\% and 0\% \edit{(see Figure~\ref{fig:hoc})}.
As we shall discuss, higher-order predictors arise in many settings, including in Bayesian and ensemble methods. 
In a nutshell, a higher-order predictor is higher-order calibrated if whenever it predicts a certain mixture of distributions, then we do observe that mixture of distributions over outcomes in aggregate.
\begin{figure}[t!]
  \centering
  \includegraphics[keepaspectratio, width=0.85\textwidth]{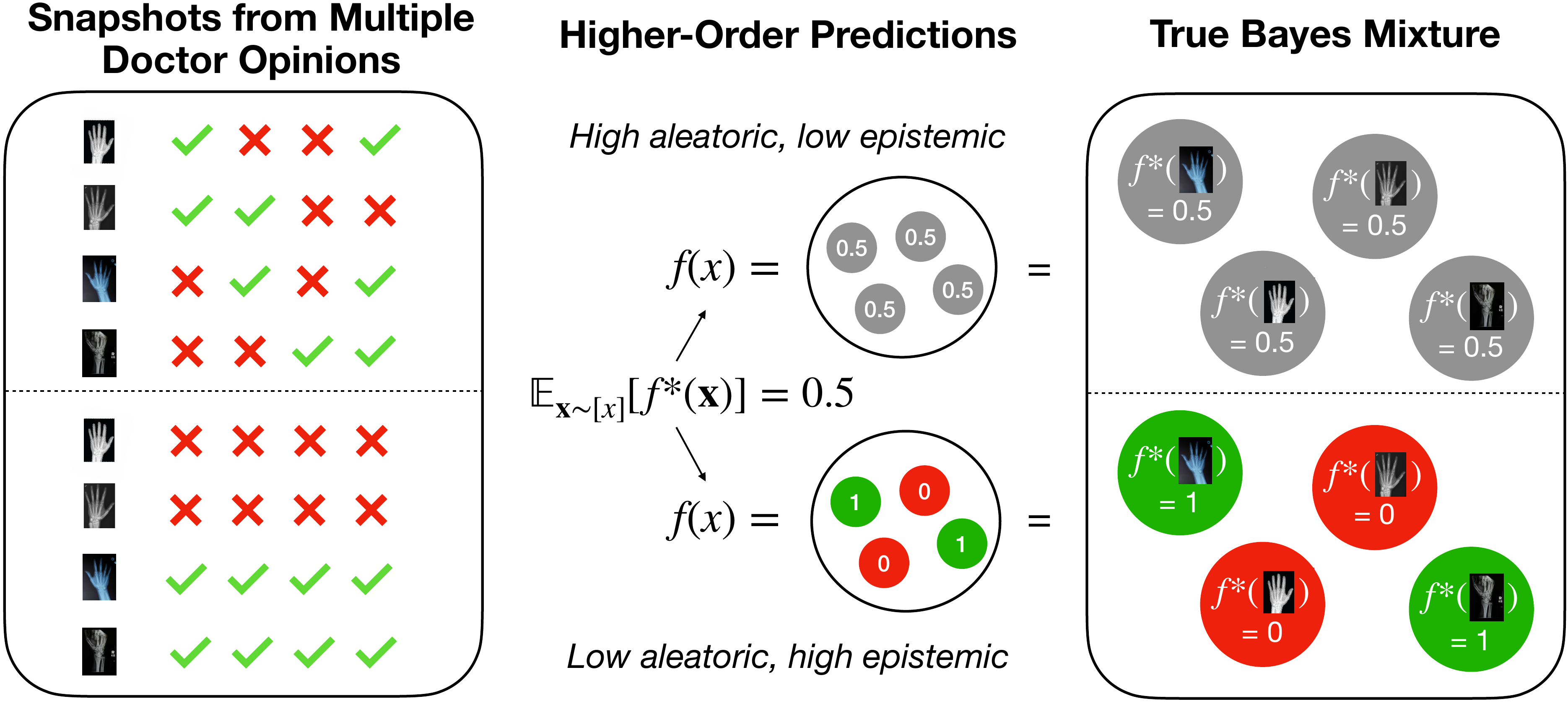}
  \caption{\textbf{An illustration of higher-order calibration using the X-ray classification example.} We depict scenarios 1 and 2 on the top and bottom respectively. On the left, we have instances grouped together into one level set $[x]$ by the predictor. By learning from snapshots drawn from the level set in either case, we are able to predict mixtures that match the true Bayes mixture $\pbayes([x])$.}
  \label{fig:hoc}
\end{figure}
The main contributions of our work are as follows: 
\begin{enumerate}[nosep]
    \item We propose the notion of higher-order calibration, motivated by the question of rigorously decomposing predictive uncertainty into aleatoric and epistemic components.
    We consider standard uncertainty decompositions from the Bayesian literature \citep{houlsby2011bayesian,hofman2024quantifying,kotelevskii2024predictive} and prove that under higher-order calibration, a predictor's estimate of its aleatoric uncertainty matches the real-world aleatoric uncertainty averaged over all points with the same prediction. Similarly, the epistemic uncertainty matches the true ``variance'' in real-world label distributions over such points.
    \item We propose a tractable relaxation termed $k\th$-order calibration, which approaches higher-order calibration for large $k$. Our theory builds on and generalizes recent work by \cite{johnson2024experts}, which studied the $k=2$ case. We show that for certain entropy functions, even small values of $k$ (like $2$) yield provable uncertainty decomposition.   
    \item We give practical methods for achieving higher-order calibration using access to $k$-snapshots, namely examples with $k$ independent conditional labels per instance, by leveraging connections to the problem of mixture learning \citep{li2015learning}.
    \item We verify that when these methods are applied to real-world image classification tasks, they lead to useful uncertainty decompositions.
\end{enumerate}



\ifbool{arxiv}{
\subsection{Technical overview}
}{\edit{\paragraph{Main result: provable uncertainty decomposition.}}}
Suppose nature generates random labeled data $(\x, \y) \in \calX \times \calY$, where $\calX$ is the instance space and $\calY$ the label space, \edit{which in this paper we assume is discrete}. Let $\DY$ be the space of probability distributions over $\calY$. Let $\pbayes : \calX \to \DY$ map each instance $x$ to the conditional distribution of $\y \mid(\x = x)$. We make no assumptions about the marginal distribution or the form of $\pbayes$. Thus each $x$ has true aleatoric uncertainty $\AU^*(x) = H(\pbayes(x))$, where $H$ is a measure of variability or entropy. Epistemic uncertainty, by contrast, will emerge as a property of our predictor. \edit{We denote random variables using boldface throughout. See \cref{sec:notation} for an extended discussion of notation.}
\begin{definition}[Higher-order predictors and calibration]\label{def:hop-hoc}
    A \emph{higher-order predictor} $\pred : \calX \to \DDY$ is a function that maps each instance to a distribution over $\DY$ (the space of such mixtures is denoted $\DDY$).
    Let $[x] = \{ x' \in \calX : \pred(x') = \pred(x)\}$ denote the level set of $\pred$ that $x$ lies in. 
    Let $\pbayes([x]) := \{ \pbayes(\x) \mid \x \sim [x] \}$ be the \emph{Bayes mixture} over $[x]$\ifbool{arxiv}{ with mixture components $\pbayes(\x)$ as $\x$ ranges over $[x]$.}{\edit{, where $\x \sim [x]$ denotes a draw from the marginal distribution $\calD_{\calX}$ restricted to $[x]$.}}
    %
    We say $\pred$ is 
    \emph{higher-order calibrated} if for every $x$, $\pred(x) = \pbayes([x])$. 
\end{definition}
Higher-order calibration guarantees that our prediction $\pred(x)$ is a proxy for the true ``Bayes mixture'' of ground truth distributions $\pbayes(\x)$ as we range over $\x \sim [x]$. This implies that any quantity defined in terms of the predicted mixture has a ``real-world'' interpretation as the corresponding quantity defined in terms of the Bayes mixture. This turns out to be a powerful guarantee.


We now consider the well-studied ``mutual information decomposition'' of predictive uncertainty (see e.g.\ \cite{houlsby2011bayesian,gal2016uncertainty}), and see what semantics it has under higher-order calibration. We state it in terms of the Shannon entropy $H : \DY \to \R$, given by $H(\distr) = -\sum_{y \in \calY} \distr_y \log \distr_y$. Let $\mix = \pred(x) \in \DDY$ be the predicted mixture at an instance $x$. The induced marginal distribution of $\y$ when we draw $\Distr \sim \mix$ at random and then draw $\y \sim \Distr$ is given by $\meandistr = \ex[\Distr]$, the centroid of $\mix$. The \emph{mutual information decomposition} is as follows\footnote{To avoid confusion, note that we always view $H$ as taking in an object of type $\DY$. When $\Distr$ is itself a random variable, i.e.\ a random distribution, $H(\Distr)$ is also a random variable.}:
    \begin{equation}\label{eq:intro-mi-decomp}
    \underbrace{H(\meandistr)}_{\text{Predictive uncertainty } \PU(\pred : x)} = \underbrace{\ex_{\Distr \sim \mix}[H(\Distr)]}_{\text{Aleatoric uncertainty estimate } \AU(\pred : x)} + \quad \underbrace{H(\meandistr) - \ex_{\Distr \sim \mix}[H(\Distr)]}_{\text{Epistemic uncertainty } \EU(\pred : x)}
    \end{equation}


\begin{theorem}[Uncertainty decomposition under higher-order calibration]\label{thm:main-intro}
    Suppose $\pred$ is higher-order calibrated. Let $\mixbayes = \pbayes([x])$ be the Bayes mixture over the level set $[x]$, and let $\meandistr^* = \ex_{\Distr^* \sim \mixbayes}[\Distr^*]$. Then \begin{itemize}[nosep]
        \item The aleatoric uncertainty estimate is accurate on average: \[ \AU(f : x) = \ex_{\Distr \sim \mix}[H(\Distr)] = \ex_{\Distr^* \sim \mixbayes}[H(\Distr^*)] = \ex_{\x \sim [x]}[\AU^*(\x)]. \]
        \item The epistemic uncertainty is exactly the average divergence of $\Distr^* \sim \mixbayes$ to the mean $\meandistr_*$:
        \[ \EU(f : x) = \ex_{\Distr \sim \mix}[D_{\KL}\divx{\Distr}{\meandistr}] = \ex_{\Distr^* \sim \mixbayes}[D_{\KL}\divx{\Distr^*}{\meandistr^*}]. \]
    \end{itemize}
\end{theorem}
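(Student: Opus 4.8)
The plan is to simply unwind the definitions: higher-order calibration is by \cref{def:hop-hoc} the statement that the \emph{entire mixture} $\pred(x)$ equals the Bayes mixture $\pbayes([x])$, so both bullets reduce to (i) a change-of-variables observation and (ii) a standard entropy identity, and neither of these uses any structure of the data distribution.

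First I would record that since $\pred$ is higher-order calibrated we have $\mix = \pred(x) = \pbayes([x]) = \mixbayes$ as elements of $\DDY$, and hence $\meandistr = \ex_{\Distr \sim \mix}[\Distr] = \ex_{\Distr^* \sim \mixbayes}[\Distr^*] = \meandistr^*$; this already gives the middle equality in each bullet, since $\ex_{\Distr \sim \mix}[\phi(\Distr)] = \ex_{\Distr^* \sim \mixbayes}[\phi(\Distr^*)]$ for any integrand $\phi$. For the aleatoric bullet it then remains to identify $\ex_{\Distr^* \sim \mixbayes}[H(\Distr^*)]$ with $\ex_{\x \sim [x]}[\AU^*(\x)]$: by definition $\mixbayes = \pbayes([x])$ is the law of $\pbayes(\x)$ when $\x \sim \calD_{\calX}$ restricted to $[x]$, so for any functional $\phi : \DY \to \R$ we have $\ex_{\Distr^* \sim \mixbayes}[\phi(\Distr^*)] = \ex_{\x \sim [x]}[\phi(\pbayes(\x))]$; taking $\phi = H$ and recalling $\AU^*(\x) = H(\pbayes(\x))$ finishes this part.

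For the epistemic bullet the only real content is the pointwise identity, valid for \emph{any} mixture $\mix$ with centroid $\meandistr$,
\[ H(\meandistr) - \ex_{\Distr \sim \mix}[H(\Distr)] = \ex_{\Distr \sim \mix}\bigl[D_{\KL}\divx{\Distr}{\meandistr}\bigr], \]
which I would prove by writing $D_{\KL}\divx{\Distr}{\meandistr} = \sum_{y}\Distr_y \log \Distr_y - \sum_y \Distr_y \log \meandistr_y$, taking $\ex_{\Distr \sim \mix}[\cdot]$, and using linearity on the second sum with $\ex_{\Distr \sim \mix}[\Distr_y] = \meandistr_y$ to turn it into $-\sum_y \meandistr_y \log \meandistr_y = H(\meandistr)$, while the first sum contributes $-\ex_{\Distr \sim \mix}[H(\Distr)]$. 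Applying this with $\mix = \pred(x)$ gives the first equality of the bullet, and combining with $\mix = \mixbayes$, $\meandistr = \meandistr^*$ from the first step gives the second.

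Honestly there is no hard step here: the theorem is essentially the observation that ``calibration transfers expectations,'' plus bookkeeping. The only points worth stating carefully are that higher-order calibration is an exact identity of mixtures rather than of some derived scalar (so that arbitrary functionals, in particular $H$ and $D_{\KL}(\cdot\,\|\,\meandistr)$, transfer), and that the clean $D_{\KL}$ form of the epistemic term is specific to Shannon entropy, the $H$ under which \eqref{eq:intro-mi-decomp} is stated; for a general entropy the epistemic term would be the associated Bregman divergence, which is not needed for \cref{thm:main-intro}.
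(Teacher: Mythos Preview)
Your proposal is correct and follows essentially the same approach as the paper: the aleatoric part is exactly the paper's \cref{lem:perfect-HOC-implies-AU}, and the epistemic part is the Shannon-entropy specialization of the paper's \cref{lem:EU_PU_semantics}. The only cosmetic difference is that the paper derives the identity $\EU_G(\pred:x)=\ex_{\Distr\sim\mix}[D\divx{\Distr}{\meandistr}]$ in the general proper-loss framework (via $\PU_G=\ex_{\Distr}[L\divx{\Distr}{\meandistr}]$ and $D=L-G$), whereas you compute it directly for Shannon/KL by expanding the sum; both arguments are short and equivalent in this case, and you correctly note the Bregman-divergence generalization.
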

\ifbool{arxiv}{
This theorem gives a powerful guarantee for the aleatoric uncertainty estimate of a higher-order calibrated nature: it is correct in average over all the points where a particular prediction is made. Any further predictive uncertainty, resulting from the mean prediction being far from the predictions in the support of the distribution, can be attributed to the epistemic uncertainty in the predictor. 
We prove this theorem formally in \cref{sec:decomps}.
}

\ifbool{arxiv}{\paragraph{Prediction sets from higher-order calibration.} Beyond uncertainty decomposition, another useful consequence of higher-order calibration is prediction sets $S$ satisfying the following type of ``higher-order'' coverage guarantee: \begin{equation}
    \pr_{\x \sim [x]}[\pbayes(\x) \in S] \approx 1 - \alpha.
\end{equation} That is, we can capture the ground truth probability vectors ranging over $[x]$ with a certain prescribed probability. We describe this application in more detail in \cref{sec:pred_intervals}.}{}

\subsection{Learning from snapshots and $k\th$-order calibration}
Theorem \ref{thm:main-intro} highlights the power of higher-order calibration.
But how do we achieve it or even check if a predictor satisfies it? The key difficulty is that for any $x$, we never see $\pbayes(x)$ explicitly, but only get samples drawn from it. 

It turns out that checking high-order calibration requires going beyond the standard learning model.  To see why, returning to our X-ray example, let $x$ be the given scan, and let $[x]$ be all scans that also get a similar 50\% prediction. Assume that $[x]$ is large so that we never see the same image twice. In scenario 1, the set $[x]$ consists entirely of genuinely ambiguous scans. In scenario 2, $[x]$ consists of an equal balance of unambiguously positive and negative cases. In either scenario, say the predictor cannot distinguish scans within $[x]$ from one another. Then if all we receive are ordinary labeled examples with a single label per image --- we see a 50-50 distribution of labels in both scenarios 1 and 2 and have no hope of telling them apart. This can be formalized using the outcome indistinguishability framework \citep{dwork2021outcome}. 

Suppose instead that for each image $x$, we receive 2 labels, each of them drawn independently from $f^*(x)$. We can imagine getting a second independent doctor's opinion for each image. A little thought reveals that in this setting, we can tell the two scenarios apart quite easily. Scenario 1 will result in pairs of labels distributed uniformly over the set $\{00, 01,10,11\}$, whereas scenario 2 will only produce either $00$ or $11$, but never $01$ or $10$. 

\ifbool{arxiv}{\paragraph{$k$-snapshot learning.} We formalize the above intuition by augmenting the classic learning setting:}{

To go beyond the indistinguishability barrier,} we assume that we have access to \emph{$k$-snapshots}, namely independent conditional samples $\y_1, \dots, \y_k \sim \pbayes(\x)$ for each sample $\x$.\ifbool{arxiv}{ The above example where we receive two labels for each x-ray would correspond to the $k = 2$ case. For a more detailed walkthrough of the X-ray example, please also see \cref{sec:approx_reps}.

}{}There are many learning settings where one can expect to have access to $k$-snapshots. In crowd-sourcing settings each $\x$ can be shown to multiple experts, each of whom answers with their own label $\y_i$. We can model these $\y_i$s as independent draws from the true conditional distribution $\pbayes(\x)$. Here $k$ is decided by the number of experts we can call upon. A different setting is one where we want to distill a large teacher model into a smaller student model. In this case we have oracle access to the teacher and can ask for $k$-snapshots for large $k$. Of course not every setting admits such access, for instance health outcomes which by definition occur only once per patient.\ifbool{arxiv}{ While this is a limitation of our method, it is inherent to the task --- as just noted, if a predictor is unable to separate instances in a set $[x]$, it cannot distinguish between scenarios like those in the X-ray example.}{\footnote{While this is a limitation of our method, it is inherent to the task --- as just noted, if a predictor is unable to separate instances in a set $[x]$, it cannot distinguish between scenarios like those in the X-ray example.}} 
\ifbool{arxiv}{

While a $k$-snapshot can be viewed as an element $(y_1, ..., y_k) \in \calY^k$, we will instead use an equivalent representation that will more naturally connect the goal of learning from $k$-snapshots to higher-order predictors (see \cref{sec:approx_reps} for an extended discussion of potential representations of $k$-snapshot predictors). In particular, note that a $k$-snapshot of labels from $\Y$  can be seen as a coarse representation of a distribution in $\DY$  by considering its normalized histogram. In other words, given a snapshot $(y_1, ..., y_k)$, we can represent it as a distribution $\distr \in \DY$, where for any $y \in \calY$, the probability placed on $y$ is proportional to the number of times it appears in the snapshot:
\[\Pr_{\y \sim \distr}[\y = y] = \frac{1}{k}\sum_{i = 1}^k \mathbf{1}[y_i = y]. \]

A distribution over $k$-snapshots is thus a coarse representation of a mixture in $\DDY$, and the goal of $k$-snapshot learning can be viewed as producing a higher-order predictor $\pred: \calX \rightarrow \DDY$ that predicts the distribution of $k$-snapshot data for each point $x \in \calX$. }{}

\ifbool{arxiv}{\paragraph{$k\th$-order calibration.} With access to $k$-snapshots, we can define a practical relaxation of higher-order calibration, which we call \emph{$k\th$-order calibration}. This relaxation allows us to approximate higher-order calibration while working with finite samples. While higher-order calibration requires that the predicted mixture match the true Bayes mixture, $k\th$-order calibration only requires that the predicted mixture be close to the empirical distribution of $k$-snapshots drawn from the true Bayes mixture. 

\begin{definition}[Perfect $k\th$-order calibration]
   Let $\proj_k(\pbayes([x])) \in \DDY$ denote the empirical distribution of $k$-snapshots drawn from the Bayes mixture over $[x]$. A $k$-snapshot predictor $\pred: \calX \rightarrow \DDY$ is $k\th$-order calibrated if for every $x \in \calX$, $\pred(x) = \proj_k(\pbayes([x])).$
\end{definition}
}{A $k$-snapshot of labels from $\Y$  can be seen as a coarse representation of a distribution in $\DY$  by considering its normalized histogram. A distribution over $k$-snapshots is thus a coarse representation of a mixture in $\DDY$. By framing our goal as that of learning a $k$-snapshot predictor from $k$-snapshot examples, we obtain an effective way of learning an (approximate) higher-order predictor.

}
This naturally defines a hierarchy of calibration notions\ifbool{arxiv}{}{, which we term $k\th$-order calibration,} that ranges from ordinary first-order calibration when $k=1$ to (full) higher-order calibration as $k \to \infty$. Our work builds on and generalizes the elegant recent work of \cite{johnson2024experts}, who consider the case of $k=2$; we discuss this  further in related work. \ifbool{arxiv}{

While even just checking if a predictor is higher-order calibrated requires access to $\pbayes$, $k\th$-order calibration can be verified with access to only $k$-snapshot samples. }{}Moreover, $k\th$-order calibration essentially reduces to ordinary first-order calibration over the extended label space of $k$-snapshots; the only subtlety is in the choice of metric over this space. This leads to a simple and general recipe: we can leverage any general-purpose procedure guaranteeing first-order calibration over this extended label space to achieve $k\th$-order calibration. \ifbool{arxiv}{}{We prove formally that $k\th$-order calibration converges to higher-order calibration at a $1/\sqrt{k}$ rate (see \cref{thm:koc-approx-hoc}).

}



We also provide a different way of achieving $k\th$-order calibration using a purely \emph{post-hoc} routine, analogous to first-order calibration procedures such as Platt scaling. 
Suppose we start from an ordinary first-order calibrated model. We can consider its level set partition $[\cdot]$ (or more generally any desired partition $[\cdot]$ of the space, with $[x]$ denoting the equivalence class of $x$). Now, higher-order calibration with respect to $[\cdot]$ entails exactly the following: for any given equivalence class $[x]$, we need to learn the true Bayes mixture $\pbayes([x])$. Thus we have reduced our problem to a collection of pure \emph{mixture learning} problems. This problem has been studied before in the theoretical computer science literature \citep{li2015learning}. We leverage these ideas to provide a simple post-hoc $k\th$-order calibration procedure whose sample complexity scales as $|\Y|^k$ (see Theorem \ref{thm:sample-cxty}). 

\ifbool{arxiv}{
}{Importantly, 
for common choices of entropy functions (like Brier entropy), we show in Theorem \ref{thm:ud-from-koc} that $k\th$-order calibration for small $k$ ($k =2$ for Brier) suffices to approximate the uncertainty decomposition guarantees from Theorem \ref{thm:main-intro}. This relies on a key property of $k\th$-order calibration that we prove: it gives good estimates of the first $k$ moments of the Bayes mixture over the level sets (Theorem \ref{lem:ko-moment-approx}). $k\th$-order calibration is thus a natural goal in itself for uncertainty decomposition. }

\ifbool{arxiv}{
In addition to admitting simple routines for verifying and constructing $k\th$-order calibrated predictors, $k\th$-order calibration also serves as a practical approximation of higher-order calibration:
\begin{itemize}
    \item We prove formally that $k\th$-order calibration converges to higher-order calibration at a $1/\sqrt{k}$ rate (see \cref{thm:koc-approx-hoc}).
    \item We prove a key property of $k\th$-order calibration that holds even for small values of $k$: it gives good estimates of the first $k$ moments of the Bayes mixture over the level sets (Theorem \ref{lem:ko-moment-approx}). 
    \item Leveraging this ability to estimate moments, we show in Theorem \ref{thm:ud-from-koc} that for common choices of entropy functions (like Brier entropy), $k\th$-order calibration for small $k$ ($k =2$ for Brier) suffices to approximate the uncertainty decomposition guarantees from Theorem \ref{thm:main-intro}. $k\th$-order calibration is thus a natural goal in itself for uncertainty decomposition. 
\end{itemize} 

These results establish k-th order calibration as a practical framework for approximating higher-order calibration and achieving meaningful uncertainty decomposition.

}{}

\ifbool{arxiv}{}{\paragraph{Prediction sets from higher-order calibration:} Beyond uncertainty decomposition, another useful consequence of higher-order calibration is prediction sets $S$ satisfying the following type of ``higher-order'' coverage guarantee: \begin{equation}
    \pr_{\x \sim [x]}[\pbayes(\x) \in S] \approx 1 - \alpha.
\end{equation} That is, we can capture the ground truth probability vectors ranging over $[x]$ with a certain prescribed probability. We describe this application in more detail in \cref{sec:pred_intervals}.}
\subsection{Related work}
\paragraph{Bayesian methods and mixture-based uncertainty decompositions.} The predominant modern approach to uncertainty decomposition has been the Bayesian one (see e.g.\ \cite{gal2016uncertainty,mena2021survey}), wherein one uses Bayesian inference to obtain a full posterior distribution over a family of models. For any given instance we now have not just a single predicted distribution over outcomes but rather a full ``posterior predictive distribution'' over such distributions. In this way every Bayesian model is a higher-order predictor. A similar logic holds for ensemble models as well \citep{gal2016dropout,lakshminarayanan2017simple}.

A number of works in the Bayesian literature have studied uncertainty decompositions based on predicted mixtures (sometimes termed second-order or higher-order distributions) \citep{houlsby2011bayesian,depeweg2018decomposition,hullermeier2021aleatoric,malinin2020uncertainty,wimmer2023quantifying,schweighofer2023introducing,sale2023second-var,sale2023second-dist,kotelevskii2024predictive,hofman2024quantifying}. Bayesian neural networks \citep{lampinen2001bayesian} and variants \citep{malinin2018predictive,sensoy2018evidential,osband2023epistemic} can also be seen as explicit constructions of certain types of parameterized higher-order predictors. We build on these works, and in particular on \cite{hofman2024quantifying,kotelevskii2024predictive}, for our analysis of uncertainty decompositions.

While the idea of predicting mixtures is already present in these works, they do not consider any formal semantics such as calibration. The real-world semantics of Bayesian methods typically arise from the following type of asymptotic consistency guarantee (see e.g.\ Doob's theorem \citep{miller2018detailed}): if the model class is well-specified (i.e.\ the unknown $\pbayes$ is realizable by the class), and Bayesian inference is computationally feasible (at least in some approximate sense), then in the limit as the sample size grows large, the posterior becomes tightly concentrated around the true $\pbayes$. In practice, however, it is often very difficult (or impossible) to test that the model class is indeed well-specified, as well as to perform true Bayesian inference over expressive classes. In such cases, it is not clear what semantics (if any) hold for the resulting uncertainty decomposition. This is the key gap we address with higher-order calibration. We note that higher-order calibration can treated as an evaluation metric for any higher-order predictor, even if it is misspecified or poorly fit.

\paragraph{Pair prediction and second-order calibration.} The $k=2$ case of $k\th$-order calibration was introduced and studied by \cite{johnson2024experts}. Our work builds on theirs and greatly extends their notion of second-order calibration by defining the limiting notion of higher-order calibration that is implicit in their work. This allows us to view $k\th$-order calibration more generally as defining a natural hierarchy between first-order and higher-order calibration. We believe this considerably clarifies the conceptual principles at play and also provides a bridge to the Bayesian literature. We draw on recent work on calibration \citep{blasiok2023unifying,gopalan2024omnipredictors,gopalan2024computationally} to provide a full-fledged theory of higher-order calibration error and more. The idea of joint prediction has separately been studied by \cite{wen2021predictions,osband2023epistemic,durasov2024zigzag, lee2024distribution}.\newline

\noindent Please see \cref{sec:related_work_app} for a discussion of additional related work.

\section{Formal setup and notation}\label{sec:notation}
\paragraph{Notation.}
We denote our instance space, or domain, by $\calX$, and our outcome space by $\calY$. We will focus on discrete or categorical spaces $\calY = \{0, \dots, \ell-1\}$. We denote the space of probability distributions on $\calY$ by $\DY$, and the space of (higher-order) distributions on $\DY$, or mixtures, by $\DDY$. The space $\DY$ can be identified with the $(\ell-1)$-dimensional simplex, denoted $\Delta_\ell$ and viewed as a subset of $\R^{\ell}$. We will often think of an element of $\DY$ as a vector in $\Delta_\ell \subset \R^{\ell}$ (namely with $\ell$ nonnegative entries summing to one). It will often be illustrative to consider the binary case $\calY = \{0 , 1\}$, where we can identify $\DY$ with a single parameter in $[0, 1]$, namely the bias, or the probability placed on $1$. A distribution, i.e.\ an element of $\DY$, will typically be denoted by $\distr$. A mixture, i.e.\ an element of $\DDY$, will typically be denoted by $\mix$.


We follow the convention of denoting random variables using boldface. Thus given a distribution $\distr \in \DY$, a random outcome from this distribution will be denoted by $\y \sim \distr$. Similarly, given a mixture $\mix \in \DDY$, a \emph{random distribution} drawn from the mixture (i.e.\ a random mixture component drawn according to the mixture weights) will be denoted by $\rv{\distr} \sim \mix$.

We refer to a $k$-tuple of labels $(\y_1, \dots, \y_k)$ drawn iid from a distribution $\distr \in \DY$ as a \emph{$k$-snapshot}. Since the $\y_i$ are drawn iid, the order of the tuple does not matter, and we generally symmetrize the snapshot and identify it with the distribution $\unif(\y_1, \dots, \y_k)$. We denote the space of all such distributions or symmetrized $k$-snapshots by $\Ysnap$.\footnote{It is equivalent to the space $\calY^{k}$ modulo permutations.} Note that $\Ysnap \subseteq \DY$. The space of distributions over symmetrized $k$-snapshots will be denoted $\Delta \Ysnap$.

\paragraph{Learning setup.} We model nature's data generating process as a marginal distribution $D_\calX$ on $\calX$ paired with a conditional distribution function $\pbayes : \calX \to \DY$, where $\pbayes(x)$ describes the distribution of $\y \mid \x = x$. We make no assumptions about $D_{\calX}$ or the form of $\pbayes$.  Recall that we define a higher-order predictor as a mapping $\pred :\calX \to \DDY$. Even though nature is a pointwise {\em pure} distribution $\pbayes:\calX \to \DY$, we model $\pred$ as predicting mixtures to allow it to express epistemic uncertainty. 

    A \emph{partition} of the space $\calX$ is a collection of disjoint subsets, or equivalence classes, whose union is the entire space. For any $x \in \calX$, we use $[x] \subset \calX$ to refer to its equivalence class, and we use $[\cdot]$ as shorthand for the entire partition. The \emph{Bayes mixture} over an equivalence class $[x]$, denoted $\pbayes([x]) \in \DDY$ by a slight abuse of notation, is the mixture obtained by drawing $\x \sim [x]$ according to the marginal distribution $D_\calX$ restricted to $[x]$ and considering $\pbayes(\x)$.



\section{Higher-order calibration}\label{sec:higher-order-cal}

In this section we define higher-order and $k\th$-order calibration in full generality, and describe methods for achieving them. Recall that a \emph{partition} of the space $\calX$ is a collection of disjoint subsets, or equivalence classes, whose union is the entire space. For any $x \in \calX$, we use $[x] \subseteq \calX$ to refer to its equivalence class, and we use $[\cdot]$ as shorthand for the entire partition. The \emph{Bayes mixture} over an equivalence class $[x]$, denoted $\pbayes([x])$, is the mixture obtained by drawing $\x \sim [x]$ (according to the marginal distribution $D_\calX$ restricted to $[x]$) and considering $\pbayes(\x)$.


We define approximate higher-order calibration to allow for some error between the predicted distribution and the Bayes mixture, where we measure closeness between mixtures in $\DDY$ using the Wasserstein distance with respect to the $\ell_1$ (or statistical) distance between distributions in $\DY$. \edit{Recall that for us $\calY$ is discrete.}

\begin{definition}[Approximate higher-order calibration]
    We say a higher-order predictor $\pred : \calX \to \DDY$ is $\epsilon$-higher-order calibrated to $\pbayes$ wrt a partition $[\cdot]$ if for all $x \in \calX$,\footnote{Throughout this paper, for simplicity we require closeness for all equivalence classes $[x]$. A more relaxed notion would only require it to hold with high probability over equivalence classes.}
    \[W_1(\pred(x), \pbayes([x])) = \inf_{\mu \in \Gamma(\pred(x), \pbayes([x]))}\mathbb{E}_{(\Distr, \Distr^*)\sim \mu}[\|\Distr - \Distr^*\|_1] \leq \epsilon, \] where we use $\Gamma(\mix, \mix')$ to denote the set of all couplings of $\mix$ and $\mix'$.
\end{definition}

Even with this relaxation, it is unclear how to measure higher-order calibration error, since we never get access to $\pbayes$ itself. As a tractable path towards this goal, we will introduce the model of learning from $k$-snapshots and $k\th$-order calibration.

\begin{definition}[$k$-snapshots and $k\th$-order projections]\label{def:kth-order-proj}
    A $k$-snapshot for $x \in \calX$ is a tuple $\s = (\y_1, \dots, \y_k) \in \Y^k$ where each label $\y_i$ is drawn independently as $\y_i \sim \pbayes(x)$. 
    Given a mixture $\mix \in \DDY$, we denote its $k\th$-order projection to be the $k$-snapshot distribution $\proj_k (\mix) \in \Delta \Ysnap$ defined as follows:
    \begin{itemize}[nosep]
        \item We draw $\Distr \sim \mix$ and a $k$-snapshot $\s = (\y_1, ..., \y_k)$ by sampling $k$ times i.i.d. from $\Distr$.
        \item We output $\unif(\y_1, \dots, \y_k)$. 
    \end{itemize}
    Here $\Ysnap$ denotes the space of such uniform distributions over snapshots (aka normalized histograms). Thus $\Ysnap \subseteq \DY$ is a coarsened version (with granularity $1/k$) of the space $\DY$.
\end{definition}

In the model of learning with $k$-snapshots, we receive pairs $(\x, \s)$ where $\x \sim D_\calX$ and $\s \in \Y^k$ is a $k$-snapshot for $\x$, and the goal is to learn a ``$k$-snapshot predictor'' that predicts what a snapshot drawn from a given $x \in \calX$ might look like, i.e. a distribution over possible $k$-snapshots. While this distribution could be represented directly over $\Y^k$, we can simplify our approach by leveraging a key property of $k$-snapshots: the $\y_i$s are each drawn iid. Consequently, the order of the tuple does not matter. 
Thus, instead of viewing a $k$-snapshot as a tuple $(y_1, ..., y_k) \in \Y^k$, we will view it as the distribution $\unif(y_1, ..., y_k) \in \Ysnap \subseteq \DY$.
The goal of $k$-snapshot prediction is to learn a predictor $\spred: \calX \rightarrow \Delta \Ysnap$ such that for each $x \in \calX$, $\spred(x)$ is close to $\proj_k (\pbayes(x))$, the true distribution of $k$-snapshots for $x$.

Viewing the space $\Ysnap$ as a subset of $\DY$, and hence the space $\Delta\Ysnap$ as a subset of $\DDY$, is an important conceptual simplification. Firstly, because $\spred(x) \in \Delta\Ysnap \subseteq \DDY$, a $k$-snapshot predictor can be directly viewed as a higher-order predictor with no extra effort. Secondly, it suggests a non-obvious error metric for $k$-snapshot prediction which turns out to be the right one: we simply use the Wasserstein distance on $\DDY$ as our distance measure between two distributions over snapshots. (This is a better measure than obvious choices like viewing $\Ysnap$ as a discrete space and using statistical distance. See \cref{sec:approx_reps} for some more discussion of alternative representations.) We define calibration for $k$-snapshot prediction analogously to the higher-order setting:

\begin{definition}[Approximate $k\th$-order calibration]
    \label{def:approx_koc}
    We say a $k$-snapshot predictor $\spred : \calX \to \Delta\Ysnap$ is $\epsilon$-$k\th$-order calibrated wrt $\pbayes$ and a partition $[\cdot]$ if for all $x \in \calX$, $W_1(\spred(x), \proj_k \pbayes([x])) \leq \epsilon$.
\end{definition}

\subsection{Key properties of $k\th$-order calibration}
We highlight three key advantages of $k\th$-order calibration as a practical relaxation of higher-order calibration:
\begin{enumerate}
    \item \textbf{Tractability}: $k\th$-order calibration can be tractably achieved as well as verified using $k$-snapshot data, without requiring knowledge of the true distribution $\pbayes$.
    \item  \textbf{Convergence}: As $k$ increases, the $k\th$-order calibration error approaches the higher-order calibration error at a quantifiable rate (Theorem \ref{thm:koc-approx-hoc}). 
    \item \textbf{Moment Recovery}: Even for small $k$, $k\th$-order calibrated predictions provide valuable information about the underlying mixture distribution, including accurate estimates of the first $k$ moments (Theorem \ref{lem:ko-moment-approx}). This property is later leveraged to derive estimates of aleatoric uncertainty in \cref{sec:decomps}.
\end{enumerate}

\paragraph{Tractability.}
Verifying higher-order calibration for a predictor requires knowledge of 
$\pbayes$, which is generally unknown. In contrast, $k\th$-order calibration is defined relative to the $k\th$-order projection of $\pbayes$ on each partition. This projection corresponds exactly to the distribution we observe when collecting $k$-snapshots as data. Thus, $k\th$-order calibration can be easily verified by comparing the empirical distribution of $k$-snapshot data against the predictor's output distribution. This straightforward comparison makes $k\th$-order calibration a practical metric for assessing predictor performance. Beyond verification, $k\th$-order calibration also comes with simple methods for achieving it, which we detail in \cref{sec:achieving_k}.

\paragraph{Convergence to higher-order calibration.}
As one might expect, $k\th$-order calibration for large $k$ is closely related to higher-order calibration. In particular, we can precisely characterize the rate at which $k\th$-order calibration approaches higher-order calibration as $k \to \infty$. \edit{The proof can be found in Appendix~\ref{sec:koc-approx-hoc-proof}}, and follows via a concentration argument bounding the Wasserstein distance between a higher-order distribution and its $k\th$-order projection: 

\begin{theorem}[$k\th$-order calibration implies higher-order calibration.\footnote{We present this theorem in a simplified manner to emphasize the connections between $k\th$- and higher-order calibration error. See Appendix~\ref{sec:sandwich-bounds} for additional guarantees that both lower- and upper- bound the higher-order calibration error within a partition in terms of the $k\th$-order error.}]\label{thm:koc-approx-hoc}
    If $\spred: \calX \rightarrow \Delta\Ysnap$ be $\epsilon$-$k\th$-order calibrated with respect to a partition $[\cdot]$, then it is also $(\epsilon + |\calY|/(2\sqrt{k}))$-higher-order calibrated with respect to the same partition $[\cdot]$. 
\end{theorem}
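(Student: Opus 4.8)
The plan is to combine the triangle inequality for the Wasserstein metric $W_1$ on $\DDY$ (whose ground metric is the $\ell_1$ distance on $\DY$) with a distribution-free bound on how far a $k$-snapshot histogram strays from the distribution it was sampled from. Fix $x \in \calX$ and set $\mix^* = \pbayes([x])$. Since $W_1$ is a metric (its ground cost $\|\cdot\|_1$ being one), and since $\proj_k\mix^*$ and $\mix^*$ both lie in $\DDY$ — because $\Ysnap \subseteq \DY$ — we may write
\[
W_1(\spred(x),\mix^*) \;\le\; W_1\bigl(\spred(x),\proj_k\mix^*\bigr) \;+\; W_1\bigl(\proj_k\mix^*,\mix^*\bigr) \;\le\; \epsilon \;+\; W_1\bigl(\proj_k\mix^*,\mix^*\bigr),
\]
where the bound on the first summand is exactly $\epsilon$-$k\th$-order calibration. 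So it remains to show $W_1(\proj_k\mix,\mix)\le|\calY|/(2\sqrt k)$ for every $\mix\in\DDY$.

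To bound this I would use the coupling that is baked into the definition of $\proj_k$: draw $\Distr\sim\mix$, then a $k$-snapshot $(\y_1,\dots,\y_k)$ i.i.d.\ from $\Distr$, and set $\hat\Distr=\unif(\y_1,\dots,\y_k)$. The pair $(\Distr,\hat\Distr)$ is a valid coupling of $\mix$ and $\proj_k\mix$, hence
\[
W_1(\proj_k\mix,\mix) \;\le\; \ex_{\Distr\sim\mix}\;\ex_{(\y_i)\mid\Distr}\bigl[\,\|\hat\Distr-\Distr\|_1\,\bigr].
\]
For a fixed $\Distr\in\DY$ and each label $y$, $\hat\Distr_y=\frac1k\sum_{i=1}^k\ind[\y_i=y]$ is an average of $k$ i.i.d.\ Bernoulli$(\Distr_y)$ variables, so $\ex\hat\Distr_y=\Distr_y$ and, by Jensen/Cauchy--Schwarz, $\ex|\hat\Distr_y-\Distr_y|\le\sqrt{\var(\hat\Distr_y)}=\sqrt{\Distr_y(1-\Distr_y)/k}\le 1/(2\sqrt k)$ using $t(1-t)\le\tfrac14$. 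Summing over the $|\calY|$ labels gives $\ex\|\hat\Distr-\Distr\|_1\le|\calY|/(2\sqrt k)$, uniformly in $\Distr$ and hence in $\mix$; plugging this into the display above and then into the triangle inequality finishes the proof. (One could instead sum $\sum_y\sqrt{\Distr_y(1-\Distr_y)}\le\sqrt{|\calY|}$ via Cauchy--Schwarz to get the slightly sharper rate $\sqrt{|\calY|/k}$, but the stated constant is all we need.)

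I do not expect a real obstacle here: the argument is a triangle inequality plus a first-moment concentration estimate. The only things that need minor care are verifying that the sampling process genuinely defines a coupling of $\mix$ and $\proj_k\mix$, and that the comparison is legitimate because the snapshot space $\Ysnap$ sits inside $\DY$, so the same ground metric applies on both sides. The stronger two-sided ``sandwich'' bounds promised in the footnote would, by contrast, require a matching lower bound — e.g.\ a contraction/data-processing property showing $\proj_k$ cannot distort $W_1$ too much in the other direction — but that is beyond the scope of this particular statement.
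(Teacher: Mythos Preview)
Your proposal is correct and follows essentially the same route as the paper: the paper first proves the lemma $W_1(\mix,\proj_k\mix)\le|\calY|/(2\sqrt k)$ using exactly your natural coupling and the same Jensen-plus-binomial-variance bound, then finishes via the triangle inequality just as you do. Your aside about the sharper $\sqrt{|\calY|/k}$ rate is a nice bonus that the paper does not mention.
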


This shows the convergence of \emph{approximate} $k\th$-order calibration to \emph{approximate} higher-order calibration in a way that preserves the approximation error. 

\paragraph{Moment recovery via $k\th$-order calibration.}
At small values of $k$, $k\th$-order calibration may not guarantee much about the higher-order calibration of a predictor, but can still guarantee valuable information about the mixture distribution over each partition. In particular, a $k\th$-order calibrated predictor gives good estimates of the first $k$ moments of $\pbayes([x])$. This should be contrasted with higher-order calibration, which tells us the full distribution of $\pbayes([x])$. In settings where we only care about the first $k$ moments of $\pbayes([x])$, this tells us that $k\th$-order calibration can serve as a good substitute for higher-order calibration. For simplicity, we restrict to the binary case, where $\DDY$ can be identified with $\Delta [0, 1].$

\begin{theorem}[Moment estimates from $k^{th}$-order calibration]\label{lem:ko-moment-approx}
    Let $\Y = \{0, 1\}$, and let $\spred: \calX \rightarrow \Delta\Ysnap$ be $\epsilon$-$k\th$-order calibrated with respect to a partition $[\cdot]$. Then for each $x \in \calX$, we can use $\spred(x)$ to generate a vector of moment estimates $(m_1, \dots, m_k) \in \R^n$ such that for each $i \in [k]$,
    \[\left|m_i - \ex_{\x \sim [x]}[\pbayes(\x)^i]\right|\leq i\epsilon/2.\] 
\end{theorem}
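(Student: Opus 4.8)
The plan is to exhibit an explicit unbiased estimator of each power $p^i$ computable from a single symmetrized $k$-snapshot, show this estimator is Lipschitz with a small constant, and then transfer the $k\th$-order calibration guarantee through Kantorovich--Rubinstein duality.

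\textbf{Step 1: the estimator.} In the binary case a symmetrized $k$-snapshot is determined by the number of ones it contains, so $\Ysnap = \{0, 1/k, \dots, 1\}$ and we write a snapshot as $s/k$ with $s \in \{0,1,\dots,k\}$. Define $\phi_i : \Ysnap \to \R$ by $\phi_i(s/k) = \binom{s}{i}\big/\binom{k}{i}$, with the convention $\binom{s}{i}=0$ for $s<i$. If $\s \sim \Bin(k,p)$, the factorial-moment identity $\ex[\s(\s-1)\cdots(\s-i+1)] = k(k-1)\cdots(k-i+1)\,p^i$ gives $\ex[\binom{\s}{i}] = \binom{k}{i} p^i$, hence $\ex[\phi_i(\s/k)] = p^i$. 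Therefore, for any mixture $\mix \in \DDY$, drawing $\p \sim \mix$ and then $\s/k \sim \proj_k(\mix)$ yields $\ex_{q \sim \proj_k(\mix)}[\phi_i(q)] = \ex_{\p \sim \mix}[\p^i]$. Applying this to $\mix = \pbayes([x])$ and recalling that the $i\th$ moment of the Bayes mixture is $\ex_{\x\sim[x]}[\pbayes(\x)^i]$, we get $\ex_{q \sim \proj_k(\pbayes([x]))}[\phi_i(q)] = \ex_{\x\sim[x]}[\pbayes(\x)^i]$. We then simply set $m_i := \ex_{q \sim \spred(x)}[\phi_i(q)]$, which depends on $\spred(x)$ alone.

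\textbf{Step 2: Lipschitz bound.} By Pascal's rule $\binom{s+1}{i} - \binom{s}{i} = \binom{s}{i-1}$, so the consecutive increments of $\phi_i$ are $\phi_i((s+1)/k) - \phi_i(s/k) = \binom{s}{i-1}\big/\binom{k}{i} \ge 0$, maximized over $s \le k-1$ at $s = k-1$, where it equals $\binom{k-1}{i-1}\big/\binom{k}{i} = i/k$. Since $\Ysnap$ is an arithmetic progression with spacing $1/k$, summing increments along consecutive points shows $\phi_i$ is $i$-Lipschitz on $\Ysnap$ with respect to $|\cdot|$. Converting to the $\ell_1$ metric on $\DY$ used in Definition~\ref{def:approx_koc} — which in the binary case satisfies $\|p-p'\|_1 = 2|p-p'|$ — we conclude $\phi_i$ is $(i/2)$-Lipschitz on $\Ysnap$ with respect to $\|\cdot\|_1$.

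\textbf{Step 3: conclude via Wasserstein duality.} Both $\spred(x)$ and $\proj_k(\pbayes([x]))$ are supported on $\Ysnap$, so every coupling is supported on $\Ysnap \times \Ysnap$, and it suffices that $\phi_i$ be Lipschitz on $\Ysnap$. Taking an optimal $W_1$-coupling $\mu$ and using the bound from Step 2,
\begin{align*}
\left| m_i - \ex_{\x \sim [x]}[\pbayes(\x)^i] \right|
&= \left| \ex_{q \sim \spred(x)}[\phi_i(q)] - \ex_{q' \sim \proj_k \pbayes([x])}[\phi_i(q')] \right| \\
&\le \tfrac{i}{2}\, \ex_{(q,q') \sim \mu}\big[\|q - q'\|_1\big] = \tfrac{i}{2}\, W_1\big(\spred(x), \proj_k \pbayes([x])\big) \le \tfrac{i\epsilon}{2},
\end{align*}
which is exactly the claimed bound.

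\textbf{Main obstacle.} The real content is in Steps 1--2: identifying $\binom{s}{i}\big/\binom{k}{i}$ as the correct unbiased (U-statistic–type) estimator of $p^i$ from a $k$-snapshot, and using Pascal's rule to pin down its Lipschitz constant as exactly $i$. Once these are in place, Step 3 is a routine invocation of Kantorovich--Rubinstein. The one bookkeeping point to handle carefully is the factor-of-two conversion between the $\ell_1$ distance on $\DY$ and the Euclidean metric on $[0,1]$, which is precisely what yields $i\epsilon/2$ rather than $i\epsilon$ in the statement.
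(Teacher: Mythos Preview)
Your proof is correct and follows essentially the same approach as the paper: the estimator $\phi_i(s/k)=\binom{s}{i}/\binom{k}{i}$ is exactly the paper's $M_{k,i}$, the Lipschitz bound via Pascal's rule and the factor-of-two $\ell_1$ conversion match the paper's Lemma on smoothness, and the final coupling step is identical. The only cosmetic difference is that you prove unbiasedness via the factorial-moment identity for the binomial, whereas the paper phrases it as the probability that a random permutation of the snapshot begins with $i$ ones; both are standard and yield the same conclusion.
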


This is proved in Appendix~\ref{sec:ko-moment-approx-proof} by showing that the first $k$ moments of a mixture can be exactly recovered from the mixture's $k\th$-order projection. This allows us to get a calibrated vector of predictions for the first $k$ moments from a $k\th$-order calibrated predictor. It implies that we can make calibrated predictions for the expectation of any low-degree polynomial with bounded coefficients evaluated on $\pbayes(x)$. This property will be useful for estimating aleatoric uncertainty using $k^{th}$-order calibration (Theorems~\ref{thm:ud-from-koc}, \ref{thm:koc-degk-aleatoric-approx}) and giving prediction sets with coverage guarantees (\cref{sec:pred_intervals}). 

\ifbool{arxiv}{Theorem~\ref{lem:ko-moment-approx} demonstrates that $k\th$-order calibration serves as a practical relaxation of higher-order calibration, providing useful guarantees even when a predictor falls short of full higher-order calibration. Building on this insight, we also establish additional bounds relating the Wasserstein distance between $k\th$-order projections of higher-order predictors to the true Wasserstein distance between their predicted distributions. See \cref{lem:koc-err-bd} in \cref{sec:sandwich-bounds} for the statement and proof. 

    

}{}

\subsection{Achieving $k\th$-order calibration}
\label{sec:achieving_k}

We present two simple methods for $k\th$-order calibration based on common approaches for first-order calibration: minimizing a proper loss, or post-processing the level sets of a learned predictor. 
    
\paragraph{Learning directly from snapshots.}
Our definition of $k\th$-order calibration is immediately accompanied by a natural method for achieving it: view the learning problem as a multiclass classification problem over the extended label space $\Ysnap$ of $k$-snapshots, and use any off-the-shelf procedure that tries to achieve ordinary first-order calibration over this space when we draw true $k$-snapshots from nature. Perfect first-order calibration over $\Ysnap$ means exactly that $g(x) = \proj_k \pbayes([x])$ for all $x$, which is equivalent to $k\th$-order calibration.


A simple concrete method is to minimize a proper multiclass classification loss over the label space $\Ysnap$ (such as cross entropy) over an expressive function class. This is the method followed in \cite{johnson2024experts} for achieving second-order calibration. 



\paragraph{Post-hoc calibration using snapshots.}
\label{calibration_algorithm}
Here we describe a simple alternative way of achieving $k\th$-order calibration using a pure post-processing routine which allows us to turn any ordinary first-order predictor into a higher-order calibrated predictor, thereby ``eliciting'' its epistemic uncertainty. This method requires only a calibration set of $k$-snapshots rather than a full training set. 

Let us briefly recall the standard post-processing procedure for achieving first-order calibration.\footnote{Here we mean distribution calibration with respect to distributions in $\DY$ for any discrete space $\calY$, but the reader can consider the binary case for simplicity. There, distribution learning reduces to mean estimation.} 
For a partition $[\cdot]$, first-order calibration requires that for every $x$, $\pred(x) = \ex_{\x \sim [x]}[\pbayes(\x)]$. If this is not the case, then we ``patch'' the prediction at $x$ to be the centroid of the Bayes mixture within its partition, $\ex_{\x \sim [x]}[\pbayes(\x)]$. That is, we have reduced the problem to that of \emph{learning the mean label distribution} over a given set $[x]$. We implement this by drawing a sufficiently large sample from each set $[x]$ in the partition and use the empirical distribution as an estimate of the true mean. 

Our procedure for post-hoc $k\th$-order calibration is a very natural extension of this algorithm. Let a partition $[\cdot]$ be given (perhaps arising from an initial first-order predictor). For each equivalence class $[x]$ in the partition, instead of simply trying to estimate the centroid of the Bayes mixture, we will now try to learn the entire Bayes mixture (or technically its $k\th$-order projection $\proj_k \pbayes([x])$). That is, we reduce to \emph{mixture learning} instead of mere distribution learning. Mixture learning has been previously explored in the literature; e.g. \cite{li2015learning} give algorithms for learning mixtures given access to snapshots. While their methods could be applied for post-processing for higher-order calibration, their focus on learning complete mixtures rather than their $k\th$-order projections (as is sufficient for $k\th$-order calibration) leads to different guarantees and snapshot size requirements. 

We present a simple post-processing algorithm for achieving $k\th$-order calibration using a calibration set of $k$-snapshots from each $[x]$. 
Our algorithm is as follows: given $N$ $k$-snapshots --- viewed as distributions $\Distr_1, ..., \Distr_N \in \Ysnap \subseteq \DY$ --- drawn from $\proj_k \pbayes([x])$, output the empirical mixture, i.e. $\unif(\Distr_1, ..., \Distr_N)$. We can use standard concentration arguments to show that for sufficiently large $N$, this will give a good estimate of $\proj_k \pbayes([x])$ and hence a $k\th$-order calibrated predictor.

\begin{theorem}[Empirical estimate of $k\th$-order projection guarantee]\label{thm:sample-cxty}
    Consider any $x \in \calX$, and a sample $\Distr_1, ..., \Distr_N \in \Ysnap$ where each $\Distr_i$ is drawn i.i.d. from $\proj_k \pbayes([x])$. Given $\epsilon > 0$ and $0 \leq \delta \leq 1$, if $N \geq (2(|\Ysnap|\log(2) + \log(1/\delta)))/\epsilon^2$, then we can guarantee that with probability at least $1-\delta$ over the randomness of the sample we will have 
    \[W_1(\proj_k \pbayes([x]), \unif(\Distr_1, ..., \Distr_N)) \leq \epsilon.\]
\end{theorem}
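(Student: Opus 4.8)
The plan is to recognize the claim as a textbook ``empirical distribution in total variation'' concentration bound, after first passing from the Wasserstein distance to ordinary total variation. Write $\nu := \proj_k \pbayes([x]) \in \Delta\Ysnap$ for the target and $\hat\nu := \unif(\Distr_1, \dots, \Distr_N)$ for its empirical estimate; both are honest probability distributions supported on the \emph{finite} set $\Ysnap$, and the hypothesis is exactly that $\Distr_1, \dots, \Distr_N$ are i.i.d.\ draws from $\nu$. The first step is the reduction $W_1(\nu, \hat\nu) \le 2\, d_{\mathrm{TV}}(\nu, \hat\nu)$, where $d_{\mathrm{TV}}$ is the total variation distance between $\nu$ and $\hat\nu$ as distributions over $\Ysnap$. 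Indeed, every element of $\Ysnap \subseteq \DY$ is a probability vector, so any two of them are at $\ell_1$-distance at most $2$; applying the maximal coupling of $\nu$ and $\hat\nu$ — which places mass $1 - d_{\mathrm{TV}}(\nu, \hat\nu)$ on the diagonal (contributing $0$ to the transport cost) and couples the remaining mass arbitrarily (contributing at most $2$ per unit of mass) — gives the claimed inequality. Hence it suffices to show $d_{\mathrm{TV}}(\nu, \hat\nu) \le \epsilon/2$ with probability at least $1 - \delta$.

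For the second step I would write $d_{\mathrm{TV}}(\nu, \hat\nu) = \sup_{S \subseteq \Ysnap}\bigl(\nu(S) - \hat\nu(S)\bigr)$, fix a subset $S$, and note that $\hat\nu(S) = \frac1N \sum_{i=1}^N \Ind[\Distr_i \in S]$ is an average of i.i.d.\ $\{0,1\}$-valued random variables with mean $\nu(S)$. Hoeffding's inequality then yields $\pr\bigl[\nu(S) - \hat\nu(S) \ge \epsilon/2\bigr] \le \exp(-N\epsilon^2/2)$. Since the subset achieving the supremum is itself data-dependent, I take a union bound over all $2^{|\Ysnap|}$ subsets of $\Ysnap$, so that $\pr\bigl[d_{\mathrm{TV}}(\nu, \hat\nu) \ge \epsilon/2\bigr] \le 2^{|\Ysnap|}\exp(-N\epsilon^2/2)$. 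This is at most $\delta$ precisely when $N\epsilon^2/2 \ge |\Ysnap|\log 2 + \log(1/\delta)$, i.e.\ when $N \ge 2\bigl(|\Ysnap|\log 2 + \log(1/\delta)\bigr)/\epsilon^2$, which is the hypothesis. Combining with the first step gives $W_1(\proj_k \pbayes([x]), \unif(\Distr_1, \dots, \Distr_N)) \le \epsilon$ with probability at least $1 - \delta$.

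There is no substantive obstacle here; the only points requiring care are bookkeeping. First, the factor of $2$ coming from the $\ell_1$-diameter of the simplex is why the stated bound has $\epsilon^2$ (rather than $(\epsilon/2)^2$) in the denominator. Second, one should use the $2^{|\Ysnap|}$-subset union bound (equivalently, Hoeffding applied to the supremum over subsets) rather than a coordinatewise bound, so that $|\Ysnap|$ enters the sample complexity linearly in the exponent — matching the $\sqrt{|\Ysnap|}$-type rate one expects for total variation estimation over a finite alphabet — instead of quadratically. Finally, the whole argument relies on $\Ysnap$ being finite; recalling that $\Ysnap$ is the set of normalized histograms of $k$ i.i.d.\ labels over $|\calY|$ categories, of size $\binom{k + |\calY| - 1}{|\calY| - 1}$, is what legitimizes the union bound.
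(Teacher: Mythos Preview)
Your proposal is correct and follows essentially the same approach as the paper's own proof: reduce $W_1$ to $2\,d_{\mathrm{TV}}$ via the diameter-$2$ bound on $\DY$, express total variation as a supremum over subsets of the finite set $\Ysnap$, apply Hoeffding to each subset, and union-bound over the $2^{|\Ysnap|}$ subsets. The only cosmetic difference is that the paper cites the $W_1 \le \mathrm{diam}(\DY)\cdot d_{\mathrm{TV}}$ inequality as a lemma (from Gibbs--Su), whereas you spell out the maximal-coupling argument directly.
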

The proof of this theorem can be found in Appendix~\ref{sec:sample-cxty-proof}. We note that $|\Ysnap|$ is exactly the number of multisets of size $k$ that can be chosen from $\ell$ items, with repeats. Thus, $|\Ysnap| = {k + \ell - 1 \choose \ell - 1} \leq \ell^k$ and so replacing $|\Ysnap|$ with $\ell^k$ also gives the desired guarantee. In the special case of binary prediction, $|\Ysnap|$ simplifies to just $k + 1$, resulting in a bound that is linear in the size of the snapshot.

\section{Uncertainty decompositions}\label{sec:decomps}
Having defined higher-order and $k\th$-order calibration, we show how these notions can be leveraged to provide meaningful semantics for decomposing predictive uncertainty.

Let any concave generalized entropy function $G : \DY \to \R$ be given (e.g., the Shannon entropy; see \cref{sec:proper_losses} for more details). Fix a point $x \in \calX$, and let the predicted mixture at $x$ be $\pred(x) \in \DDY$. Consider a random distribution $\Distr \sim \pred(x)$ drawn from this mixture, viewed as a random vector on the simplex in $\R^\ell$. The induced marginal distribution of $\y$ when we draw $\Distr \sim \pred(x)$ at random and then draw $\y \sim \Distr$ is given by $\meandistr = \ex[\Distr]$, the centroid of $\pred(x)$.\footnote{In the context of Bayesian models, $\pred(x)$ is often called the posterior predictive distribution, and $\meandistr$ is often called the Bayesian model average (BMA).} Perhaps the most commonly studied decomposition is the ``mutual information decomposition'' \citep{houlsby2011bayesian,gal2016uncertainty} mentioned in the introduction, which we restate here. Our formulation in terms of generalized entropy follows that of \cite{hofman2024quantifying,kotelevskii2024predictive}.\footnote{In general there are multiple approaches to uncertainty decomposition given a predicted mixture. Here we focus on the mutual information decomposition. In Appendix~\ref{sec:alt-decomps} we discuss other natural mixture-based decompositions and show how they can also be endowed with meaningful semantics via higher-order calibration.} \begin{equation}
    \underbrace{G(\meandistr)}_{\text{Predictive uncertainty }} = \underbrace{\ex[G(\Distr)]}_{\text{Aleatoric uncertainty estimate}} + \qquad \underbrace{G(\meandistr) - \ex[G(\Distr)]}_{\text{Epistemic uncertainty }}
    \label{eq:mi-decomp}
\end{equation} 

The predictive uncertainty is the total entropy in a random outcome $\y$ drawn marginally from $\mix$. The aleatoric uncertainty estimate is the conditional entropy in $\y$ given the mixture component $\Distr$. The epistemic uncertainty is precisely the mutual information between $\y$ and $\Distr$; it is non-negative by Jensen's inequality. It can be equivalently written as a variance-like quantity, $\ex_{\Distr \sim \mix}[D\divx{\Distr}{\meandistr}]$, where $D$ is the divergence associated with $G$ (e.g., the KL divergence for the Shannon entropy).
Accordingly we define the following estimates for uncertainty.

\begin{definition}\label{def:uncertainty-defs}
Let $G: \DY \to \R$ be a concave generalized entropy function. For a higher-order predictor $\pred: \calX \to \DDY$, we define the predictive, aleatoric, and epistemic uncertainties of $\pred$ at $x$ with respect to $G$ as 
\begin{align*}
    \PU_G(\pred: x) &= G\big(\ex_{\Distr \sim \pred(x)} [\Distr]\big)\\
    \AU_G(\pred: x) &= \ex_{\Distr \sim \pred(x)}[G(\Distr)]\\
    \EU_G(\pred:x) &= \PU_G(\pred:x) - \AU_G(\pred: x).
\end{align*}
\end{definition}

\eat{
    We note that given a predicted mixture at a particular instance, there are multiple ways to define the total predictive uncertainty and to decompose it into epistemic and aleatoric uncertainty \cite{houlsby2011bayesian,malinin2020uncertainty,wimmer2023quantifying,schweighofer2023introducing,sale2023second-dist,sale2023second-var,hofman2024quantifying}. Each of these approaches makes different tradeoffs (see e.g.\ \cite{wimmer2023quantifying} for an explicit axiomatic formulation of desiderata for such decompositions). In this paper we do not take a strong stance on the right decomposition, and show how multiple natural decompositions (all in terms of the unifying language of generalized entropy and divergence) can be stated and approximated through the lens of higher-order calibration in Appendix~\ref{sec:additional-decomps} \CP{It seems like maybe we will be taking out the additional discussion?}. 
} 

These quantities are meaningful mainly for higher-order predictors. Nature itself only associates a pure distribution $\pbayes(x)$ with every $x$, and has only true aleatoric uncertainty: $\AU_G^*(x) = G(\pbayes(x))$.


The mutual information decomposition (\cref{eq:mi-decomp}) is purely a function of our prediction and independent of nature. Under what conditions does it tell us something meaningful about true aleatoric uncertainty? We show that  higher-order calibration is a sufficient condition: it guarantees that our estimate of aleatoric uncertainty at a point $x$ equals the average \emph{true} aleatoric uncertainty over $[x]$.

\begin{lemma}\label{lem:perfect-HOC-implies-AU}
    If $\pred: \calX \rightarrow \DDY$ is perfectly higher-order calibrated, then for all $x \in \calX$,
    \[ \AU_G(\pred: x) = \ex_{\x \sim [x]}[\AU_G^*(\x)]. \]
\end{lemma}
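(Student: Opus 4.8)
The plan is to unwind the definitions and use the fact that higher-order calibration makes the predicted mixture $\pred(x)$ literally equal to the Bayes mixture $\pbayes([x])$ on each level set. Since the statement is about \emph{perfect} higher-order calibration (not the approximate version), there is no $\epsilon$ to track and the argument reduces to a clean chain of equalities.

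Concretely, fix $x \in \calX$. First I would invoke \cref{def:hop-hoc} (perfect higher-order calibration) to get $\pred(x) = \pbayes([x])$ as elements of $\DDY$. Next, by \cref{def:uncertainty-defs}, $\AU_G(\pred : x) = \ex_{\Distr \sim \pred(x)}[G(\Distr)]$, and substituting $\pred(x) = \pbayes([x])$ gives $\AU_G(\pred : x) = \ex_{\Distr^* \sim \pbayes([x])}[G(\Distr^*)]$. The final step is to recognize that the Bayes mixture $\pbayes([x])$ is, by its very definition, the law of the random distribution $\pbayes(\x)$ when $\x$ is drawn from $D_\calX$ restricted to $[x]$. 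Hence drawing $\Distr^* \sim \pbayes([x])$ and evaluating $G(\Distr^*)$ is the same as drawing $\x \sim [x]$ and evaluating $G(\pbayes(\x)) = \AU_G^*(\x)$, so $\ex_{\Distr^* \sim \pbayes([x])}[G(\Distr^*)] = \ex_{\x \sim [x]}[G(\pbayes(\x))] = \ex_{\x \sim [x]}[\AU_G^*(\x)]$. Chaining these equalities yields the claim.

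There is essentially no hard step here — the lemma is a direct consequence of the definition of higher-order calibration, and its role is to set up the more substantial \cref{thm:main-intro}. The only point requiring a word of care is the change-of-variables in the last step: one should note that $G$ is a fixed (deterministic, measurable) function on $\DY$, so pushing the expectation through the identification of $\pbayes([x])$ with the pushforward of $D_\calX|_{[x]}$ under $\pbayes$ is just the law-of-the-unconscious-statistician identity. No concavity of $G$, no Jensen, and no properties of snapshots are needed for this particular lemma.
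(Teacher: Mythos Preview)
Your proposal is correct and takes essentially the same approach as the paper: both invoke $\pred(x) = \pbayes([x])$ from the definition of perfect higher-order calibration and then chain through $\AU_G(\pred:x) = \ex_{\Distr \sim \pred(x)}[G(\Distr)] = \ex_{\Distr^* \sim \pbayes([x])}[G(\Distr^*)] = \ex_{\x \sim [x]}[G(\pbayes(\x))] = \ex_{\x \sim [x]}[\AU_G^*(\x)]$. Your additional remark about the change-of-variables step being an instance of the law of the unconscious statistician is a nice clarification but not something the paper spells out.
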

\begin{proof}
Since $\pred(x) = \pbayes([x])$ as a mixture,
    \begin{align*} \AU_G(\pred: x) = \ex_{\Distr \sim \pred(x)}[G(\Distr)] 
 = \ex_{\Distr_* \sim \pbayes([x])}[G(\Distr_*)] 
 = \ex_{\x \sim [x]}[G(\pbayes(\x))]  = \ex_{\x \sim [x]}[\AU^*_G(\x)]. \end{align*}
\end{proof}

This proves the first part of \cref{thm:main-intro}, and a similar argument establishes the second part (see Lemma~\ref{lem:EU_PU_semantics}). \ifbool{arxiv}{}{At first glance, higher-order calibration may seem like a very strong condition that happens to provide calibrated uncertainty estimates as a byproduct. In fact, Theorem \ref{thm:conv} in Appendix~\ref{sec:hoc-au-eq} will show that higher-order calibration is a \emph{necessary} condition for a predictor to produce calibrated estimates of aleatoric uncertainty with respect to all concave entropy functions.

}In this way we see that higher-order calibration locates the source of a predictor's epistemic uncertainty in bucketing multiple potentially different points into a single equivalence class $[x]$. Our estimate of aleatoric uncertainty captures the average true aleatoric uncertainty in $\pbayes(\x)$ as $\x \sim [x]$, and our epistemic uncertainty \edit{can be thought of as} the true ``variance''\footnote{\edit{In particular, it is the divergence of $\pbayes(\x)$ for $\x \sim [x]$ from the centroid $\ex_{\x \sim [x]}[\pbayes(\x)]$, as formalized in Theorem~{\ref{thm:main-intro}} and more generally in Lemma~{\ref{lem:EU_PU_semantics}}.}} in $\pbayes(\x)$ as $\x \sim [x]$. \ifbool{arxiv}{Averaging over $[x]$ in a sense inevitable since the predictor does not distinguish points in $[x]$.}{}

\eat{
In practice, asking for a higher-order  predictor to be perfectly calibrated may be unrealistic. More tractably, with only a limited number of $k$-snapshot samples, we  would like to approximate the true aleatoric uncertainty within each partition. We now show that under some mild smoothness assumptions on the entropy function $G$, accurate estimates are indeed achievable with only approximate $k\th$-order calibration. The assumption we require of $G$ is a standard relaxation of Lipschitzness known as uniform continuity (Definition~\ref{def:uniform-cont-2}), which requires that $|G(p) - G(p')| \leq \omega_G(\|p - p'\|_1)$ for some function $\omega_G : \R_{\geq 0} \to \R_{\geq 0}$ referred to as $G$'s modulus of continuity. 





We show that approximate $k\th$-order calibration is sufficient for approximations of aleatoric uncertainty with respect to uniformly continuous concave entropy functions. The proof can be found in Appendix~\ref{sec:koc-approx-hoc-proof}.

\begin{theorem}\label{thm:koc-aleatoric-approx}
    Consider a $k\th$-order predictor $\spred: \calX \rightarrow \Delta \Ysnap$ that satisfies $\epsilon$-$k\th$-order calibration with respect to a partition $[\cdot]$. Let $G$ be any concave entropy function that satisfies uniform continuity with modulus of continuity $\omega_G: \R_{\geq 0} \rightarrow \R_{\geq 0}$. Then, $\spred$'s estimate of aleatoric uncertainty with respect to $G$ satisfies the following guarantee for all $x \in \calX$:
    \[\Big|\AU_G(\spred: x) - \ex_{\x \sim [x]}[\AU_G^*(\x)]\Big|\leq \omega_G \Big(\epsilon + \frac{|\calY|}{2\sqrt{k}} \Big).\]  
\end{theorem}

We also provide an alternative approximation guarantee using polynomial approximation in Appendix~\ref{sec:poly_approx}, which guarantees that entropy functions well-approximated by low-degree polynomials only require $k\th$-order calibration for small values of $k$ to achieve calibrated estimates. We focus on binary Brier entropy $G_\brier(p) = 4p(1-p)$ and Shannon entropy $G_{\shn}(p) = -p\log p - (1 - p)\log(1-p)$. The proofs are in Appendix~\ref{sec:poly_approx}. 

\begin{theorem}[Estimating Brier Entropy]
    Let $\epsilon > 0$ and $g : \calX \to \Delta \Y^{(2)}$ be $(\epsilon/8)$-second-order calibrated. Using $g$ we can obtain an estimate $\hat{\AU}_\brier$ so that 
    \[\left|\hat{\AU}_\brier - \ex_{\x \sim [x]}[\AU^*_{G_{\brier}}(\pbayes(\x))]\right| \leq \epsilon.\]  
\end{theorem}
By Theorem \ref{thm:sample-cxty}, we require only $N \geq O(\log(1/\delta)/\epsilon^2)$ $2$-snapshot examples from $[x]$ for this guarantee to hold with probability at least $1-\delta$.

\begin{corollary}[Corollary to Theorems~\ref{thm:koc-degk-aleatoric-approx} and~\ref{thm:sample-cxty}, Estimating Shannon Entropy]
    Let $\epsilon > 0$. Let $g : \calX \to \Delta \Ysnap$ be an $\epsilon'$-$k\th$-order calibrated predictor where $k \geq \Theta(\left(\frac{1}{\epsilon}\right)^{\ln 4}$), and $\epsilon' \leq \frac{\epsilon}{\exp(\Theta(k))}$. Let  denote the binary Shannon entropy. Then we can use $g$ to obtain an estimate $\hat{\AU}_\shn$ such that with high probability, \[\left|\hat{\AU_{\shn}} - \ex_{\x \sim [x]}[\AU^*_{G_{\shn}}(\pbayes(\x))]\right| \leq \epsilon.\]
    In particular, we require only $N \geq O(\log(1/\delta)\exp(O((1/\epsilon)^{\ln 4})))$ $k$-snapshot examples from $[x]$ for this guarantee to hold with probability at least $1-\delta$.
\end{corollary}

\CP{Alternate extremely simplified version of the corollaries}

\begin{corollary}[Estimating Brier Entropy, Informal]
    We can estimate an $\epsilon$-approximation of the average Brier entropy $G_\brier(p) = 4p(1-p)$ within a partition $[x]$ with probability at least $1 - \delta$ using $N$ $2$-snapshots from $[x]$ for any $N \geq O(\log(1/\delta)/\epsilon^2)$.  
\end{corollary}

\begin{corollary}[Estimating Shannon Entropy, Informal]
    We can estimate an $\epsilon$-approximation of the average Shannon entropy $G_{\shn}(p) = -p\log p - (1 - p)\log(1-p)$ within a partition $[x]$ with probability at least $1 - \delta$ using $N$ $k$-snapshots for any $k \geq \Theta(\left(\frac{1}{\epsilon}\right)^{\ln 4})$ and $N \geq O(\log(1/\delta)\exp(O((1/\epsilon)^{\ln 4})))$.
\end{corollary}

\CP{Even more simplified version without sample complexities:}
\begin{corollary}[Estimating Brier Entropy, Informal]
    $\epsilon/8$-second-order calibration is sufficient for estimating the average Brier entropy $G_\brier(p) = 4p(1-p)$ within a partition to within error $\epsilon$. 
\end{corollary}

\begin{corollary}[Estimating Shannon Entropy, Informal]
    $\epsilon'$-$k\th$-order calibration is sufficient for $\epsilon$-approximate estimates of the average Shannon entropy $G_{\shn}(p) = -p\log p - (1 - p)\log(1-p)$ within a partition for any $\epsilon' \leq \frac{\epsilon}{\exp(\Theta(k))}$ and $k \geq \Theta(\left(\frac{1}{\epsilon}\right)^{\ln 4})$.
\end{corollary}

\AG{Yet another alternative; intended to replace everything starting from the current ``in practice'' paragraph}
}

\paragraph{Uncertainty decomposition from $k\th$-order calibration.}
While Lemma \ref{lem:perfect-HOC-implies-AU} draws an important connection between higher-order calibration and uncertainty estimation, it does not allow for calibration error. Even assuming a robust analogue can be established, achieving sufficiently small higher-order calibration error might require $k\th$-order calibration for large $k$ by Theorem \ref{thm:koc-approx-hoc}. What if we can only guarantee  $k\th$-order calibration for reasonably small $k$. Is this of any use towards the goal of rigorous uncertainty decomposition?

We show that for the two most commonly used entropy functions, the binary Brier and Shannon entropies, the answer is yes. In fact the former only requires $k =2$. We prove a general result which applies to any uniformly continuous entropy function. The key insight is that $k\th$-order calibration provides us with good estimates of the first $k$ moments of the Bayes mixture (\cref{lem:ko-moment-approx}). The Brier entropy is a degree $2$ polynomial, whereas we show that the Shannon entropy has good approximations by low-degree polynomials. A similar guarantee applies to all uniformly continuous functions via Jackson's theorem. Taken together, these results show that $k\th$-order calibration is a natural and tractable goal in itself for uncertainty decomposition. We state the following theorem informally; formal statements and proofs (with sample complexity) may be found in \cref{sec:ud-from-koc-proofs}.

\begin{theorem}[Estimating common entropy functions; informal]\label{thm:ud-from-koc}
    We can obtain estimates $\hat{\AU}_G$ satisfying \[\Big|\hat{\AU}_G - \ex_{\x \sim [x]}[\AU^*_{G}(\x)]\Big| \leq \epsilon\] using only $k\th$-order calibration for the following common entropy functions $G$: \begin{itemize}
        \item For Brier entropy $G_\brier(p) = 4p(1-p)$ using $(\epsilon/8)$-second-order calibration;
        \item For Shannon entropy $G_{\shn}(p) = -p\log p - (1 - p)\log(1-p)$ using $\epsilon'$-$k\th$-order calibration, where $\epsilon' \leq \epsilon/\exp(\Theta(k))$ and $k \geq \Theta((1/\epsilon)^{\ln 4})$;
        \item For any uniformly continuous $G$ with modulus of continuity $\omega_G$ (see Definition~\ref{def:uniform-cont-2}) using $\epsilon'$-$k\th$-order calibration where $\epsilon'$ satisfies $\omega_G (\epsilon' + |\calY|/(2\sqrt{k})) \leq \epsilon$.
    \end{itemize} 
\end{theorem}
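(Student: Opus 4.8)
The plan is to reduce all three cases to a single template: rewrite the target $\ex_{\x \sim [x]}[\AU^*_G(\x)] = \ex_{\x \sim [x]}[G(\pbayes(\x))]$ in terms of quantities that $k\th$-order calibration lets us estimate, construct $\hat{\AU}_G$ accordingly, and propagate the calibration error. For $G$ that is (close to) a low-degree polynomial we estimate $\ex_{\x\sim[x]}[\pbayes(\x)^i]$ using the moment-recovery guarantee \cref{lem:ko-moment-approx}; for arbitrary uniformly continuous $G$ we instead route through higher-order calibration via \cref{thm:koc-approx-hoc} and a robust version of \cref{lem:perfect-HOC-implies-AU}.

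\textbf{Brier entropy.} Since $G_\brier(p) = 4p - 4p^2$ is a degree-$2$ polynomial, $\ex_{\x\sim[x]}[\AU^*_{G_{\brier}}(\x)] = 4m_1^* - 4m_2^*$ with $m_i^* := \ex_{\x\sim[x]}[\pbayes(\x)^i]$. Given $(\epsilon/8)$-second-order calibration, \cref{lem:ko-moment-approx} with $k=2$ yields estimates $m_1,m_2$ with $|m_1 - m_1^*| \le \epsilon/16$ and $|m_2 - m_2^*| \le \epsilon/8$; setting $\hat{\AU}_\brier = 4m_1 - 4m_2$ gives, by the triangle inequality, $|\hat{\AU}_\brier - \ex_{\x\sim[x]}[\AU^*_{G_{\brier}}(\x)]| \le 4\cdot\tfrac{\epsilon}{16} + 4\cdot\tfrac{\epsilon}{8} = \tfrac{3\epsilon}{4} \le \epsilon$. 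This case is essentially immediate from moment recovery.

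\textbf{Shannon entropy.} Here the one extra ingredient is a polynomial-approximation lemma for $G_\shn$: a degree-$k$ polynomial $q(p) = \sum_{i=0}^{k} c_i p^i$ with $\|q - G_\shn\|_{\infty,[0,1]} \le \eta$ and coefficient sum $\sum_i |c_i| \le \exp(O(k))$, provided $k \ge \Theta((1/\eta)^{\ln 4})$. (Producing such a $q$ — controlling degree, uniform error, and coefficient magnitude simultaneously, while taming the logarithmic singularity of $p\log p$ at the endpoints — is the step I expect to be the main obstacle; the exponent $\ln 4$ should come out of the explicit construction.) Given this, take $\spred$ that is $\epsilon'$-$k\th$-order calibrated, get $m_1,\dots,m_k$ from \cref{lem:ko-moment-approx} with $|m_i - m_i^*| \le i\epsilon'/2 \le k\epsilon'/2$, and set $\hat{\AU}_\shn = c_0 + \sum_{i=1}^k c_i m_i$. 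Then, since $\ex_{\x}[q(\pbayes(\x))] = c_0 + \sum_i c_i m_i^*$,
\[
\Big|\hat{\AU}_\shn - \ex_{\x\sim[x]}[\AU^*_{G_{\shn}}(\x)]\Big|
\;\le\; \underbrace{\Big|\textstyle\sum_i c_i(m_i - m_i^*)\Big|}_{\le\, (\sum_i |c_i|)\,k\epsilon'/2}
\;+\; \underbrace{\Big|\ex_{\x}[q(\pbayes(\x))] - \ex_{\x}[G_\shn(\pbayes(\x))]\Big|}_{\le\, \eta},
\]
which is $\le \epsilon$ once $\eta \le \epsilon/2$ (hence $k \ge \Theta((1/\epsilon)^{\ln 4})$) and $(\sum_i|c_i|)k\epsilon'/2 \le \epsilon/2$, i.e. $\epsilon' \le \epsilon/(k\exp(O(k))) = \epsilon/\exp(\Theta(k))$, as claimed.

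\textbf{General uniformly continuous $G$.} Take $\hat{\AU}_G = \AU_G(\spred:x) = \ex_{\Distr\sim\spred(x)}[G(\Distr)]$. By \cref{thm:koc-approx-hoc}, $\epsilon'$-$k\th$-order calibration gives $W_1(\spred(x),\pbayes([x])) \le \delta := \epsilon' + |\calY|/(2\sqrt{k})$. Fix an optimal coupling $\mu$ of $\spred(x)$ and $\pbayes([x])$; using uniform continuity of $G$ and the fact that the modulus $\omega_G$ may be taken concave and nondecreasing (replace it by its least concave majorant),
\[
\big|\AU_G(\spred:x) - \ex_{\x\sim[x]}[\AU^*_G(\x)]\big|
\;\le\; \ex_{(\Distr,\Distr^*)\sim\mu}\big[\,|G(\Distr) - G(\Distr^*)|\,\big]
\;\le\; \ex_{(\Distr,\Distr^*)\sim\mu}\big[\omega_G(\|\Distr - \Distr^*\|_1)\big]
\;\le\; \omega_G(\delta),
\]
where the last inequality is Jensen applied to the concave $\omega_G$ together with $\ex_\mu[\|\Distr-\Distr^*\|_1] = W_1 \le \delta$. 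Choosing $\epsilon',k$ so that $\omega_G(\epsilon' + |\calY|/(2\sqrt{k})) \le \epsilon$ then finishes the case (and recovers \cref{lem:perfect-HOC-implies-AU} as $\delta \to 0$); note the Brier and Shannon bounds are sharper than this generic route precisely because moment recovery sidesteps the $|\calY|/\sqrt{k}$ Wasserstein penalty. Finally, the number of $k$-snapshots needed to realize $\epsilon'$-$k\th$-order calibration post hoc is given by \cref{thm:sample-cxty}, which converts each of the above into a sample-complexity statement.
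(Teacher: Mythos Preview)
Your approach matches the paper's in all three cases: the Brier and general uniformly-continuous arguments are essentially verbatim the paper's Corollary~\ref{cor:brier-entropy-est} and Theorem~\ref{thm:koc-aleatoric-approx}, and your Shannon outline is exactly Theorem~\ref{thm:koc-degk-aleatoric-approx} plus a polynomial-approximation step you flag as the main obstacle. To close that step the paper does not build an explicit approximant; it invokes Jackson's theorem (any uniformly continuous $G$ on $[0,1]$ admits a degree-$d$ polynomial approximation with uniform error $O(\omega_G(1/d))$ and coefficients bounded by $\exp(\Theta(d))$), and then computes the modulus of continuity of $G_\shn$ via Tops{\o}e's inequality $G_\shn(x) \le (4x(1-x))^{1/\ln 4} \le (4x)^{1/\ln 4}$, which is precisely where the $\ln 4$ exponent comes from. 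With those two named ingredients your Shannon argument goes through as written.
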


\ifbool{arxiv}{
\paragraph{Equivalence between higher-order calibration and calibrated uncertainty estimates}
At first sight, the ability to produce calibrated estimates of aleatoric uncertainty may appear to be merely one small consequence of higher-order calibration. But somewhat remarkably, it turns out that having accurate estimates of AU wrt all concave generalized entropy functions is \emph{equivalent} to higher-order calibration.

Such a statement requires a definition of what entails a ``concave generalized entropy function.'' We take the most general view possible, building on the work of \cite{gneiting2007strictly}, who show that every proper loss has an associated generalized concave entropy function, and in particular every concave function $G:\DY \rightarrow \R$ is associated with some proper loss (see Section~\ref{sec:proper_losses} for a more in-depth discussion of proper losses and their associated entropy functions). From this point of view, the class of generalized concave entropy functions is exactly the set of all concave functions. Under this definition, we get an equivalence between calibrated estimates of concave entropy functions and higher-order calibration.

\begin{theorem}\label{thm:conv}
    A higher-order predictor $\pred : \calX \to \DDY$ is perfectly higher-order calibrated to $\pbayes$ wrt a partition $[\cdot]$ if and only if $\AU_G(\pred : x) = \ex_{\x \sim [x]}[\AU^*_G(\x)]$ wrt all concave functions $G: \DY \rightarrow \R$.
\end{theorem}

The proof is provided in Appendix~\ref{sec:hoc-au-eq}, and leverages moment generating functions, showing that if a predictor achieves the same expected value as the Bayes predictor for a carefully chosen family of concave functions (specifically, negative exponentials), then their distributions must be identical. This relies on the fact that moment generating functions uniquely characterize probability distributions, and our chosen family of functions ensures we can recover the moment generating functions of both predictors.

We add some additional discussion of more stringent definitions of what qualifies as a generalized entropy function. Above, our only requirement was concavity, though we note that the nature of the proof suggests that this class could be further restricted to the set of continuous, concave, non-negative functions on $\DY$. 

Another potential requirement for entropy functions is symmetry---invariance to permutations of the class probabilities. We highlight that requiring symmetry would break the equivalence between calibrated entropy estimates and higher-order calibration. To illustrate this, consider a simple counterexample:

Assume $\pbayes(x)$ has no aleatoric uncertainty for each $x$, and thus assigns probability 1 to some class $i \in [\ell]$. Now, consider a higher-order predictor that, for each $x$, predicts a point mixture concentrated on a distribution that gives 100\% probability to some class $j \in [\ell]$. Under any symmetric entropy function, this predictor would appear to have perfect aleatoric uncertainty estimates. Both $\pbayes$ and the predictor assign 100\% probability to a single class for each $x$. However, this predictor is far from being higher-order calibrated, as it may consistently predict the wrong class for every $x \in \calX$. 
}{}

\section{Higher-order prediction sets}\label{sec:pred_intervals}
In this section we describe how to obtain prediction intervals, or more generally prediction sets, from higher-order calibration. These prediction sets have the property that they capture the true $\pbayes(\x)$ with a certain prescribed probability when $\x$ is drawn conditionally at random from an equivalence class. We stress that these are not prediction sets for \emph{outcomes} at a particular instance, but rather higher-order prediction sets for entire \emph{ground truth probability vectors} ranging over a group of instances; they are subsets of the simplex $\DY$ and not of $\calY$. In this way they provide an operational way of expressing and using our epistemic (rather than merely predictive) uncertainty.\footnote{It is worth noting that this type of coverage guarantee can only be directly evaluated when we have access to the true $\pbayes(\x)$ values, or at least approximately as $k$-snapshots. Of course, even when we do not, the guarantee still holds mathematically.}

\begin{definition}[Higher-order prediction set]
    A set $S \subseteq \DY$ is said to be a \emph{higher-order prediction set} for the Bayes mixture $\pbayes([x])$ with \emph{coverage} $1 - \alpha$ if the following holds: \[ \pr_{\x \sim [x]}[\pbayes(\x) \in S] = 1 - \alpha. \]
\end{definition}

The main idea is simple. If $\pred$ is perfectly higher-order calibrated for a certain partition $[\cdot]$, then for every $x$, $f(x)$ matches $\pbayes([x])$ exactly as a mixture. Thus any set $S$ that captures $1-\alpha$ mass under $\pred(x)$ also captures $1 - \alpha$ mass under $\pbayes([x])$: \begin{equation}
     \pr_{\x \sim [x]}[\pbayes(\x) \in S] = \pr_{\Distr^* \sim \pbayes([x])}[\Distr^* \in S] = \pr_{\Distr \sim \pred(x)}[\Distr \in S] = 1 - \alpha.
 \end{equation} In this way we can take our prediction $\pred(x)$ (which in principle we have a complete description of), take any set $S$ that captures $1 - \alpha$ of its mass (there are many natural ways of doing so), and use it directly as a $1 - \alpha$ prediction set for $\pbayes(\x)$ when $\x \sim [x]$.

This argument becomes slightly more technically involved when we have only approximate higher-order or $k\th$-order calibration. Essentially, if we only have Wasserstein closeness between $\pred(x)$ and $\pbayes([x])$, then we need to enlarge the set $S$ slightly in order to account for the fact that a typical draw from $\pred(x)$ is slightly far from a corresponding (coupled) draw from $\pbayes([x])$. For intuition, consider the binary labels setting, where $\pred(x)$ and $\pbayes([x])$ both reduce to distributions on $[0, 1]$. Suppose that the density of $\pred(x)$ is exactly that of $\pbayes([x])$ but just shifted by $\epsilon$ (this is a particularly simple case of $\epsilon$-Wasserstein closeness). In this case we simply need to consider an $\epsilon$-neighborhood of a $1- \alpha$ set under $\pred(x)$ to obtain $1 - \alpha$ coverage under $\pbayes([x])$.

We now formalize this idea. As in the rest of the paper, we specialize to the $\ell_1$-Wasserstein distance.

\begin{definition}[Neighborhood of a set]
    Let $S \subseteq \DY$ be a subset of the simplex, regarded as probability vectors in $\R^{\ell}$. Then for any $\delta > 0$, the $\delta$-neighborhood of $S$ is all vectors with $\ell_1$-distance at most $\delta$ from $S$: \[ S_\delta = \{ \distr \in \DY \mid \exists \distr' \in S : \|\distr - \distr'\|_1 \leq \delta \}. \]
\end{definition}

\begin{theorem}[Higher-order prediction sets from higher-order calibration]\label{thm:pred-sets}
    Let $\pred : \calX \to \DDY$ be $\epsilon$-higher-order calibrated wrt a partition $[\cdot]$. Fix any $x \in \calX$ and let $\mix = \pred(x), \mixbayes = \pbayes([x])$. Then for any set $S \subseteq \DY$ and any $\delta > 0$, we have \begin{equation}
        \pr_{\Distr \sim \mix}[\Distr \in S] - \frac{\epsilon}{\delta} \leq \pr_{\Distr^* \sim \mixbayes}[\Distr^* \in S_\delta] \leq \pr_{\Distr \sim \mix}[\Distr \in S_{2\delta}] + \frac{\epsilon}{\delta}.
    \end{equation} In particular, if $S$ contains $1 - \alpha$ of the mass under $f(x)$, then $S_\delta$ is a prediction set for $\pbayes([x])$ with coverage at least $1 - \alpha - \frac{\epsilon}{\delta}$.
\end{theorem}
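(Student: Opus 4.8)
The plan is to transfer mass from the mixture $\mixbayes$ to $\mix$ (and back) along an optimal $W_1$-coupling, using the crude but robust observation that under a coupling $\mu$ with $\ex_{(\Distr,\Distr^*)\sim\mu}[\|\Distr-\Distr^*\|_1] \le \epsilon$, Markov's inequality gives $\pr_{(\Distr,\Distr^*)\sim\mu}[\|\Distr-\Distr^*\|_1 > \delta] \le \epsilon/\delta$. Fix $x$, set $\mix = \pred(x)$, $\mixbayes = \pbayes([x])$, and let $\mu \in \Gamma(\mix,\mixbayes)$ be a coupling nearly achieving the infimum in the definition of $W_1$ (or, if the infimum is attained, an optimal one; otherwise take a coupling with cost $\le \epsilon$ by approximate higher-order calibration, and if one wants to be careful pass to a limit or just absorb a vanishing slack). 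Call a coupled pair $(\Distr,\Distr^*)$ \emph{good} if $\|\Distr - \Distr^*\|_1 \le \delta$; by the above, the bad pairs have $\mu$-probability at most $\epsilon/\delta$.

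For the \textbf{lower bound}: if $\Distr \in S$ and the pair is good, then $\Distr^* \in S_\delta$ by definition of the $\delta$-neighborhood. Hence
\[
\pr_{\Distr^*\sim\mixbayes}[\Distr^* \in S_\delta] \;\ge\; \pr_{(\Distr,\Distr^*)\sim\mu}[\Distr \in S \text{ and pair good}] \;\ge\; \pr_{\Distr\sim\mix}[\Distr\in S] - \pr_{(\Distr,\Distr^*)\sim\mu}[\text{pair bad}] \;\ge\; \pr_{\Distr\sim\mix}[\Distr\in S] - \frac{\epsilon}{\delta}.
\]
For the \textbf{upper bound}: if $\Distr^* \in S_\delta$ and the pair is good, then there is $\distr' \in S$ with $\|\Distr^* - \distr'\|_1 \le \delta$, and combined with $\|\Distr - \Distr^*\|_1 \le \delta$ we get $\|\Distr - \distr'\|_1 \le 2\delta$, i.e.\ $\Distr \in S_{2\delta}$. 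Therefore
\[
\pr_{\Distr^*\sim\mixbayes}[\Distr^* \in S_\delta] \;\le\; \pr_{(\Distr,\Distr^*)\sim\mu}[\Distr \in S_{2\delta}] + \pr_{(\Distr,\Distr^*)\sim\mu}[\text{pair bad}] \;\le\; \pr_{\Distr\sim\mix}[\Distr \in S_{2\delta}] + \frac{\epsilon}{\delta}.
\]
This is exactly the displayed two-sided inequality. The final ``in particular'' claim is then immediate: if $\pr_{\Distr\sim\mix}[\Distr\in S] \ge 1-\alpha$, the lower bound gives $\pr_{\Distr^*\sim\mixbayes}[\Distr^*\in S_\delta] \ge 1 - \alpha - \epsilon/\delta$, and since $\pbayes([x])$ is the law of $\pbayes(\x)$ for $\x\sim[x]$, this reads $\pr_{\x\sim[x]}[\pbayes(\x)\in S_\delta] \ge 1-\alpha-\epsilon/\delta$.

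The argument is essentially routine once the coupling is in place; the only mild subtlety — and the step I would be most careful about — is the handling of the infimum in the definition of $W_1$ when it is not attained, which is why the statement is phrased with the slack $\epsilon/\delta$ rather than something tighter. On the discrete/Polish simplex $\DY$ with the $\ell_1$ cost, optimal couplings do exist, so one can simply take $\mu$ optimal; alternatively, for any $\eta > 0$ pick $\mu$ with cost $\le \epsilon + \eta$, run the argument, and let $\eta \to 0$. A second point worth stating explicitly is that $S_\delta$ and $S_{2\delta}$ are measurable (they are open-thickenings, hence Borel, whenever $S$ is), so the probabilities above are well-defined; this needs no real work but should be noted. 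For the $k\th$-order calibration version one invokes Theorem~\ref{thm:koc-approx-hoc} to replace $\epsilon$ by $\epsilon + |\calY|/(2\sqrt k)$ and repeats verbatim.
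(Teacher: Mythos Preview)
Your proof is correct and follows essentially the same approach as the paper: take an optimal (or near-optimal) $W_1$-coupling, apply Markov's inequality to bound the probability that coupled pairs differ by more than $\delta$, and then use neighborhood inclusions to transfer mass in each direction. The paper's version differs only cosmetically (it introduces the shorthand $\Eta = \Distr - \Distr^*$ and splits events accordingly), and your additional remarks on existence of optimal couplings and measurability are sensible but not needed for the paper's level of rigor.
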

\begin{proof}
    Since $\pred$ is $\epsilon$-higher-order calibrated, we know that $W_1(\mix, \mixbayes) \leq \epsilon$. Let $\mu$ be the corresponding optimal coupling of $\Distr, \Distr^*$, with marginals being $\mix, \mixbayes$ respectively, and guaranteeing that \[ \ex_{(\Distr, \Distr^*) \sim \mu}[\| \Distr - \Distr^*\|_1] \leq \epsilon. \] Letting $\Eta = \Distr - \Distr^*$ under this coupling, we see that $\ex[\|\Eta\|_1] \leq \epsilon$. By Markov's inequality, we have $\pr[\|\Eta\|_1 > \delta] \leq \epsilon / \delta$. 

    We now prove the left-hand inequality. We have \begin{align*}
        \pr[\Distr \in S] &= \pr[\Distr^* + \Eta \in S] \\
        &= \pr[(\Distr^* + \Eta \in S) \land (\|\Eta\|_1 \leq \delta)] + \pr[(\Distr^* + \Eta \in S) \land (\|\Eta\|_1 > \delta)] \\
        &\leq \pr[\Distr^* \in S_\delta] + \frac{\epsilon}{\delta}.
    \end{align*} Here the second term is bounded by Markov's inequality, and the first term is bounded because the event $(\Distr^* + \Eta \in S) \land (\|\Eta\|_1 \leq \delta)$ is a subset of the event $\Distr^* \in S + \delta$. Rearranging gives the left-hand inequality.

    The right-hand inequality is very similar: \begin{align*}
        \pr[\Distr^* \in S_\delta] &= \pr[\Distr - \Eta \in S_\delta] \\
        &= \pr[(\Distr - \Eta \in S_\delta) \land (\|\Eta\|_1 \leq \delta)] + \pr[(\Distr - \Eta \in S_\delta) \land (\|\Eta\|_1 > \delta)] \\
        &\leq \pr[\Distr \in S_{2\delta}] + \frac{\epsilon}{\delta}.
    \end{align*}
\end{proof}

In practical situations, we expect that the $\epsilon$ parameter is the one that is fixed first. In this case a reasonable choice is to take $\alpha = \delta = \sqrt{\epsilon}$ in the theorem above and obtain prediction sets with coverage at least $1 - 2\sqrt{\epsilon}$.

Note that this theorem is also applicable if we only have $k\th$-order calibration, as we can leverage \cref{thm:koc-approx-hoc} to obtain approximate higher-order calibration from approximate $k\th$-order calibration.



\subsection{Moment-based prediction sets from $k\th$-order calibration}

We now describe a slightly different way of obtaining prediction sets directly using $k\th$-order calibration, instead of passing through higher-order calibration and using \cref{thm:pred-sets} as a black box. The idea is to use \cref{lem:ko-moment-approx} to approximate the moments of the Bayes mixture, and directly apply a moment-based concentration inequality to the true Bayes mixture. Throughout this subsection we specialize to the binary case.

We will need to slightly adapt \cref{lem:ko-moment-approx} to approximate the \emph{central} moments $\ex[(\Distr - \ex[\Distr])^k]$ of the Bayes mixture as opposed to the moments about $0$. In fact, for the purposes of generating prediction sets it will be more convenient to directly bound the moments around our \emph{estimate} $m_1$ of the mean.
\begin{corollary}\label{cor:central-moment-est}
    Let $\Y = \{0, 1\}$, and let $\spred: \calX \rightarrow \Delta\Ysnap$ be $\epsilon$-$k\th$-order calibrated with respect to a partition $[\cdot]$. Fix any $x \in \calX$. Let $(m_1, \dots, m_k) \in \R_{\geq 0}^n$ be a vector of moment estimates obtained from \cref{lem:ko-moment-approx} such that for each $i \in [k]$,
    \[\left|m_i - \ex_{\x \sim [x]}[\pbayes(\x)^i]\right|\leq i\epsilon/2.\] 
    Then we can obtain a $k\th$ central moment estimate $c_k$ such that
    \[ \left|c_k - \ex_{\x \sim [x]}[(\pbayes(\x) - m_1)^k] \right| \leq k \epsilon (1 + m_1)^k / 2.  \]
\end{corollary}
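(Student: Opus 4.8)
The plan is to define $c_k$ by expanding the central moment $(\pbayes(\x)-m_1)^k$ via the binomial theorem and replacing each true moment $\ex_{\x \sim [x]}[\pbayes(\x)^j]$ by the estimate $m_j$ supplied by \cref{lem:ko-moment-approx}, then to control the accumulated error term by term. Concretely, set $m_0 := 1$ (which equals $\ex_{\x \sim [x]}[\pbayes(\x)^0]$ exactly) and define
\[ c_k := \sum_{j=0}^k \binom{k}{j}(-m_1)^{k-j} m_j. \]
Since the $m_j$ are computed from $\spred(x)$, so is $c_k$. By the binomial theorem applied pointwise and then taking expectations,
\[ \ex_{\x \sim [x]}\big[(\pbayes(\x) - m_1)^k\big] = \sum_{j=0}^k \binom{k}{j}(-m_1)^{k-j}\,\ex_{\x \sim [x]}[\pbayes(\x)^j]. \]

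The next step is to subtract these two expressions. The $j = 0$ term cancels exactly, because $m_0 = \ex_{\x \sim [x]}[\pbayes(\x)^0] = 1$. For $j \geq 1$, \cref{lem:ko-moment-approx} gives $|m_j - \ex_{\x\sim[x]}[\pbayes(\x)^j]| \leq j\epsilon/2$, and $|(-m_1)^{k-j}| = m_1^{k-j}$ since $m_1 \geq 0$. The triangle inequality then yields
\[ \big|c_k - \ex_{\x \sim [x]}[(\pbayes(\x)-m_1)^k]\big| \;\leq\; \frac{\epsilon}{2}\sum_{j=1}^k j \binom{k}{j} m_1^{k-j}. \]

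The final step is to evaluate the combinatorial sum. Differentiating $(1+t)^k = \sum_{j=0}^k \binom{k}{j} t^j$ and multiplying by $t$ gives the identity $\sum_{j=0}^k j\binom{k}{j}t^j = kt(1+t)^{k-1}$; substituting $t = 1/m_1$ (when $m_1 > 0$; the case $m_1 = 0$ is immediate, and in any event this is a polynomial identity in $m_1$) and multiplying through by $m_1^k$ gives $\sum_{j=1}^k j\binom{k}{j}m_1^{k-j} = k(1+m_1)^{k-1}$. Hence the error is at most $\tfrac{\epsilon}{2} k (1+m_1)^{k-1} \leq \tfrac{\epsilon}{2}k(1+m_1)^k$, using $1 + m_1 \geq 1$, which is exactly the claimed bound.

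There is no genuine obstacle in this argument; it is a routine binomial expansion plus error bookkeeping. The only points requiring a little care are that the $j=0$ term must be checked to cancel (so that no ``$\ex[\pbayes(\x)^0]$'' error is incurred, which is why we fix $m_0 = 1$), and the elementary identity $\sum_j j\binom{k}{j}m_1^{k-j} = k(1+m_1)^{k-1}$ obtained by differentiating the binomial theorem.
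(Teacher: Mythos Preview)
Your proof is correct and follows essentially the same approach as the paper: expand $(\pbayes(\x)-m_1)^k$ binomially, replace each raw moment by $m_j$, and bound the error term by term. Your version is in fact slightly tidier---you keep the $(-m_1)^{k-j}$ sign (the paper's write-up drops it, harmlessly since absolute values are taken) and you evaluate $\sum_j j\binom{k}{j}m_1^{k-j}=k(1+m_1)^{k-1}$ exactly via differentiation rather than using the cruder step $j\le k$ before summing; both routes land on the stated bound $k\epsilon(1+m_1)^k/2$.
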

\begin{proof}
    For brevity, let $\Distr \sim \pbayes([x])$ denote a random draw from the Bayes mixture.
    Observe that \[ \ex[(\Distr - m_1)^k] = \sum_{i=0}^k {k \choose i} \ex[\Distr^i] m_1^{k-i}. \] Our estimate $c_k$ is naturally formed by replacing each $\ex[\Distr^i]$ term by $m_i$ (taking $m_0 = 1$): \[ c_k := \sum_{i=0}^k {k \choose i} m_i m_1^{k-i}. \]

    The error in this estimate can be bounded as follows: \begin{align*}
        \left|c_k - \ex[(\Distr - m_1)^k] \right| &\leq \sum_{i=0}^k {k \choose i} \left|m_i - \ex[\Distr^i] \right| m_1^{k-i} \\
        &\leq \sum_{i=0}^k {k \choose i} \frac{i\epsilon}{2} m_1^{k-i} \\
        &\leq \frac{k\epsilon}{2} \sum_{i=0}^k {k \choose i} m_1^{k-i} \\
        &= \frac{k\epsilon}{2} (1 + m_1)^k.
    \end{align*}
\end{proof}

We can now obtain prediction sets using a simple moment-based concentration inequality.

\begin{theorem}
    Let $\Y = \{0, 1\}$, and let $\spred: \calX \rightarrow \Delta\Ysnap$ be $\epsilon$-$k\th$-order calibrated with respect to a partition $[\cdot]$. Fix any $x \in \calX$. Let $\epsilon, \alpha > 0$ be given. Let $(m_1, \dots, m_k)$ and $c_k$ be as in the previous lemma (\cref{cor:central-moment-est}). Let $\epsilon' = k\epsilon (1 + m_1)^k / 2$. Suppose that $\delta > 0$ is chosen such that \[ \delta \geq \left(\frac{c_k + \epsilon'}{\alpha}\right)^{1/k}. \] Then the interval $[m_1 - \delta, m_1 + \delta]$ is a prediction set for $\pbayes([x])$ with coverage at least $1-\alpha$.
\end{theorem}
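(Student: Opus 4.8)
The plan is to recognize this as a Markov-type tail bound for the random variable $\pbayes(\x)$ (with $\x \sim [x]$) centered at our mean estimate $m_1$, where we substitute the $k\th$ central moment estimate $c_k$ from \cref{cor:central-moment-est} in place of the true (inaccessible) $k\th$ central moment. Concretely, the complement of the coverage event is exactly $\{\pbayes(\x) \notin [m_1 - \delta, m_1 + \delta]\} = \{|\pbayes(\x) - m_1| > \delta\}$, so it suffices to show $\pr_{\x \sim [x]}[\,|\pbayes(\x) - m_1| > \delta\,] \leq \alpha$.

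First I would raise both sides to the $k\th$ power: $\{|\pbayes(\x) - m_1| > \delta\} = \{|\pbayes(\x) - m_1|^k > \delta^k\}$, and apply Markov's inequality to the nonnegative random variable $|\pbayes(\x) - m_1|^k$, giving $\pr_{\x \sim [x]}[\,|\pbayes(\x) - m_1|^k > \delta^k\,] \leq \ex_{\x \sim [x]}[\,|\pbayes(\x) - m_1|^k\,] / \delta^k$. Next I would control the numerator using \cref{cor:central-moment-est}: taking $k$ even (which is the natural regime here, so that the $k\th$ power is already nonnegative), we have $|\pbayes(\x) - m_1|^k = (\pbayes(\x) - m_1)^k$, hence $\ex_{\x \sim [x]}[\,|\pbayes(\x) - m_1|^k\,] = \ex_{\x \sim [x]}[(\pbayes(\x) - m_1)^k] \leq c_k + \epsilon'$ by the guarantee $|c_k - \ex_{\x \sim [x]}[(\pbayes(\x) - m_1)^k]| \leq \epsilon'$. (For odd $k$ one can instead note $|\pbayes(\x) - m_1| \leq 1$ and pass to the even exponent $k+1$, or simply absorb this into the choice of $k$; this is a cosmetic point.)

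Finally I would plug in the hypothesis on $\delta$: from $\delta \geq ((c_k + \epsilon')/\alpha)^{1/k}$ we get $\delta^k \geq (c_k + \epsilon')/\alpha$, so $\pr_{\x \sim [x]}[\,|\pbayes(\x) - m_1| > \delta\,] \leq (c_k + \epsilon')/\delta^k \leq \alpha$. Taking complements, $\pr_{\x \sim [x]}[\pbayes(\x) \in [m_1 - \delta, m_1 + \delta]] \geq 1 - \alpha$, which is precisely the claimed coverage guarantee.

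\textbf{Main obstacle.} There is essentially no deep obstacle — the argument is a one-line Markov inequality once the central-moment estimate of \cref{cor:central-moment-est} is in hand. The only point requiring a moment's care is that Markov needs the \emph{absolute} central moment $\ex[\,|\pbayes(\x) - m_1|^k\,]$, whereas \cref{cor:central-moment-est} estimates the signed central moment $\ex[(\pbayes(\x) - m_1)^k]$; these coincide for even $k$, which is why the clean statement is naturally phrased (or should be read) for even $k$. All the real content — the fact that $k\th$-order calibration yields accurate moment estimates — has already been done in \cref{lem:ko-moment-approx} and \cref{cor:central-moment-est}.
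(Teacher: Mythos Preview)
Your proposal is correct and matches the paper's proof essentially line for line: apply Markov's inequality to $|\pbayes(\x)-m_1|^k$, bound the numerator by $c_k+\epsilon'$ via \cref{cor:central-moment-est}, and use the hypothesis on $\delta$. The paper handles the parity issue the same way you do (it says ``assume for simplicity $k$ is even, otherwise take $k-1$''), so your remark about even $k$ being the natural regime is exactly right.
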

\begin{proof}
    Again for brevity let $\Distr \sim \pbayes([x])$ denote a random draw from the Bayes mixture. Let $\epsilon' = \frac{k\epsilon}{2} (1 + m_1)^k$. Assume for simplicity $k$ is even (otherwise take $k-1$). By Markov's inequality and the previous lemma, we have \begin{align*}
        \pr[|\Distr - m_1| \geq \delta ] &\leq \frac{\ex[(\Distr - m_1)^k]}{\delta^k} \\
        &\leq \frac{c_k + \epsilon'}{\delta^k} \\
        &\leq \alpha,
    \end{align*}
    by our choice of $\delta$.
\end{proof}

For context, it is helpful to consider the ideal case when $\epsilon = 0$ and we have exact values for all the moments up to degree $k$. In this case $c_k$ is exactly $\ex[(\Distr - \ex[\Distr])^k]$ and $\delta$ can be set to \[ \frac{\ex[(\Distr - \ex[\Distr])^k]^{1/k}}{\alpha^{1/k}}, \] which is the best one can hope for using only moment-based concentration.

\section{Experiments}
\label{sec:experiments}
\newcommand{\cifarhdimension}{32}
\newcommand{\cifartrainsize}{45,000}
\newcommand{\cifarvalsize}{5,000}
\newcommand{\cifarhsize}{10,000}
\newcommand{\cifarhclasses}{10}
\newcommand{\cifarhannotators}{50}
\newcommand{\cifarhmeanlabelcount}{2.1}
\newcommand{\cifarhlabelcountstd}{1.2}
\newcommand{\cifarhmeanentropy}{0.17}
\newcommand{\cifarhentropystd}{0.2}

\newcommand{\facialdimension}{48}
\newcommand{\facialsize}{36,000}
\newcommand{\facialtrainsize}{29,000}
\newcommand{\facialvalsize}{3,600}
\newcommand{\facialclasses}{10}
\newcommand{\facialannotators}{10}
\newcommand{\facialmeanlabelcount}{2.6}
\newcommand{\faciallabelcontstd}{1.3}
\newcommand{\facialmeanentropy}{0.6}
\newcommand{\facialentropystd}{0.5}

In this section, we describe experiments on training higher-order calibrated models using $k$-snapshots. We focus on the task of classifying ambiguous images using CIFAR-10H \citep{peterson2019human}, a relabeling of the test set of CIFAR-10 \citep{krizhevsky2009learning}. For additional experiments, including comparisons with other uncertainty decomposition methods, see Appendix \ref{appendix:experiments}.

CIFAR-10H is an image recognition dataset of {\cifarhsize} ${\cifarhdimension} \times {\cifarhdimension}$ color images with {\cifarhclasses} possible classes (entities like \texttt{truck}, \texttt{deer}, and \texttt{airplane}) and with at least {\cifarhannotators} independent human annotations per image. Thus $\calY$ is the space of {\cifarhclasses} possible entities, and for each image $x \in \calX$, \edit{we treat the normalized histogram of its $\geq ${\cifarhannotators} independent ground-truth annotations as a distribution $\pbayes(x)$ over $\calY$. In other words, $\pbayes(x) \in \DY$ is the uniform distribution over the independent annotations of $x$. All entropy computations are done with respect to this distribution.}  There is nontrivial disagreement among annotators:
the mean true aleatoric uncertainty $\ex[\AU^*_H(\x)] = \ex[H(\pbayes(\x))]$ is {\cifarhmeanentropy} (STD {\cifarhlabelcountstd}), where $H$ is the Shannon entropy.

We first train a regular 1-snapshot ResNet \citep{zagoruko2016wide} on {\cifartrainsize} images from the CIFAR training set, setting aside the remaining {\cifarvalsize} for validation. We then apply our post-hoc calibration algorithm (see Section \ref{calibration_algorithm}) to CIFAR-10H, using half as a calibration set and the other half as a test set. For more details, see Appendix \ref{appendix:experiments}. In this way we obtain a $k\th$-order predictor $g : \calX \to \Delta \Ysnap$. The induced marginal first-order predictor $\overline{g} : \calX \to \DY$ is $\overline{g}(x) = \ex_{\Distr \sim g(x)}[\Distr]$.



In this high-dimensional setting, it is infeasible to compute the true level set partition of $g$.
We instead use a more coarse-grained partition defined in terms of $\overline{g}$. Specifically, for each possible class (e.g.\ \texttt{truck}), we bin together all images $x$ where $\overline{g}(x)$ takes its maximum value at that class (\texttt{truck}), and moreover that maximum value lies in one of 10 slices of $[0, 1]$ (e.g.\ $[0.8, 0.9]$). Thus we obtain a partition with $100$ bins in total.


\begin{figure}[t]
\includegraphics[width=\textwidth]{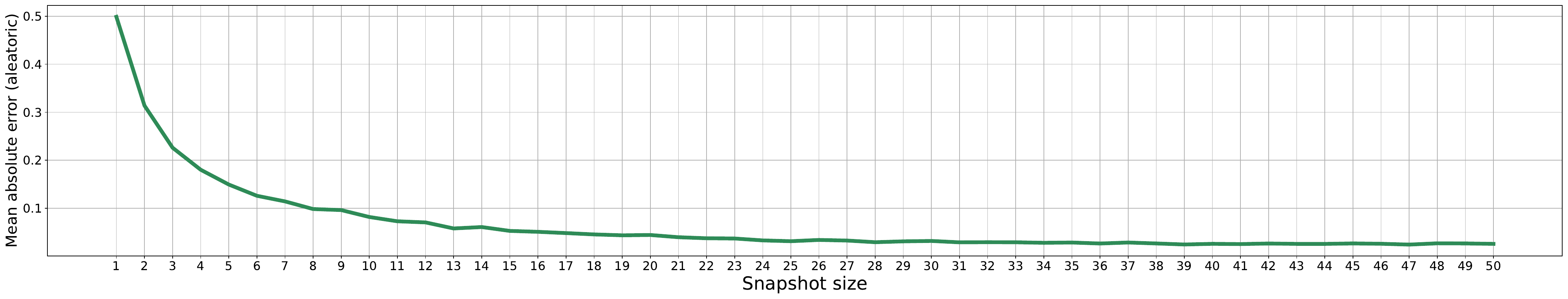}
\includegraphics[width=\textwidth]{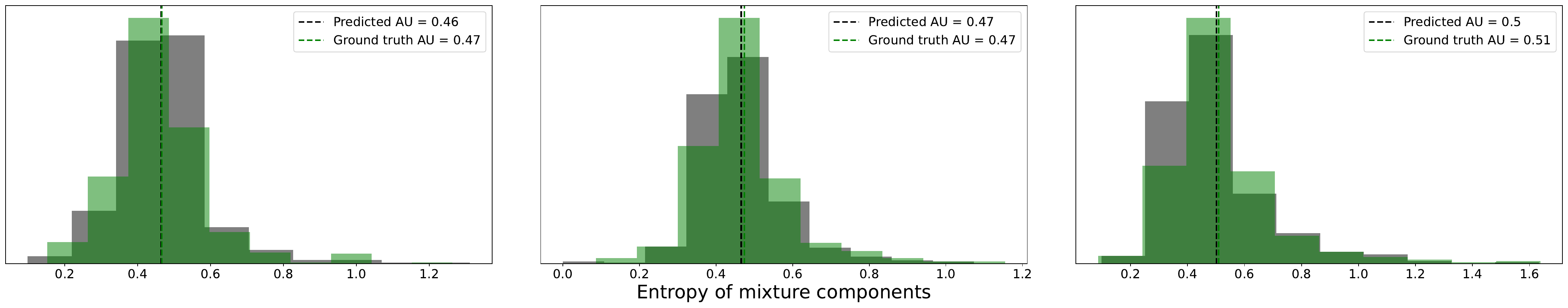}
\vspace{-7mm}
\caption{\small \textbf{Calibrating models with $k$-snapshots yields increasingly accurate estimates of aleatoric uncertainty.} \textit{Top:} Average aleatoric uncertainty estimation error (\cref{eq:au-est-error}) of CIFAR-10 models calibrated using snapshots of increasing size.  \textit{Bottom:} For three of the highest-entropy equivalence classes, we depict the distribution of entropies ranging over components of the predicted mixture (gray) and the Bayes mixture (green). We see that the distributions and in particular the means are similar.}
\label{fig:aleatoric_error}
\end{figure}

In Appendix \ref{appendix:experiments} we report higher-order calibration error as measured by Wasserstein error $\epsilon$ in Definition \ref{def:approx_koc}. As a more easily interpretable measure, we also compute the aleatoric estimation error:
\begin{equation}\label{eq:au-est-error}
    E_{\AU}(x) = \big|\AU_H(g: x) - \ex_{\x \sim [x]}[\AU_H^*(\x)]\big|
\end{equation}In Figure \ref{fig:aleatoric_error}, we report the mean aleatoric estimation error over all images, $\ex_{\x}[E_{\AU}(\x)]$. As expected, predictions of aleatoric uncertainty grow more accurate as the size of the snapshots used to calibrate the model is increased, and there is a significant benefit to going beyond $k=2$.

\begin{figure}[t]
\includegraphics[width=\textwidth]{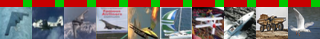}
\includegraphics[width=\textwidth]{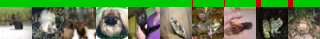}
\vspace{-7mm}
\caption{\small \textbf{Qualitatively, accurate estimates of aleatoric uncertainty help separate unusual, poorly learned images (mostly epistemic) from genuinely ambiguous ones (mostly aleatoric).} \textit{Top:} CIFAR-10H images with the highest ratio of {\color{red}epistemic} uncertainty to {\color{green}aleatoric} uncertainty (depicted by colored bars), as estimated by a well-higher-order-calibrated model. \textit{Bottom:} The most aleatoric images according to the same model.}
\label{fig:qualitative_cifar}
\end{figure}

To illustrate the advantages of accurate estimates of aleatoric uncertainty, we include in Figure \ref{fig:qualitative_cifar} examples of images skewed most towards pure ``epistemic'' or ``aleatoric'' uncertainty. The former category mainly consists of images that are unusual in some way, e.g.\ images of objects at unusual angles, or closely cropped images. Nevertheless, all are relatively easy for humans to identify, and thus point to cases where the model could be improved. The ``aleatoric'' images, on the other hand, include mostly low-quality or otherwise ambiguous pictures on which the annotators do in fact disagree. Interestingly, this section also includes a few images (e.g.\ the fifth image, of a \texttt{cat} being held by a human) that do not seem ambiguous per se but did in fact divide annotators. This indicates that the mixture model has learned something about aleatoric uncertainty in the label distribution not obvious to the naked eye. Such insights can help practitioners fine-tune their data collection efforts. 


\section{Conclusion}
\edit{This work introduces higher-order calibration as a rigorous framework for decomposing predictive uncertainty with clear semantics, guaranteeing that our predictor's aleatoric uncertainty estimate matches the true average aleatoric uncertainty over points where the prediction is made. We also propose $k\th$-order calibration as a tractable relaxation, showing that even small values of $k$ can yield meaningful uncertainty decompositions. We demonstrate both training-time and post-hoc methods for achieving $k\th$-order calibration using $k$-snapshots. Our method offers significant advantages over existing methods in that it is distribution-free, tractable, and comes with strong provable guarantees.

The main limitation of our work is that achieving $k\th$-order calibration for any $k > 1$ fundamentally requires access to data with multiple independent labels per instance. While this requirement is satisfied in many reasonable scenarios, such as crowd-sourcing or model distillation, it is not always satisfiable. Additionally, our sample complexity scales with $|\calY|^k$, which could become prohibitive for large label spaces or larger $k$. Open questions for future research include: (a) techniques for reducing our dependence on label size and $k$, (b) applications to active learning and (c) out-of-distribution detection, and (d) formally studying the tradeoff between collecting more conditional labels versus more labeled data.}





\bibliographystyle{alpha}
\bibliography{refs}

\appendix

\section{Additional remarks on key definitions}\label{sec:approx_reps}
\paragraph{A note on the role of partitions.}
Partitions play an essential role in any formulation of calibration for supervised learning. Consider an ordinary first-order predictor $\pred : \calX \to \DY$ learned from the data. We view this predictor as a complete summary of all the information the algorithm has learnt. In particular, all instances $x$ that receive the same value of $\pred(x)$ (i.e., lie in the same level set) are de facto not being distinguished from one another by $\pred$. This may be because they are in fact highly similar, or because $\pred$ simply has not learnt enough to distinguish between them. In fact, in this view a predictor $\pred$ is exactly the same as specifying a level set partition (and values for each level set) --- i.e.\ a predictor and a partition are in exact one-to-one correspondence. First-order calibration states precisely that over all instances that receive the same prediction, the predicted outcome distribution matches the true marginal outcome distribution (over all such instances).

In practice, we naturally relax the level set partition to merely be an approximate level set partition, or in general an even coarser partition $[\cdot]$ of our choice (for reasons of tractability). Nevertheless, a partition should always be thought of as expressing that points in an equivalence class are ``similar'' to each other from the point of view of the predictor. Any formulation of calibration for supervised learning thus has the following form: in aggregate over all points that are treated similarly by the predictor, the prediction is faithful to nature. Our definition of higher-order calibration takes this form as well.

\paragraph{Alternative $k\th$-order representations of mixtures.}
In this paper we have chosen to represent the $k\th$-order projection of a mixtures as a distribution over $k$-snapshots, treating each snapshot as a uniform distribution over its elements. More generally one can work with other representations. The following are three natural ways to define the $k\th$-order projection of a mixture that will all all turn out to be equivalent for us: 
\begin{enumerate}
    \item As a $k$-snapshot distribution, an object in $\DY^k$: The \emph{$k$-snapshot projection} of a mixture $\mix$ is the $k$-tuple distribution arising from drawing a random $\Distr \sim \mix$, and drawing $\y_1, \dots, \y_k \sim \Distr$ iid.
    \item As a $k\th$-moment tensor, an object in $(\R^{\ell})^{\otimes k}$: The \emph{$k\th$-moment tensor} of a mixture $\mix$ is exactly the object $\ex[\Distr^{\otimes k}]$, where $\Distr \sim \mix$.\footnote{To avoid confusion, note that here we view $\Distr$ as a random vector on the simplex, and thus these are moments of probability masses, not of the labels in $\calY$ themselves. The latter is in any case not well-defined for us since $\calY$ is a categorical space.}
    \item As a symmetrized $k$-snapshot distribution, an object in $\Delta \Ysnap$: The \emph{symmetrized $k$-snapshot projection} of a mixture $\mix$ is the distribution arising from drawing $\Distr \sim \mix$, then drawing $(\y_1, \dots, \y_k) \sim \Distr$, and considering $\unif(\y_1, \dots, \y_k)$. This is how we originally defined $\proj_k \mix$ (see Definition~\ref{def:kth-order-proj}).
\end{enumerate}
Observe that the pmf of the $k$-snapshot projection of $\mix$ is given exactly by the $k\th$-moment tensor of $\mix$: that is, for any tuple $(y_1, \dots, y_k) \in \calY^k$, the probability mass placed on it by the $k$-snapshot projection is \[ \ex[\Distr_{y_1} \cdots \Distr_{y_k}] = \ex[\Distr^{\otimes k}]_{(y_1, \dots, y_k)}. \] Moreover, this quantity is clearly invariant to permutations of the tuple. Thus it is clear that the symmetrized $k$-snapshot projection of $\mix$ contains exactly the same information as the (unsymmetrized) $k$-snapshot projection, just more compactly represented. To be explicit, we can pass from a $k$-snapshot distribution to a symmetrized $k$-snapshot distribution by summing the probability mass over all permutations of a given tuple, and we can similarly pass from the latter to the former by distributing the weight of a given symmetrized snapshot evenly over all tuples that are equivalent to it up to permutation.

In this sense the three $k\th$-order representations above of a true higher-order mixture $\mix$ are all formally equivalent, and we can in principle refer simply to \emph{the $k\th$-order projection} $\proj_k{\mix}$ of $\mix$ without specifying a particular representation. Notice that from $\proj_k \mix$ we can also form $\proj_j \mix$ for all $j \leq k$, simply by appropriate marginalization (e.g., take the marginal distribution over the first $j$ elements of each tuple).

We could formally define $k\th$-order calibration in terms of any of these representations. But for the purposes of this paper the symmetrized representation is by far the most convenient since it allows us to view $\Ysnap$ as a subset of $\DY$.

Note that in general an arbitrary distribution over $\calY^k$ need not be the $k$-snapshot projection of any mixture, and similarly an arbitrary tensor in $(\R^{\ell})^{\otimes k}$ need not be the $k\th$-moment tensor of any mixture. However when these objects do arise from snapshots of some mixture $\mix$, then they are equivalent to each other.

\edit{\paragraph{Worked X-ray example.} For illustrative purposes, we walk through our running X-ray example using fully elaborated notation. Our instance space $\calX$ is the space of X-ray scans. The outcome space $\calY$ is binary: possible values are ``does not require cast'' (0) and ``requires cast'' (1). Note that in general $\calY$ may be any finite label space, but binary outcome spaces offer some convenient simplifications. For example, in this case, a distribution over $\calY$ can be represented by a single value in $[0,1]$, namely the probability of ``requires cast''. We denote the population distribution over scans by $D_\calX$. We assume there exists an idealized conditional distribution function $\pbayes : \calX \to \DY$, with $\pbayes(x)$ representing the idealized probability, taken over all doctors' opinions, of whether $x$ requires a cast.

A first-order predictor outputs a single distribution over $\calY$ for every $x$, i.e.\ a single prediction for the distribution of doctors' opinions for this $x$. As such, it expresses a composite kind of predictive uncertainty. Even if it were perfectly first-order calibrated, this would only tell us that the prediction at $x$ matches the average opinion ranging over the level set $[x]$ of points with the same prediction as at $x$. There is no way to tell whether an uncertain prediction corresponds to a scenario where the predictor is perfect and there is genuine disagreement in doctors' opinions (its uncertainty is mostly aleatoric), or one where the predictor itself is uncertain even if doctors are not (its uncertainty is mostly epistemic). We illustrate this problem in Figure {\ref{fig:hoc}}. Therefore, our goal is to learn $\pred :\calX \to \DDY$, a \emph{higher-order} predictor which outputs a mixture of distributions in $\DY$ at every $x$. That is, in our example it outputs a distribution over the interval $[0, 1]$ rather than a single value in $[0, 1]$.

To obtain a calibrated higher-order predictor, we can apply one of the algorithms in Section {\ref{sec:achieving_k}}. For concreteness we opt for the second one, which calibrates a first-order predictor \emph{post hoc}. First, we train a first-order predictor $\overline{f}$ in the usual fashion on a training set of tuples $\{(x_i, y_i)\}_{i \in [N]}$, where for each $i$, $x_i$ is sampled independently from $D_\calX$ and $y_i$ is sampled from $\pbayes (x_i)$ (simulating an individual doctor's opinion). We use the trained predictor to compute a partition $[\cdot]$ over the input space. The canonical ``level set partition'' places any given X-ray $x$ in the equivalence class $[x]$ of all other inputs that $\overline{f}$ assigns approximately the same prediction (at some desired coarseness). (For example, $[x]$ might correspond to all images for which the probability assigned by $\overline{f}$ lies in $[0.6, 0.65]$.) Next, for every such class $[x]$ we sample a calibration set $\{(x_j, y_j^1, ..., y_j^k\}_{j \in [C]}$ of $k$-snapshots. Within a snapshot, each of the $k$ labels $y_j^{\ell}$ ($\ell \in [k])$ is sampled completely independently of the others, corresponding to asking $k$ different doctors for their opinion on the scan. Each $k$-snapshot can be turned into a distribution in $\DY$: we count the occurrences of each discrete label, turn them into a histogram, and normalize it. The larger the value of $k$, the more closely this distribution will resemble the ground-truth distribution $\pbayes(x_j)$. Finally, we calibrate the model by replacing the model's shared prediction for each equivalence class (namely $\overline{f}(x)$) with the uniform mixture of the $k$-snapshot distributions for each of the calibration set inputs that fall into that class. The resulting predictor is our higher-order predictor $f$. (Technically, for finite $k$, $f$ outputs discrete distributions in $\DY^{(k)}$, not $\DDY$, but this approximation also improves as $k$ grows.)

Our theory shows that, for large enough $N$, $C$, and $k$, this predictor will be $k$-th order calibrated, meaning that its estimates of epistemic and aleatoric uncertainty are guaranteed to be good approximations of their actual values. For a given input $x$, aleatoric uncertainty is estimated by measuring the average entropy of all of the $k$-snapshot distributions in the prediction for $[x]$, and approaches the true aleatoric uncertainty: the value $\ex[G(\pbayes(\rv{z})) | \rv{z} \sim [x]]$, or the expected entropy (measured by $G$) of the ground-truth label distributions for all inputs in the same equivalence class as $x$. Epistemic uncertainty, by contrast, is the remainder of the predictive uncertainty (see \cref{eq:mi-decomp}). It turns out to equal a measure of the ``variance'' (more precisely the average divergence to the mean) of $\pbayes(\rv{z})$ ranging over $\rv{z} \sim [x]$ (see \cref{lem:EU_PU_semantics}).
}

\section{Further related work}\label{sec:related_work_app}
\paragraph{Related notions of calibration.} Our notion of first-order calibration over a general discrete label space $\calY$ is the same as the notion of canonical or distribution calibration due to \cite{kull2015novel,widmann2019calibration} (see also \cite{silva2023classifier} for a broad survey). This was extended to the regression setting ($\calY = \R$) by \cite{song2019distribution,jung2021moment}. While the latter works are related to ours (especially in the binary case where $\DY$ can be identified with $[0, 1]$), the key difference is that we only ever observe labels in the discrete space $\calY$ and never in the continuous space $\DY$. In the limit as snapshot size $k \to \infty$ (so that we effectively observe $\pbayes(x)$ with each example), higher-order calibration over $\calY$ can be thought of as reducing to distribution calibration for (high-dimensional) regression over the space $\DY$. However, no provable guarantees are known for the latter problem, whereas we actually provide provable algorithms using only finite-length snapshots. 

\paragraph{Distribution-free uncertainty quantification and conformal prediction.} Our techniques share a common spirit with the literature on distribution-free uncertainty quantification and especially conformal prediction \citep{vovk2005algorithmic,angelopoulos2021gentle,cella2022validity}. This literature also makes minimal assumptions about the data-generating process and provides formal frequentist guarantees, typically prioritizing prediction intervals (or sets) for the labels. Our work provides ``higher-order prediction sets'' for entire probability vectors conditional on an equivalence class (see \cref{sec:pred_intervals}). See \cite{jung2021moment,gibbs2023conformal} for related work on conditional coverage guarantees. A key common theme in all these works as well as in ours is the use of \emph{post-hoc calibration} as a way of imbuing ordinary predictors with formal coverage-like guarantees; see \cite{roth2022uncertain} for lecture notes on the topic and \cite[Appendix C]{johnson2024experts} for an extended discussion of the relationship between second-order calibration and conformal prediction.

\paragraph{Other notions of epistemic uncertainty.} In recent years one popular alternative approach to defining epistemic uncertainty is to view it as a measure of whether an instance is ``out of distribution'' (see e.g.\ \cite{liu2020simple,van2020uncertainty}). While sometimes reasonable, we believe this is not always justifiable. We instead define our epistemic uncertainty measure based on first principles, and allow for epistemic uncertainty to occur even in-distribution.

Another formulation of epistemic uncertainty studied in \cite{lahlou2021deup} is as a kind of excess risk measure, assuming an oracle estimator for aleatoric uncertainty. See also \cite{kull2015novel} for similar decompositions.

\edit{\paragraph{Pitfalls of uncertainty quantification via mixtures and credal sets.} Mixtures (also termed second-order or higher-order probability distributions) are common tools for modeling uncertainty (e.g.\ in Bayesian methods). However, they have been critiqued (see e.g.\ \cite{bengs2022pitfalls, pandey2023learn, jurgens2024epistemic}) on the grounds that it is unsound to learn a mixture (a ``Level-2'' object in $\DDY$) using ordinary labeled examples (``Level-0'' objects in $\calY$). We agree with this critique, and it is why we instead learn a Level-2 predictor from $k$-snapshots, i.e.\ multilabeled examples (which are approximations of Level-1 objects in $\DY$). Without $k$-snapshots, existing approaches to using higher-order distributions fall short in various ways. Our work arguably identifies exactly how these approaches fall short, and proposes (a) learning from $k$-snapshots as a principled way of learning mixtures, and (b) higher-order calibration as a principled way of ensuring that these mixtures have clear frequentist semantics.

Such concerns have also spurred research into alternative approaches that avoid mixtures, such as credal sets (see e.g.\ \cite{sale2023volume, chau2024credal}). Credal sets are incomparable to mixtures, although for many purposes a nonempty credal set of distributions functions similarly to the uniform mixture over that set, and insofar as they do, they can be miscalibrated in the sense that we propose. In concurrent work, \cite{caprio2024conformalized} introduces a notion of calibration for credal sets similar in flavor to our own (see \cref{sec:pred_intervals} in particular). Ultimately, these and other alternative formalisms make different tradeoffs in capturing a subtle real-world concept, and we work with mixtures as they are natural, expressive, and, importantly, allow proving formal semantics of the type we show in this work.}

\paragraph{Further references.} The area of uncertainty quantification and decomposition is vast, and here we have only mentioned works most related to ours (to the best of our knowledge). We refer the reader to the surveys by \cite{hullermeier2021aleatoric,abdar2021review,gruber2023sources} as well as the related work sections of \cite{lahlou2021deup,johnson2024experts} for much more.



\section{Alternative uncertainty decompositions and semantics}\label{sec:alt-decomps}

Many uncertainty decompositions have been considered in the literature~\citep{houlsby2011bayesian,kull2015novel,malinin2020uncertainty,wimmer2023quantifying,schweighofer2023introducing,sale2023second-dist,sale2023second-var,kotelevskii2024predictive,hofman2024quantifying}, each making different tradeoffs (see e.g.\ \cite{wimmer2023quantifying} for an explicit axiomatic formulation of desiderata for such decompositions). Using the unified language of generalized entropy functions and proper losses, we provide a discussion of some of these alternative uncertainty decompositions. We also discuss further interpretations of the components of the mutual information decomposition.

We first begin with an overview of proper losses and their associated generalized entropy functions.

\subsection{Proper losses and generalized entropy functions}\label{sec:proper_losses}
Generalized entropy functions are closely associated with proper loss functions $L : \calY \times \DY \to \R$, where $L \divx{y}{\distr}$ is to be interpreted as the loss incurred when we predict the distribution $\distr$ and observe the outcome $y$. We can extend these to functions $L : \DY \times \DY \to \R$ expressing the overall expected loss of $\distr$ with respect to a true outcome distribution $\truedistr$ by letting $L \divx{\truedistr}{\distr} = \ex_{\y \sim \truedistr}[L\divx{\y}{\distr}]$.\footnote{The extension is natural and consistent if we view $y \in \calY$ as a point distribution $\ind[y] \in \DY$.} We call the loss (strictly) proper if for all $\truedistr$, it is (strictly) minimized when $\distr = \truedistr$, and strictly proper if this is the unique minimizer. Define the generalized entropy function and the generalized (Bregman) divergence respectively as
\begin{align*}
G(\truedistr) = L \divx{\truedistr} {\truedistr}, \ \  D \divx{\truedistr}{\distr} = L \divx{\truedistr}{\distr} - G(\truedistr)
\end{align*}
Note that in all of these expressions, the expectation is always taken wrt the first argument. The first argument is usually (although not always) interpreted as the truth, and the second as the estimate or prediction.

A basic and important characterization of proper losses (under mild regularity assumptions; boundedness suffices) is the following (see \cite[Theorems 1-2 and Figure 1]{gneiting2007strictly}): the loss $L$ is (strictly) proper iff its associated generalized entropy function $G$ is (strictly) concave, and for all $\distr$, $L\divx{\cdot}{\distr}$ is the gradient of $G$ at $\distr$.\footnote{Here $L\divx{\cdot}{\distr}$ denotes the vector $(L\divx{0}{\distr}, \dots, L\divx{\ell - 1}{\distr}) \in \R^{\ell}$. Moreover, technically the concave analog of a subgradient (a ``supergradient'', or equivalently an actual subgradient of $-G$) suffices. In any case we mainly work with strictly concave functions, where this distinction does not arise.} In effect, this means that to specify a proper loss it is sufficient to specify a concave generalized entropy function $G$; the loss and divergence can then be obtained in terms of its gradient.

The most important examples of entropy functions are: \begin{enumerate}[nosep]
    \item Shannon entropy, $G_\shn(\distr) = \sum_{y \in \calY} \distr_y \log \frac{1}{\distr_y}$. The associated loss $L_\shn \divx{\distr}{\distr'} = \sum_{y \in \calY} \distr_y \log \frac{1}{\distr'_y}$ is the cross-entropy loss, and the associated divergence $D_\shn \divx{\distr}{\distr'} =  \sum_{y \in \calY} \distr_y \log \frac{\distr_y}{\distr'_y}$ is precisely the KL divergence. The associated pointwise loss function is the log loss, $L_\shn \divx{y}{\distr} = \log \frac{1}{\distr_y}$. In the binary case, $G_\shn : [0, 1] \to [0, 1]$ is just the binary entropy function $p \mapsto p \log \frac{1}{p} + (1-p)\log \frac{1}{1-p}$. Note that our $\log$ always has base 2 by convention.
    \item Brier entropy $G_\brier(\distr) = 1 - \|\distr\|^2$ (also known by many other names, including Gini impurity). The associated loss turns out to be $L_\brier \divx{\distr}{\distr'} = \|\distr - \distr'\|^2 + 1 - \|\distr\|^2$, and the associated divergence $D_\brier \divx{\distr}{\distr'} = \|\distr - \distr'\|^2$ is precisely the squared Euclidean distance. The associated pointwise loss function is (the negation of) the Brier score, $L_\brier \divx{y}{\distr} = \|\distr - \ind[y]\|^2$, where $\ind[y]$ denotes the indicator vector of $y$. In the binary case, $G_\brier : [0, 1] \to [0, 1]$ is given by $p \mapsto 2p(1-p)$.\footnote{In the binary case, a further scaling factor of 2 can be used so that $G_\brier(\frac{1}{2})$ equals $G_\shn(\frac{1}{2}) = 1$, but this is not an essential part of the definition.}
\end{enumerate}

Bregman divergences satisfy the following \emph{cosine law}, a generalization of the cosine law for Euclidean distances:
\begin{equation}
    D \divx{\distr}{\distr''} = D \divx{\distr}{\distr'} + D \divx{\distr'}{\distr''} + \inn{\distr-\distr', \nabla G(\distr'') - \nabla G(\distr')}, \label{eq:bregman-cosine}
\end{equation} where $\nabla G(\distr)$ is the gradient of $G$ at $\distr$, viewed as a vector in $\R^\ell$.\footnote{For the Brier score, $\nabla G(\distr) = -2\distr$, and this equation is exactly the result of expanding $\|\distr - \distr''\|^2$ as $\|(\distr - \distr') + (\distr' - \distr'')\|^2$.}

\paragraph{A note on Shannon vs Brier entropy.} It is worth noting that in the binary case, the Shannon entropy $G_\shn(p) = p \log \frac{1}{p} + (1-p)\log \frac{1}{1-p}$ and the Brier entropy $G_{\brier}(p) = 4p(1-p)$ are very similar as functions (where for better comparison we scale $G_\brier$ by a factor of $2$ so that $G_\brier(\frac{1}{2}) = G_\shn(\frac{1}{2}) = 1$). They match at $p=0$, $\frac{1}{2}$ and $1$, follow a very similar shape, and differ by no more than $0.15$ at any point (see \cref{fig:entropy-comparison}). For most practical purposes, we recommend that practitioners consider using Brier instead of Shannon, as the benefits afforded by its simplicity largely outweigh the minor differences. In particular, the simple exact quadratic formula for $G_\brier$ means that second-order calibration already suffices to approximate true Brier entropy and with a much better sample complexity (and simpler estimator) than for Shannon (compare Corollaries \ref{cor:brier-entropy-est} and \ref{cor:shannon-entropy-est}). Unless one specifically needs to approximate Shannon entropy with very small approximation error $\epsilon$ and can afford large snapshot size $k$, Brier will typically be the more pragmatic choice.

\begin{figure}[h]
    \centering
    \begin{tikzpicture}
    \begin{axis}[
        width=12cm,
        height=8cm,
        xlabel=$p$,
        ylabel={$G(p)$},
        xmin=0, xmax=1,
        ymin=0, ymax=1.1,
        samples=100,
        smooth,
        axis lines=middle,
        legend pos=north east,
        domain=0.001:0.999, 
        grid=both,
        grid style={line width=.1pt, draw=gray!10},
        major grid style={line width=.2pt,draw=gray!50},
        minor tick num=4
    ]

    \addplot[blue, thick] {-x*log2(x) - (1-x)*log2(1-x)};
    \addlegendentry{$G_{\shn}(p)$}

    \addplot[red, thick] {4*x*(1-x)};
    \addlegendentry{$G_{\brier}(p)$}

    \end{axis}
    \end{tikzpicture}
    \caption{Comparison of Shannon and Brier binary entropy functions. Here we use the scaled version of Brier entropy, namely $G_{\brier}(p) = 4p(1-p)$, for a better comparison.}
    \label{fig:entropy-comparison}
\end{figure}
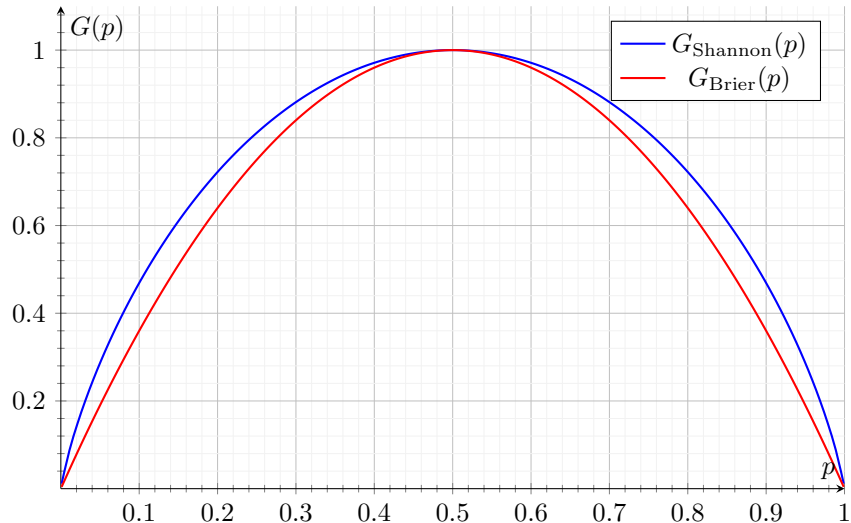

\subsection{Loss-based semantics of the mutual uncertainty decomposition}
Throughout this section, we fix a proper loss $L:\DY \times \DY \rightarrow \R$ and its associated generalized entropy function $G: \DY \rightarrow \R$ and generalized Bregman divergence $D: \DY \times \DY \rightarrow \R$. 
As a reminder, for $\distr, \distr^* \in \DY$, these are defined as follows:
\begin{align*}
G(\truedistr) = L \divx{\truedistr} {\truedistr}, \ \  D \divx{\truedistr}{\distr} = L \divx{\truedistr}{\distr} - G(\truedistr)
\end{align*}

\begin{lemma}\label{lem:EU_PU_semantics}
    Given a generalized concave entropy function $G:\DY \rightarrow \R$ and its associated proper loss $L: \DY \times \DY \rightarrow \R$ and divergence $D: \DY \times \DY \rightarrow \R$, the following equalities hold true for any higher-order predictor $\pred: \calX \rightarrow \DDY$, and moreover the equalities marked with $\stackrel{\text{HOC}}{=}$ hold whenever $\pred$ is perfectly higher-order calibrated wrt the partition $[\cdot]$. Let $\meandistr = \ex_{\Distr \sim \pred(x)}[\Distr]$, $\meandistr^* = \ex_{\Distr^* \sim \pbayes([x])}[\Distr^*]$. Then,
    \begin{align*}
    \PU_G(\pred: x) &= \ex_{\Distr \sim \pred(x)} [L \divx{\Distr}{\meandistr}] \stackrel{\text{HOC}}{=} \ex_{\Distr^* \sim \pbayes([x])} [L \divx{\Distr^*}{\meandistr^*}] \\
    \EU_G(\pred:x) &= \ex_{\Distr \sim \pred(x)}[D\divx{\Distr}{\meandistr}] \stackrel{\text{HOC}}{=}\ex_{\Distr^* \sim \pbayes([x])}[D\divx{\Distr^*}{\meandistr^*}] .
\end{align*}
\end{lemma}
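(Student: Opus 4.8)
The plan is to prove the two displayed equalities by establishing each of the identities in turn, working from the definitions of $\PU_G$, $\AU_G$, $\EU_G$ (Definition~\ref{def:uncertainty-defs}) and of $G, L, D$ in terms of a proper loss. The crucial elementary fact I would isolate first is that for \emph{any} mixture $\mix \in \DDY$ with centroid $\meandistr = \ex_{\Distr \sim \mix}[\Distr]$, we have $\ex_{\Distr \sim \mix}[L\divx{\Distr}{\meandistr}] = G(\meandistr)$. This follows because $L\divx{\cdot}{\meandistr}$ is (an affine function given by) the supergradient of the concave $G$ at $\meandistr$, hence $L\divx{\Distr}{\meandistr}$ is linear in $\Distr$ and the expectation passes through: $\ex[L\divx{\Distr}{\meandistr}] = L\divx{\ex[\Distr]}{\meandistr} = L\divx{\meandistr}{\meandistr} = G(\meandistr)$. (Concretely, $L\divx{\truedistr}{\meandistr} = G(\meandistr) + \inn{\nabla G(\meandistr), \truedistr - \meandistr}$, which is affine in $\truedistr$.) This immediately gives $\PU_G(\pred:x) = G(\meandistr) = \ex_{\Distr \sim \pred(x)}[L\divx{\Distr}{\meandistr}]$, the first non-HOC equality.

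Next I would handle $\EU_G$. By definition $\EU_G(\pred:x) = \PU_G(\pred:x) - \AU_G(\pred:x) = G(\meandistr) - \ex_{\Distr \sim \pred(x)}[G(\Distr)]$. Using $D\divx{\Distr}{\meandistr} = L\divx{\Distr}{\meandistr} - G(\Distr)$, take expectations over $\Distr \sim \pred(x)$: $\ex[D\divx{\Distr}{\meandistr}] = \ex[L\divx{\Distr}{\meandistr}] - \ex[G(\Distr)] = G(\meandistr) - \ex[G(\Distr)] = \EU_G(\pred:x)$, where the second step uses the elementary fact above. This establishes the second non-HOC equality. Both of these hold for an arbitrary higher-order predictor with no calibration assumption, exactly as claimed.

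Finally, for the equalities marked $\stackrel{\text{HOC}}{=}$: when $\pred$ is perfectly higher-order calibrated wrt $[\cdot]$, Definition~\ref{def:hop-hoc} gives $\pred(x) = \pbayes([x])$ as mixtures, so in particular $\meandistr = \meandistr^*$, and every expectation $\ex_{\Distr \sim \pred(x)}[\,\cdot\,]$ equals the corresponding $\ex_{\Distr^* \sim \pbayes([x])}[\,\cdot\,]$ for any (bounded measurable) integrand --- here the integrands $L\divx{\Distr}{\meandistr}$ and $D\divx{\Distr}{\meandistr}$, which depend only on $\Distr$ and the (common) centroid. Substituting yields the right-hand sides. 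I do not expect a genuine obstacle here; the only point requiring a line of care is the elementary fact that $\ex[L\divx{\Distr}{\meandistr}] = G(\meandistr)$, i.e.\ that the "predictive uncertainty equals expected loss against the centroid" identity holds --- this is precisely the characterization of proper losses via supergradients of the concave entropy recalled in \cref{sec:proper_losses}, so it can be invoked directly. (One could alternatively note this is the standard fact that the Bayes-optimal prediction under a proper loss is the mean, and the residual expected loss is $G$ of the mean.)
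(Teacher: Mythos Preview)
Your proposal is correct and follows essentially the same approach as the paper. The only minor difference is in how you justify the key identity $\ex_{\Distr \sim \mix}[L\divx{\Distr}{\meandistr}] = G(\meandistr)$: you invoke the supergradient characterization to argue $L\divx{\cdot}{\meandistr}$ is affine, whereas the paper instead unfolds the definition $L\divx{\Distr}{\meandistr} = \ex_{\y \sim \Distr}[L\divx{\y}{\meandistr}]$ and swaps the order of expectation (using that $\Distr \sim \mix, \y \sim \Distr$ is the same as $\y \sim \meandistr$); these are the same linearity argument phrased two ways, and the remaining steps for $\EU_G$ and the HOC substitutions are identical.
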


Note that the characterization of $\EU$ implies the second statement of Theorem~\ref{thm:main-intro} when $G$ is the Shannon entropy function and thus its associated divergence is the KL-divergence. We give some intution on these quantities before providing the proof. The restatement of predictive uncertainty tells us that it can be equivalently thought of as the average loss of the centroid wrt a random mixture component, $\ex_{\Distr \sim \pred(x)}[L\divx{\Distr}{\meandistr}]$, while epistemic uncertainty is the loss in entropy of $\y$ upon learning the mixture component $\Distr$, and is precisely the (generalized) \emph{mutual information} between $\y$ and $\Distr$. Thus, $\EU_G(\pred:x)$ can be directly interpreted as a (one-sided) variance-like quantity, reflecting the level of disagreement or dispersion among the various mixture components.

\begin{proof}
    We first prove the predictive uncertainty portion: 
    \begin{align}
    \PU_G(\pred :x ) &= G(\meandistr) \nonumber\\
    &= L \divx{\meandistr}{\meandistr} \nonumber \\
    &= \ex_{\y \sim \meandistr} [L\divx{\y}{\meandistr}] \nonumber \\
    &= \ex_{\Distr \sim \pred(x)} \ex_{\y \sim \Distr} [L\divx{\y}{\meandistr}] \nonumber \tag{since $\Distr \sim \mix, \y \sim \Distr$ is the same as $\y \sim \meandistr$} \\
    &= \ex_{\Distr \sim \pred(x)} [L \divx{\Distr}{\meandistr}]. \label{eq:pu-mi-expansion}
    \end{align}

    Under higher-order calibration, we are guaranteed that $\pred(x)$ is identical to $\pbayes([x])$, which also implies $\meandistr = \meandistr^*$. Thus, when $\pred$ is perfectly higher-order calibrated, we additionally have 
    \[\ex_{\Distr \sim \pred(x)} [L \divx{\Distr}{\meandistr}] = \ex_{\Distr^* \sim \pbayes([x])} [L \divx{\Distr^*}{\meandistr^*}].\]

    We now move on to epistemic uncertainty. In this case, we have
    \begin{align*}
        \EU_G(\pred: x) &= G(\meandistr) -\ex_{\Distr \sim \pred(x)}[G(\Distr)] \\
        &= \ex_{\Distr \sim \pred(x)} [ L\divx{\Distr}{\meandistr}] - \ex_{\Distr \sim \pred(x)} [G(\Distr)] \tag*{(by \cref{eq:pu-mi-expansion})} \\
        &= \ex_{\Distr \sim \pred(x)}[D\divx{\Distr}{\meandistr}].
    \end{align*}

    Once again, if $\pred$ is perfectly higher-order calibrated, it immediately follows that $\pred(x) = \pbayes([x])$ and $\meandistr = \meandistr^*$, implying 
    \[\ex_{\Distr \sim \pred(x)}[D\divx{\Distr}{\meandistr}] = \ex_{\Distr^* \sim \pbayes([x])}[D\divx{\Distr^*}{\meandistr^*}].\]
\end{proof}


\subsection{The Total Mutual Information decomposition}
An alternative natural expression for EU, studied by \cite{malinin2020uncertainty,schweighofer2023introducing}, is obtained by replacing the divergence between a random mixture component and the centroid with the divergence between two random mixture components:

\[\EU^{\TMI}_G(\pred: x) := \ex_{\Distr, \Distr' \sim \pred(x)} [D\divx{\Distr}{\Distr'}]\]

We refer to this as the ``total mutual information'' formulation of $\EU$, as it turns out to be the sum of the mutual information between $\y$ and $\Distr$ (i.e.\ $\EU(\pred : x)$) and the ``reverse mutual information'', a notion studied by  \cite{malinin2020uncertainty} where the roles of $\Distr$ and $\meandistr$ are reversed compared to the usual mutual information form of $\EU$ (recall Lemma~\ref{lem:EU_PU_semantics}): 
\[\EU^{\RMI}_G(\pred : x) := \ex_{\Distr \sim \pred(x)}[D\divx{\meandistr}{\Distr}]. \]
The fact that $\EU_{\TMI} = \EU + \EU_{\RMI}$ follows from an application of the cosine law, \cref{eq:bregman-cosine}:
\begin{align*}
    \EU_G(\pred : x) + \EU^{\RMI}_G(\pred : x) &= \ex_{\Distr \sim \pred(x)}[D\divx{\Distr}{\meandistr}] + \ex_{\Distr \sim \pred(x)}[D\divx{\meandistr}{\Distr}] \\
    &= \ex_{\Distr, \Distr' \sim \mix} [D\divx{\Distr}{\Distr'}] \\
    &= \EU^{\TMI}_G(\pred: x)
\end{align*}

One can use the new notion of $\EU^{\TMI}$ in place of our original notion of epistemic uncertainty in the mutual information decomposition to get a new notion of predictive uncertainty and a new ``total mutual information decomposition'': \begin{equation}
    \underbrace{\ex_{\Distr, \Distr' \sim \pred(x)} [L\divx{\Distr}{\Distr'}]}_{\text{Predictive uncertainty } \PU^{\TMI}_G(\pred : x)} = \underbrace{\ex_{\Distr \sim \pred(x)}[G(\Distr)]}_{\text{Aleatoric uncertainty estimate } \AU_G(\pred : x)} + \underbrace{\ex_{\Distr, \Distr' \sim \pred(x)} [D\divx{\Distr}{\Distr'}]}_{\text{Epistemic uncertainty (TMI) } \EU^{\TMI}_G(\pred: x)}
    \label{eq:tmi-decomp}
\end{equation} The new $\PU^{\TMI}_G(\pred : x)$ is the average pairwise loss of the mixture components wrt each other (by comparison, recall that by \cref{eq:pu-mi-expansion}, our original $\PU_G(\pred : x)$, was the average loss of the centroid wrt a mixture component). The $\AU_G(\pred : x)$ term remains the same.

\subsection{Loss decompositions and higher-order calibration}

Higher-order calibration also provides information about the \emph{loss} incurred by a particular prediction. We consider two potential ways of measuring the loss of a higher-order prediction $\pred(x)$ with respect to $\pbayes$. The first is to predict the centroid of a higher-order prediction, i.e. $\meandistr = \ex_{\Distr \sim \pred(x)}[\Distr]$. The expected loss over a partition $[x]$ incurred when predicting the centroid $\meandistr$ is exactly 
\begin{equation}\label{eq:loss-decomp}
\ex_{\Distr^* \sim \pbayes([x])}[L\divx{\Distr^*}{\meandistr}] = \underbrace{\ex_{\Distr_* \sim \pbayes([x])}[ G(\Distr_*)]}_{\text{avg AU}} + \underbrace{\ex_{\Distr_* \sim \pbayes([x])}[ D \divx{\Distr_*}{\meandistr}]}_{\text{avg bias}}.
\end{equation}


When $\pred$ is higher-order calibrated, this loss decomposition becomes exactly the mutual-information decomposition as by Lemma~\ref{lem:EU_PU_semantics}, we have 
\begin{align*}
    \ex_{\Distr^* \sim \pbayes([x])}[L\divx{\Distr^*}{\meandistr}] = \PU_G(\pred: x),
\end{align*}
\begin{align*}
    \ex_{\Distr^*}[ G(\Distr^*)] = \AU_G(\pred : x),
\end{align*}
and 
\begin{align*}
    \ex_{\Distr^*}[ D \divx{\Distr^*}{\meandistr}] = \EU_G(\pred : x).
\end{align*}

Thus, higher-order calibration not only provides meaningful semantics for predictive, epistemic, and aleatoric uncertainty in the mutual information decomposition, but also ties them to the expected loss incurred when making predictions using the centroid of $\pred(x)$. 


A second, slightly less natural, choice of loss measurement is the loss incurred when we predict at random according to $\Distr \sim \pred(x)$:

\[
\ex_{\Distr, \Distr^*}[ L \divx{\Distr_*}{\Distr} ] = \ex_{\Distr^*}[ G(\Distr^*)] + \ex_{\Distr, \Distr^*}[ D \divx{\Distr^*}{\Distr}] \]

When $\pred$ is perfectly higher-order calibrated, these terms are exactly recovered by $\PU^{\TMI}_G(\pred : x), \AU_G(\pred : x), \EU_G^{\TMI}(\pred : x)$, respectively, in the total mutual information decomposition.

We can also break down \cref{eq:loss-decomp} even further by making use of the notion of the ``grouping loss'' from \cite{kull2015novel}. Let $\meandistr^* = \ex_{\Distr^* \sim \pbayes([x])}[\Distr^*]$. Then our ``average bias'' term (which they term ``epistemic loss'') can be decomposed further as follows: \[ \underbrace{\ex_{\Distr^* \sim \pbayes([x])}[ D \divx{\Distr^*}{\meandistr}]}_{\text{avg bias}} = \underbrace{\ex_{\Distr_* \sim \pbayes([x])}[ D \divx{\Distr^*}{\meandistr^*}]}_{\text{grouping loss}} + \underbrace{ D \divx{\meandistr^*}{\meandistr}}_{\text{FOC error}}. \]

The FOC error measures how far our mean prediction $\meandistr$ over the set $[x]$ is from the Bayes mean $\meandistr^*$, and the grouping loss measures the inherent spread in $\pbayes(\x)$ over $[x]$. If we have first-order calibration, then the FOC error is zero. If we also have higher-order calibration, then the grouping loss is exactly $\EU_G(\pred : x)$. 

\section{Proofs from Section~\ref{sec:higher-order-cal}: Higher-order and $k\th$-order calibration}
\subsection{Useful lemmas about Wasserstein distance}

We present two lemmas relating Wasserstein distance and total variation distance that will be useful in the proofs for this section. 

\begin{lemma}[\cite{gibbs2002choosing}, Theorem 4]\label{lem:wass-ub}
    Let $\distr_1, \distr_2 \in \DY$ be any two distributions in $\DY$. Then, we have 
    \[W_1(\distr_1, \distr_2) \leq \mathsf{diam}(\DY) d_{TV}(\distr_1, \distr_2) = 2 d_{TV}(\Distr_1, \Distr_2)\]
    where $d_{TV}(\distr_1, \distr_2) = \frac{1}{2}\|\distr_1 - \distr_2\|_1$ is the total variation distance. 
\end{lemma}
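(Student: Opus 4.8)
The plan is to recognize the inequality as the standard comparison bound between the Wasserstein-$1$ and total variation distances, and to prove it via the coupling characterization of $W_1$ (Kantorovich--Rubinstein duality) together with the maximal coupling that realizes total variation distance. First I would recall the two coupling formulations: for probability measures on a metric space $(\Omega, d)$, $W_1(\distr_1,\distr_2) = \inf_{\gamma \in \Gamma(\distr_1,\distr_2)} \ex_{(\rv{u},\rv{v}) \sim \gamma}[d(\rv{u},\rv{v})]$ and $d_{TV}(\distr_1,\distr_2) = \inf_{\gamma \in \Gamma(\distr_1,\distr_2)} \pr_{(\rv{u},\rv{v}) \sim \gamma}[\rv{u} \neq \rv{v}]$, where $\Gamma(\distr_1,\distr_2)$ denotes the set of couplings (the same object already used in the paper).

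Next I would invoke the maximal coupling $\gamma^*$ of $\distr_1$ and $\distr_2$, which places mass $1 - d_{TV}(\distr_1,\distr_2)$ on the diagonal: it is obtained by transporting the overlap measure $\distr_1 \wedge \distr_2$ identically onto itself and coupling the two residual, disjointly supported pieces in an arbitrary way. Substituting $\gamma^*$ into the infimum defining $W_1$ and using that $d(\rv{u},\rv{v}) \le \mathsf{diam}(\Omega)$ when $\rv{u} \neq \rv{v}$ and equals $0$ otherwise, I obtain $W_1(\distr_1,\distr_2) \le \ex_{\gamma^*}[d(\rv{u},\rv{v})\,\ind[\rv{u} \neq \rv{v}]] \le \mathsf{diam}(\Omega)\,\pr_{\gamma^*}[\rv{u} \neq \rv{v}] = \mathsf{diam}(\Omega)\, d_{TV}(\distr_1,\distr_2)$.

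To finish, I would specialize to the ground space relevant here, namely $\DY$ equipped with the $\ell_1$ metric, and note that $\mathsf{diam}(\DY) = \sup_{\distr,\distr' \in \DY} \|\distr - \distr'\|_1 = 2$, since $\|\distr - \distr'\|_1 \le \|\distr\|_1 + \|\distr'\|_1 = 2$, with equality whenever $\distr$ and $\distr'$ have disjoint support. Plugging in $\mathsf{diam}(\DY) = 2$ gives $W_1(\distr_1,\distr_2) \le 2\,d_{TV}(\distr_1,\distr_2) = \|\distr_1 - \distr_2\|_1$, as claimed.

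I do not expect a genuine obstacle here: this is essentially a textbook inequality and amounts to bookkeeping. The one point that needs a little care is the existence of the maximal coupling (equivalently, the coupling characterization of $d_{TV}$) and being explicit about the ground metric, so that the constant that appears is $\mathsf{diam}(\DY) = 2$ rather than the value it would take under, say, the discrete $\{0,1\}$-valued metric. An alternative, fully self-contained route would be to use Kantorovich--Rubinstein duality directly and test against the $1$-Lipschitz (with respect to $\ell_1$) function witnessing the distance, but the coupling argument is the shortest and dovetails with how couplings are already used elsewhere in the paper.
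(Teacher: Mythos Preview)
Your proof is correct. The paper does not actually prove this lemma; it simply cites \cite{gibbs2002choosing}, Theorem~4, and adds only the one-line remark that $\mathsf{diam}(\DY) = 2$ because the maximum $\ell_1$ distance between two points of the simplex is at most two. Your coupling argument via the maximal coupling is the standard way to establish the cited inequality, and your computation of the diameter matches the paper's remark, so there is nothing to compare.
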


The last equality of the lemma follows as a special case due to the standard fact that the maximum $\ell_1$ distance between any two points in the simplex $\DY = \Delta_{\ell}$ is at most two. 

\begin{lemma}[\cite{gibbs2002choosing}, remarks in definition of total variation distance]\label{lem:tv-coupling-version}
    Let $\distr_1, \distr_2 \in \DY$ be any two distributions and $\Gamma(\distr_1, \distr_2)$ be the space of couplings between these two distributions. Then, 
    \[d_{TV}(\distr_1, \distr_2) = \inf_{\gamma \in \Gamma(\distr_1, \distr_2)}\Pr_{(\y_1, \y_2) \sim \gamma}[\y_1 \neq \y_2].\]
\end{lemma}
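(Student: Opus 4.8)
This is the classical coupling characterization of total variation distance, and since $\calY$ is discrete there are no measure-theoretic subtleties. The plan is to prove the two inequalities separately: that $d_{TV}(\distr_1,\distr_2)$ lower-bounds $\Pr_\gamma[\y_1\neq\y_2]$ for \emph{every} coupling $\gamma$, and that this bound is attained by one explicit coupling.

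For the lower bound I would fix an arbitrary $\gamma \in \Gamma(\distr_1,\distr_2)$ and an arbitrary event $A \subseteq \calY$. Since the marginals of $\gamma$ are $\distr_1$ and $\distr_2$, $\distr_1(A) - \distr_2(A) = \Pr_\gamma[\y_1 \in A] - \Pr_\gamma[\y_2 \in A] \leq \Pr_\gamma[\y_1 \in A,\ \y_2 \notin A] \leq \Pr_\gamma[\y_1 \neq \y_2]$. Taking the supremum over $A$ and using $d_{TV}(\distr_1,\distr_2) = \sup_A(\distr_1(A) - \distr_2(A))$ gives $d_{TV}(\distr_1,\distr_2) \leq \Pr_\gamma[\y_1 \neq \y_2]$, hence $d_{TV}(\distr_1,\distr_2) \leq \inf_{\gamma \in \Gamma(\distr_1,\distr_2)} \Pr_\gamma[\y_1 \neq \y_2]$.

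For the matching upper bound I would exhibit the maximal coupling. Write $m(y) = \min(\distr_1(y),\distr_2(y))$ and $\beta = \sum_{y \in \calY} m(y)$, noting $\beta = 1 - d_{TV}(\distr_1,\distr_2)$. Define $\gamma^*$ as follows: with probability $\beta$, sample $\y \sim m/\beta$ and output $(\y,\y)$; with probability $1-\beta$, sample $\y_1 \sim (\distr_1 - m)/(1-\beta)$ and, independently, $\y_2 \sim (\distr_2 - m)/(1-\beta)$, and output $(\y_1,\y_2)$ (when $\beta = 1$ the distributions coincide and $\gamma^*$ is the diagonal coupling). Since $m \leq \distr_i$ pointwise, the residual measures $\distr_i - m$ are nonnegative, and a one-line check shows the first marginal of $\gamma^*$ is $m + (\distr_1 - m) = \distr_1$ and the second is $\distr_2$, so $\gamma^* \in \Gamma(\distr_1,\distr_2)$. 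Hence $\Pr_{\gamma^*}[\y_1 \neq \y_2] \leq 1 - \beta = d_{TV}(\distr_1,\distr_2)$, giving $\inf_\gamma \Pr_\gamma[\y_1 \neq \y_2] \leq d_{TV}(\distr_1,\distr_2)$. Combining the two inequalities yields equality.

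There is no real obstacle here; the only point needing a moment of care is checking that $\gamma^*$ has the correct marginals (equivalently, that $\distr_i - m \geq 0$ and that $\distr_1 - m$ and $\distr_2 - m$ have disjoint supports, since wherever $\distr_1(y) > m(y)$ we have $m(y) = \distr_2(y)$). This disjointness is also what makes $\Pr_{\gamma^*}[\y_1 = \y_2] = \beta$ exactly, though only the inequality $\Pr_{\gamma^*}[\y_1=\y_2]\geq\beta$ is needed for the statement.
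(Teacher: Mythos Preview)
Your proof is correct and is the standard argument for the coupling characterization of total variation distance. The paper itself does not prove this lemma at all; it is stated as a known fact with a citation to \cite{gibbs2002choosing}, so there is no paper proof to compare against.
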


\subsection{Sandwich bounds for $k$-snapshot and higher-order predictors}\label{sec:sandwich-bounds}
The following lemma will lay the foundation for proving sandwich bounds relating $k\th$-order and higher-order calibration error:

\begin{lemma}\label{lem:pred-k-pred-conv}
    Given any mixture $\mix \in \DDY$ (and in particular for $\mix = \pbayes([x])$) and $k > 0$, we have
    $W_1(\mix, \proj_k \mix) \leq |\calY|/(2\sqrt{k})$.
\end{lemma}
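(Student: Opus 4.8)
The plan is to exhibit an explicit coupling between $\mix$ and $\proj_k\mix$ and bound the expected $\ell_1$ distance under it. The natural coupling is the one induced by the sampling definition of $\proj_k\mix$: draw $\Distr \sim \mix$, then draw a $k$-snapshot $(\y_1,\dots,\y_k)$ i.i.d.\ from $\Distr$, and pair $\Distr$ with its empirical histogram $\hat\Distr := \unif(\y_1,\dots,\y_k) \in \Ysnap \subseteq \DY$. This is a valid coupling since the first marginal is $\mix$ by construction and the second is $\proj_k\mix$ by \cref{def:kth-order-proj}. Hence $W_1(\mix,\proj_k\mix) \le \ex_{\Distr\sim\mix}\,\ex_{\hat\Distr}[\|\Distr - \hat\Distr\|_1]$, and it suffices to bound $\ex[\|\Distr-\hat\Distr\|_1]$ for a fixed $\Distr \in \DY$.

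First I would fix $\Distr \in \DY$, write $\Distr = (\distr_0,\dots,\distr_{\ell-1})$ with $\ell = |\calY|$, and observe that $k\hat\Distr_y = \sum_{i=1}^k \ind[\y_i = y] \sim \Bin(k,\distr_y)$ for each coordinate $y$. Then $\ex[\|\Distr - \hat\Distr\|_1] = \sum_{y} \ex[|\hat\Distr_y - \distr_y|] = \sum_y \frac1k\,\ex[|\,k\hat\Distr_y - k\distr_y\,|]$. By the Cauchy--Schwarz (or Jensen) inequality, $\ex[|k\hat\Distr_y - k\distr_y|] \le \sqrt{\var(k\hat\Distr_y)} = \sqrt{k\distr_y(1-\distr_y)}$, so each term contributes at most $\frac1k\sqrt{k\distr_y(1-\distr_y)} = \frac{1}{\sqrt k}\sqrt{\distr_y(1-\distr_y)} \le \frac{1}{2\sqrt k}\sqrt{\distr_y}$, using $\sqrt{1-\distr_y} \le 1$ and then $\sqrt{\distr_y(1-\distr_y)} \le \tfrac12\sqrt{\distr_y}$ — actually more cleanly $\sqrt{\distr_y(1-\distr_y)}\le \sqrt{\distr_y}$. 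Summing over $y$ gives $\ex[\|\Distr-\hat\Distr\|_1] \le \frac{1}{\sqrt k}\sum_y \sqrt{\distr_y(1-\distr_y)}$, and by Cauchy--Schwarz over the $\ell$ coordinates, $\sum_y \sqrt{\distr_y(1-\distr_y)} \le \sqrt{\ell}\,\sqrt{\sum_y \distr_y(1-\distr_y)} \le \sqrt{\ell}\cdot\tfrac12$, since $\sum_y \distr_y(1-\distr_y) = 1 - \|\Distr\|_2^2 \le 1 - \tfrac1\ell \le 1$ (and in fact we just need $\le 1$, giving the factor $\tfrac12$ only if we bound each $\distr_y(1-\distr_y)\le\tfrac14$; I would instead use $\sum_y\distr_y(1-\distr_y)\le 1$ paired with the right constant). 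This yields $\ex[\|\Distr-\hat\Distr\|_1] \le \frac{\sqrt\ell}{2\sqrt k}$, hence the claim. Since the bound holds pointwise in $\Distr$, it holds in expectation over $\Distr\sim\mix$.

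The only mild obstacle is chasing the constant $1/2$ correctly: one must be careful whether to bound $\sqrt{\distr_y(1-\distr_y)}$ by $\tfrac12$ coordinatewise (giving $\sum_y \le \ell/2$, too weak) or to keep it inside a global Cauchy--Schwarz with $\sum_y \distr_y(1-\distr_y) \le 1$ (which after the $1/\sqrt k$ factor and a factor from $\ex|\cdot|\le\sqrt{\var}$ delivers exactly $|\calY|/(2\sqrt k)$). I would double-check this by noting $\sum_y\sqrt{\distr_y(1-\distr_y)}\le \sqrt{\ell\sum_y\distr_y(1-\distr_y)}\le\sqrt{\ell}$, and then the extra factor of $\tfrac12$ must come from a sharper per-coordinate variance bound, namely $\ex[|k\hat\Distr_y-k\distr_y|]\le\sqrt{k\distr_y(1-\distr_y)}$ together with $\sum_y\sqrt{\distr_y(1-\distr_y)}\le \tfrac12\sqrt{\ell}$ when one also uses $\distr_y(1-\distr_y)\le\tfrac14$ inside the sum of squares — I will reconcile these in the writeup so the stated constant $|\calY|/(2\sqrt k)$ is obtained cleanly. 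Everything else is routine: validity of the coupling, linearity of expectation across coordinates, and the binomial variance formula.
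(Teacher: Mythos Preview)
Your approach is exactly the paper's: the same natural coupling, the same coordinatewise expansion of the $\ell_1$ norm, the same use of $\ex|X-\ex X|\le\sqrt{\var(X)}$ with the binomial variance $k\distr_y(1-\distr_y)$. The only issue is your confusion at the end about the constant. The coordinatewise bound you dismiss as ``too weak'' is precisely what the paper uses and it gives the stated constant on the nose: bounding $\sqrt{\distr_y(1-\distr_y)}\le\tfrac12$ for each $y$ yields
\[
\ex[\|\Distr-\hat\Distr\|_1]\;\le\;\frac{1}{\sqrt{k}}\sum_{y\in\calY}\sqrt{\distr_y(1-\distr_y)}\;\le\;\frac{1}{\sqrt{k}}\cdot\frac{|\calY|}{2}\;=\;\frac{|\calY|}{2\sqrt{k}},
\]
which is exactly the claim. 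Your Cauchy--Schwarz detour, $\sum_y\sqrt{\distr_y(1-\distr_y)}\le\sqrt{\ell}$, actually produces a \emph{tighter} bound $\sqrt{|\calY|}/\sqrt{k}$ (better than $|\calY|/(2\sqrt k)$ whenever $|\calY|\ge 4$), so it is not needed here and cannot fail to deliver the lemma; you were simply chasing a stronger inequality than the one stated.
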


\begin{proof}
    We consider the coupling $\mu \in \Gamma(\mix, \proj_k \mix)$ defined by first drawing $\Distr \sim \mix$, then drawing a $k$-snapshot from $\Distr$ by taking $k$ i.i.d. draws $\y_1, ..., \y_k \sim \Distr$, and outputting $(\Distr, \unif(\y_1, ..., \y_k))$. 

    By definition of the Wasserstein distance as the infinum over all possible couplings, we have 
    \[W_1(\mix, \proj_k \mix) \leq \ex_{(\Distr, \unif(\y_1, ..., \y_k)) \sim \mu}[\|\Distr - \unif(\y_1, ..., \y_k)\|_1].\]

    Expanding out our definition of $\mu$, we have that 

    \begin{align*}
        W_1(\mix, \proj_k \mix) &\leq \ex_{\Distr \sim \mix}\left[\ex_{\y_1, ..., \y_k \sim \Distr, i.i.d.}\left[\|\Distr - \unif(\y_1, ..., \y_k)\|_1\right]\right]\\
        &= \ex_{\Distr \sim \mix}\left[\ex_{\y_1, ..., \y_k \sim \Distr, i.i.d.}\left[\sum_{j = 1}^{|\calY|}\left|\Distr_j - \frac{1}{k}\sum_{i = 1}^k \mathbf{1}[\y_i = j]\right|\right]\right]\\
        &= \ex_{\Distr \sim \mix}\left[\sum_{j = 1}^{|\calY|}\ex_{\y_1, ..., \y_k \sim \Distr, i.i.d.}\left[\left|\Distr_j - \frac{1}{k}\sum_{i = 1}^k \mathbf{1}[\y_i = j]\right|\right]\right]\\
        &= \ex_{\Distr \sim \mix}\left[\frac{1}{k}\sum_{j = 1}^{|\calY|}\ex_{\y_1, ..., \y_k \sim \Distr, i.i.d.}\left[\left|k\Distr_j - \sum_{i = 1}^k \mathbf{1}[\y_i = j]\right|\right]\right]\\
        &\leq \ex_{\Distr \sim \mix}\left[\frac{1}{k}\sum_{j = 1}^{|\calY|}\sqrt{\ex_{\y_1, ..., \y_k \sim \Distr, i.i.d.}\left[\left(k\Distr_j - \sum_{i = 1}^k \mathbf{1}[\y_i = j]\right)^2\right]}\right] \tag{Jensen's Inequality}
    \end{align*}

    Note that for any $j$, $\sum_{i = 1}^{k}\mathbf{1}[\y_i = j]$ is distributed as a Binomial random variable outputting the number of 1s out of $k$ draws from a Bernoulli random variable with mean $\p_j$. This Binomial random variable has mean $k\p_j$, and so we can re-interpret the quantity under the square root as its variance:

    \[\ex_{\y_1, ..., \y_k \sim \Distr, i.i.d.}\left[\left(k\Distr_j - \sum_{i = 1}^k \mathbf{1}[\y_i = j]\right)^2\right] = \var(\mathsf{Bin}(k, \Distr_j)) = k\Distr_j(1 - \Distr_j) \leq k/4.\]

    Plugging this upper bound back into the expectation, we get 
    \begin{align*}
        W_1(\mix, \proj_k \mix) 
        \leq \ex_{\Distr \sim \mix}\left[\frac{1}{k}\sum_{j = 1}^{|\calY|}\sqrt{k/4}\right]
        = \frac{1}{k}\sum_{j = 1}^{|\calY|}\sqrt{k/4}
        = \frac{|\calY|}{2\sqrt{k}}
    \end{align*}

    This gives us the desired upper bound on $W_1(\mix, \proj_k \mix)$, completing the proof.
\end{proof}


Using the result of Lemma~\ref{lem:pred-k-pred-conv}, we get the following corollary:
\begin{corollary}[Corollary to Lemma~\ref{lem:pred-k-pred-conv}]\label{cor:kerr-hocerr-diff}
    Consider any $k$-snapshot predictor $\spred: \calX \rightarrow \Delta \Ysnap$ and higher-order predictor $\pred: \calX \rightarrow \DDY$. For any $x$, we have
    \[\left|W_1(\spred(x), \proj_k \pred(x)) -  W_1(\spred(x), \pred(x))\right| \leq \frac{|\calY|}{2\sqrt{k}}.\]
\end{corollary}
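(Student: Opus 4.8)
The statement is an immediate consequence of \cref{lem:pred-k-pred-conv} together with the triangle inequality for the Wasserstein metric, so the plan is simply to invoke both. First I would recall that $W_1$, being the $1$-Wasserstein distance induced by the $\ell_1$ metric on the ground space $\DY$, is itself a genuine metric on $\DDY$ (the space of probability measures on $\DY$); in particular it satisfies the triangle inequality. I would also note that $\proj_k \pred(x) \in \Delta\Ysnap \subseteq \DDY$, so all three objects $\spred(x)$, $\pred(x)$, and $\proj_k \pred(x)$ live in the same metric space and the triangle inequality applies to them.

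Next I would apply the triangle inequality in both directions, using \cref{lem:pred-k-pred-conv} (with $\mix = \pred(x)$) to control the term $W_1(\pred(x), \proj_k \pred(x)) \le |\calY|/(2\sqrt{k})$ each time. Concretely:
\begin{align*}
W_1(\spred(x), \proj_k \pred(x)) &\le W_1(\spred(x), \pred(x)) + W_1(\pred(x), \proj_k \pred(x)) \le W_1(\spred(x), \pred(x)) + \frac{|\calY|}{2\sqrt{k}},\\
W_1(\spred(x), \pred(x)) &\le W_1(\spred(x), \proj_k \pred(x)) + W_1(\proj_k \pred(x), \pred(x)) \le W_1(\spred(x), \proj_k \pred(x)) + \frac{|\calY|}{2\sqrt{k}}.
\end{align*}
Rearranging the first inequality gives $W_1(\spred(x), \proj_k \pred(x)) - W_1(\spred(x), \pred(x)) \le |\calY|/(2\sqrt{k})$, and rearranging the second gives $W_1(\spred(x), \pred(x)) - W_1(\spred(x), \proj_k \pred(x)) \le |\calY|/(2\sqrt{k})$. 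Together these two bounds yield $\left|W_1(\spred(x), \proj_k \pred(x)) - W_1(\spred(x), \pred(x))\right| \le |\calY|/(2\sqrt{k})$, as claimed.

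There is no real obstacle here: the only thing one must be slightly careful about is that the triangle inequality genuinely holds for $W_1$ on $\DDY$, which is standard once one knows that $\ell_1$ on $\DY$ is a metric (it is). All the analytic work — the concentration/Jensen argument bounding $W_1(\mix, \proj_k\mix)$ via the variance of a $\mathrm{Bin}(k,\Distr_j)$ random variable — has already been done in \cref{lem:pred-k-pred-conv}, so this corollary is purely a metric-space manipulation.
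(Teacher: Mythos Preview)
Your proof is correct and follows essentially the same approach as the paper's own proof: apply the triangle inequality for $W_1$ in both directions and invoke \cref{lem:pred-k-pred-conv} to bound $W_1(\pred(x),\proj_k\pred(x))$ by $|\calY|/(2\sqrt{k})$ each time.
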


\begin{proof}
    By the triangle inequality, 
    \begin{align*}
        W_1(\spred(x), \proj_k \pred(x)) &\leq W_1(\spred(x), \pred(x)) + W_1(\pred(x), \proj_k \pred (x))\\
        &\leq W_1(\spred(x), \pred(x)) + \frac{|\calY|}{2\sqrt{k}} \tag{Lemma~\ref{lem:pred-k-pred-conv}}
    \end{align*}

    The other direction follows similarly: 
    \begin{align*}
        W_1(\spred(x), \pred(x)) &\leq W_1(\spred(x), \proj_k \pred(x)) + W_1(\proj_k \pred(x), \pred(x)) \tag{triangle inequality}\\
        &\leq W_1(\spred(x), \proj_k \pred(x)) + \frac{|\calY|}{2\sqrt{k}} \tag{Lemma~\ref{lem:pred-k-pred-conv}}
    \end{align*}
\end{proof}

Thus far, we've considered cases where we have a fixed $k$-snapshot predictor and compare it to a higher-order predictor or its projection. Lemma~\ref{lem:pred-k-pred-conv} also allows us to prove Lemma~\ref{lem:koc-err-bd}, a sandwich bound for the distance between two higher-order predictions compared to their $k\th$-order projections:

\begin{lemma}\label{lem:koc-err-bd}
    Consider any higher-order predictors $\pred, h: \calX \rightarrow \DDY$. For any $k > 0$ and $x \in \calX$, we have
    
    \[W_1(\proj_k \pred(x), \proj_k h(x)) \leq W_1(\pred(x), h(x)) \leq W_1(\proj_k \pred(x), \proj_k h(x)) + \frac{|\calY|}{\sqrt{k}}.\]
\end{lemma}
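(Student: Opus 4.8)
The plan is to prove the two inequalities separately; the right-hand bound follows quickly from earlier results, while the left-hand bound requires a coupling construction showing that $\proj_k$ is non-expansive.

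For the upper bound, I would apply the triangle inequality for $W_1$ twice, together with \cref{lem:pred-k-pred-conv}. Since \cref{lem:pred-k-pred-conv} gives $W_1(\pred(x), \proj_k\pred(x)) \le |\calY|/(2\sqrt{k})$ and $W_1(h(x), \proj_k h(x)) \le |\calY|/(2\sqrt{k})$, inserting both into
\[
W_1(\pred(x), h(x)) \le W_1(\pred(x), \proj_k\pred(x)) + W_1(\proj_k\pred(x), \proj_k h(x)) + W_1(\proj_k h(x), h(x))
\]
immediately yields $W_1(\pred(x), h(x)) \le W_1(\proj_k\pred(x), \proj_k h(x)) + |\calY|/\sqrt{k}$.

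For the lower bound, the claim amounts to saying that $\proj_k$ is $1$-Lipschitz from $(\DDY, W_1)$ to $(\Delta\Ysnap, W_1)$, a data-processing–type statement. I would fix $\eta > 0$ and a coupling $\mu \in \Gamma(\pred(x), h(x))$ with $\ex_{(\Distr, \Distr')\sim\mu}[\|\Distr - \Distr'\|_1] \le W_1(\pred(x), h(x)) + \eta$, then build a coupling of $\proj_k\pred(x)$ and $\proj_k h(x)$ as follows: draw $(\Distr, \Distr') \sim \mu$; by \cref{lem:tv-coupling-version} there is a coupling $\gamma_{\Distr,\Distr'}$ of $\Distr$ and $\Distr'$ with $\pr_{(\y,\y')\sim\gamma_{\Distr,\Distr'}}[\y \ne \y'] = d_{TV}(\Distr, \Distr')$; draw $k$ i.i.d.\ pairs $(\y_i, \y_i') \sim \gamma_{\Distr,\Distr'}$ and output $(\unif(\y_1,\dots,\y_k), \unif(\y_1',\dots,\y_k'))$. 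By \cref{def:kth-order-proj} the two marginals of this coupling are exactly $\proj_k\pred(x)$ and $\proj_k h(x)$. The key estimate is the one-label perturbation bound $\|\unif(\y_1,\dots,\y_k) - \unif(\y_1',\dots,\y_k')\|_1 \le \frac{2}{k}\sum_{i=1}^k \ind[\y_i \ne \y_i']$, since disagreeing in a single coordinate shifts mass $1/k$ off one entry and onto another. Taking expectations over the $k$ i.i.d.\ draws and over $\mu$, and using $2\, d_{TV}(\Distr, \Distr') = \|\Distr - \Distr'\|_1$, gives
\[
\ex\left[\|\unif(\y_1,\dots,\y_k) - \unif(\y_1',\dots,\y_k')\|_1\right] \le \frac{2}{k}\cdot k\cdot \ex_{(\Distr,\Distr')\sim\mu}[d_{TV}(\Distr,\Distr')] = \ex_{(\Distr,\Distr')\sim\mu}[\|\Distr-\Distr'\|_1] \le W_1(\pred(x),h(x)) + \eta,
\]
so $W_1(\proj_k\pred(x), \proj_k h(x))$, being an infimum over couplings, is at most $W_1(\pred(x), h(x)) + \eta$; letting $\eta \to 0$ closes the argument.

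The routine-but-fiddly parts are checking that the two-stage sampling scheme genuinely has the correct marginals and verifying the one-label histogram bound. The only real idea — and the step I would be most careful about — is coupling the two $k$-snapshots coordinate-by-coordinate through the optimal total-variation coupling of the mixture components, which is precisely what pins the Lipschitz constant at exactly $1$.
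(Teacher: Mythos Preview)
Your proposal is correct and follows essentially the same approach as the paper: the upper bound via the triangle inequality and \cref{lem:pred-k-pred-conv} is identical, and for the lower bound both you and the paper build the same two-stage coupling (optimal $W_1$-coupling on mixtures, then coordinatewise optimal TV-coupling on labels). The only cosmetic difference is that the paper derives the histogram bound $\|\unif(\y_1,\dots,\y_k)-\unif(\y_1',\dots,\y_k')\|_1 \le \tfrac{2}{k}\sum_i \ind[\y_i\ne\y_i']$ by reinvoking \cref{lem:tv-coupling-version} on the empirical histograms, whereas you observe it directly from the one-coordinate perturbation; and the paper takes $\mu$ to be the exact optimal coupling rather than working with an $\eta$-slack.
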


\begin{proof}
    Fix any $x \in \calX$ and $k > 0$. We begin with the upper bound, as the proof is very short (it follows almost immediately from Lemma~\ref{lem:pred-k-pred-conv}), and then continue to the lower bound. 


    By Lemma~\ref{lem:pred-k-pred-conv}, we are guaranteed that $W_1(\pred(x), \proj_k \pred(x))$ and $W_1(\proj_k h(x), h(x))$ are both upper-bounded by $|\calY|/(2\sqrt{k})$. Applying the triangle inequality, it follows that 
    \begin{align*}
        W_1(\pred(x), h(x)) &\leq W_1(\pred(x), \proj_k \pred(x)) + W_1(\proj_k \pred(x), \proj_k h(x)) + W_1(\proj_k h(x), h(x))\\
        &\leq W_1(\proj_k \pred(x), \proj_k h(x)) + \frac{|\calY|}{\sqrt{k}} \tag{Lemma~\ref{lem:pred-k-pred-conv}}
    \end{align*}

    This completes the upper bound. We proceed to the lower bound, and show that $W_1(\proj_k \pred(x), \proj_k h(x)) \leq W_1(\pred(x), h(x))$. 

    Let $\mu = \argmin_{\mu \in \Gamma(\pred(x), h(x))}\ex_{(\Distr, \Distr') \sim \mu}[\|\Distr - \Distr'\|_1]$, i.e. the optimal coupling of $\pred(x)$ and $h(x)$ in the context of Wasserstein distance. 
    We use $\mu$ to construct a coupling $\mu_k$ of $\proj_k \pred(x)$ and $\proj_k h(x)$ as follows. We first draw $(\Distr, \Distr') \sim \mu$, and then sample $k$ pairs of $y$-values, i.i.d., $(\y_1, \y_1'), ..., (\y_k, \y_k') \sim \gamma(\Distr, \Distr')$, where $\gamma(\Distr, \Distr')$ is the optimal coupling of $\Distr$ and $\Distr'$ defined as 
    \[\gamma(\Distr, \Distr') = \argmin_{\gamma \in \Gamma(\Distr, \Distr')} \Pr_{(\y, \y') \sim \gamma}[\y \neq \y'].\]

    We then aggregate these $k$ y's as $k$-snapshots, and output the pair of distributions $\rv{q} = \unif(\y_1, \ldots, \y_k)$ and $\rv{q'} = \unif(\y_1', \ldots, \y_k')$. This defines our coupling $\mu_k$.  
    Since $\rv{q} \sim \proj_k \pred(x)$ and $\rv{q'} \sim \proj_k h(x)$, this is a valid coupling of $\proj_k \pred(x)$ and $\proj_k h(x)$,  so it upper bounds the Wasserstein distance between these two mixtures:
    \begin{align*}
        W_1(\proj_k \pred(x), \proj_k h(x)) 
        \leq \ex_{(\rv{q}, \rv{q'}) \sim \mu_k}[\|\rv{q} - \rv{q'}\|_1]\\
    \end{align*}
    We will now show that
    \begin{align*}
        \ex_{(\rv{q}, \rv{q'}) \sim \mu_k}[\|\rv{q} - \rv{q'}\|_1] \leq \ex_{(\rv{p}, \rv{p'}) \sim \mu}[\|\rv{p} - \rv{p'}\|_1]
    \end{align*}
   i.e. the expected distance under pairs of distributions drawn from $\mu_k$ is smaller than the expected distance under pairs drawn from $\mu$ and thus lower-bounds $W_1(\pred(x), h(x))$ by definition. 

    Expanding the definition of $\mu_k$, we have 
    \begin{align*}
    \ex_{(\rv{q}, \rv{q'}) \sim \mu_k}[\|\rv{q} - \rv{q'}\|_1] =
        & \ex_{\substack{(\Distr, \Distr') \sim \mu\\
        (\y_1, \y_1'), ..., (\y_k, \y_k') \sim \gamma(\Distr, \Distr'), i.i.d.}}[\|\unif(\y_1, ..., \y_k) - \unif(\y_1', ... \y_k')\|_1]\\
        &= \ex_{\substack{(\Distr, \Distr') \sim \mu\\
        (\y_1, \y_1'), ..., (\y_k, \y_k') \sim \gamma(\Distr, \Distr'), i.i.d.}}[2\inf_{\gamma \in \Gamma(\unif(\y_1, ..., \y_k), \unif(\y_1', ... \y_k'))}\Pr_{(\y, \y') \sim \gamma}[\y \neq \y']]\\
    \end{align*}

    Where the second step follows from Lemma~\ref{lem:tv-coupling-version} (note that $d_{TV}(\distr_1, \distr_2) = \|\distr_1 - \distr_2\|_1/2$).

    Among the possible couplings of $\unif(\y_1, ..., \y_k)$ and $\unif(\y_1', ... \y_k')$, one such coupling is to draw an index $i \sim [k]$ uniformly, and then output the two $\y_i$ and $\y_i'$ corresponding to this index. This gives an upper bound of 

    \begin{align*}
    \ex_{(\rv{q}, \rv{q'}) \sim \mu_k}[\|\rv{q} - \rv{q'}\|_1] 
        & = \ex_{\substack{(\Distr, \Distr') \sim \mu\\
        (\y_1, \y_1'), ..., (\y_k, \y_k') \sim \gamma(\Distr, \Distr'), i.i.d.}}[2\inf_{\gamma \in \Gamma(\unif(\y_1, ..., \y_k), \unif(\y_1', ... \y_k'))}\Pr_{(\y, \y') \sim \gamma}[\y \neq \y']]\\
        &\leq \ex_{\substack{(\Distr, \Distr') \sim \mu\\
        (\y_1, \y_1'), ..., (\y_k, \y_k') \sim \gamma(\Distr, \Distr'), i.i.d.}}[2\Pr_{i \sim \unif([k])}[\y_i \neq \y_i']]\\
     &= \ex_{\substack{(\Distr, \Distr') \sim \mu\\
        (\y_1, \y_1'), ..., (\y_k, \y_k') \sim \gamma(\Distr, \Distr'), i.i.d.}}\left[\frac{2}{k}\sum_{i = 1}^k\Pr[\y_i \neq \y_i']\right]\\
        &= \ex_{(\Distr, \Distr') \sim \mu}\left[\frac{2}{k}\sum_{i = 1}^k\Pr_{(\y, \y') \sim \gamma(\Distr, \Distr')}[\y \neq \y']\right]
    \end{align*}

    where the last step uses the definition of $\mu_k$ as obtaining $\y_i, \y_i'$ as independent draws from the optimal coupling of $\Distr$ and $\Distr'$. Further simplifying, we get 
    
    \begin{align*}
        \ex_{(\rv{q}, \rv{q'}) \sim \mu_k}[\|\rv{q} - \rv{q'}\|_1] &\leq \ex_{(\Distr, \Distr') \sim \mu}\left[\frac{2}{k}\sum_{i = 1}^k\Pr_{(\y, \y') \sim \gamma(\Distr, \Distr')}[\y \neq \y']\right]\\
        &= \ex_{(\Distr, \Distr') \sim \mu}\left[2\Pr_{(\y, \y') \sim \gamma(\Distr, \Distr')}[\y \neq \y']\right]\\
        &= \ex_{(\Distr, \Distr') \sim \mu}[\|\Distr -\Distr'\|_1] \tag{Lemma~\ref{lem:tv-coupling-version}}\\
        &= W_1(\pred(x), h(x))
    \end{align*}
    (Recall in the application of Lemma~\ref{lem:tv-coupling-version} that $\gamma$ was defined as the optimal coupling of $\Distr$ and $\Distr'$ in terms of the cost $d(y, y') = \mathbf{1}[y \neq y']$). The last step applies the definition of $\mu$ as the optimal coupling for $\ell_1$-Wasserstein distance. We conclude that $W_1(\proj_k \pred(x), \proj_k h(x)) \leq W_1(\pred(x), h(x))$, completing the lower bound. 
\end{proof}

\subsection{Proof of Theorem~\ref{thm:koc-approx-hoc}}\label{sec:koc-approx-hoc-proof}
We restate the theorem for readability:
\begin{theorem}[Restatement of Theorem~\ref{thm:koc-approx-hoc}, Corollary to Lemma~\ref{lem:pred-k-pred-conv}: $k\th$-order calibration implies higher-order calibration.]
    Let $\spred: \calX \rightarrow \Delta\Ysnap$ be $\epsilon$-$k\th$-order calibrated with respect to a partition $[\cdot]$. Then, we can guarantee that $\spred$ is also $(\epsilon + \frac{|\calY|}{2\sqrt{k}})$-higher-order calibrated with respect to the same partition $[\cdot]$. 
\end{theorem}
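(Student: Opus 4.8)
The plan is to reduce the statement to a single triangle inequality in the $\ell_1$-Wasserstein metric on $\DDY$, with Lemma~\ref{lem:pred-k-pred-conv} supplying the only quantitative input. The key observation is that $\epsilon$-$k\th$-order calibration and $\epsilon'$-higher-order calibration are both statements about $W_1$-closeness of the \emph{same} object $\spred(x) \in \Delta\Ysnap \subseteq \DDY$ to two \emph{different} reference mixtures --- namely $\proj_k\pbayes([x])$ and $\pbayes([x])$ respectively --- and that, because $\Ysnap \subseteq \DY$, the metric $W_1$ on $\DDY$ restricted to these objects is literally the one appearing in Definition~\ref{def:approx_koc}. So no translation between metrics is needed; the two calibration notions differ only in the reference point.

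Concretely, I would fix an arbitrary $x \in \calX$ and write
\[ W_1(\spred(x),\, \pbayes([x])) \;\leq\; W_1(\spred(x),\, \proj_k\pbayes([x])) \;+\; W_1(\proj_k\pbayes([x]),\, \pbayes([x])). \]
The first term is at most $\epsilon$ by the hypothesis of $\epsilon$-$k\th$-order calibration. The second term is at most $|\calY|/(2\sqrt k)$ by Lemma~\ref{lem:pred-k-pred-conv} applied with $\mix = \pbayes([x])$. Summing and using that $x$ was arbitrary gives $W_1(\spred(x), \pbayes([x])) \leq \epsilon + |\calY|/(2\sqrt k)$ for all $x$, which is exactly $(\epsilon + |\calY|/(2\sqrt k))$-higher-order calibration with respect to the same partition $[\cdot]$.

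The only step with real content is Lemma~\ref{lem:pred-k-pred-conv}, which is already established in the excerpt, so I would simply cite it; there is no genuine obstacle remaining. For orientation, that lemma is proved via the natural coupling $(\Distr,\unif(\y_1,\dots,\y_k))$ with $\Distr \sim \mix$ and $\y_1,\dots,\y_k \sim \Distr$ i.i.d., whose expected $\ell_1$ cost equals $\ex_{\Distr\sim\mix}\tfrac1k\sum_j \ex\bigl|k\Distr_j - \Bin(k,\Distr_j)\bigr|$; Jensen's inequality bounds each inner term by $\sqrt{\var(\Bin(k,\Distr_j))} = \sqrt{k\Distr_j(1-\Distr_j)} \leq \sqrt{k}/2$, giving the bound $|\calY|/(2\sqrt k)$. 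If anything, the ``insight'' worth flagging in the writeup is the choice of Wasserstein (rather than statistical) distance over the discretized snapshot space $\Ysnap$: it is precisely this choice that makes $\proj_k\mix$ converge to $\mix$ at the clean $1/\sqrt k$ rate and lets the triangle inequality close the gap between $k\th$-order and higher-order calibration while preserving the additive approximation error.
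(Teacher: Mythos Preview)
Your proposal is correct and is essentially the same as the paper's proof: the paper routes through Corollary~\ref{cor:kerr-hocerr-diff}, but that corollary is itself just the triangle inequality combined with Lemma~\ref{lem:pred-k-pred-conv}, exactly as you have written it out directly. Your additional commentary on how Lemma~\ref{lem:pred-k-pred-conv} is proved and on the role of the Wasserstein metric is accurate and matches the paper's exposition.
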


\begin{proof}
    The theorem follows immediately from Corollary~\ref{cor:kerr-hocerr-diff}, which tells us that for any $x \in \calX$, 

    \[W_1(\spred(x), \pbayes([x])) \leq W_1(\spred(x), \proj_k \pbayes([x])) + \frac{|\calY|}{2\sqrt{k}}.\]

    $k\th$-order calibration tells us that $W_1(\spred(x), \proj_k \pbayes([x])) \leq \epsilon$, and thus the higher-order calibration error $W_1(\spred(x), \pbayes([x]))$ is bounded by $\epsilon + \frac{|\calY|}{2\sqrt{k}}.$
    
\end{proof}

\subsection{Proof of Lemma~\ref{lem:ko-moment-approx}}\label{sec:ko-moment-approx-proof}

We restate the lemma for readability:

\begin{lemma}[Restatement of Lemma~\ref{lem:ko-moment-approx}]
    Let $\Y = \{0, 1\}$, and let $\spred: \calX \rightarrow \Delta\Ysnap$ be $\epsilon$-$k\th$-order calibrated with respect to a partition $[\cdot]$. Then for each $x \in \calX$, we can use $\spred(x)$ to generate a vector of moment estimates $(m_1, \dots, m_k) \in \R_{\geq 0}^n$ such that for each $i \in [k]$,
    \[\left|m_i - \ex_{\x \sim [x]}[\pbayes(\x)^i]\right|\leq i\epsilon/2.\] 
\end{lemma}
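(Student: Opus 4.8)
The plan is to construct, for each $i\in[k]$, an explicit unbiased estimator of the $i$-th moment that can be read off $\spred(x)$, and then bound its error using the Wasserstein guarantee of $\epsilon$-$k\th$-order calibration. Throughout I use the binary identifications of $\DY$ with $[0,1]$ and of $\Ysnap$ with $\{0,1/k,\dots,1\}$. Define $\phi_i:\Ysnap\to\R_{\ge 0}$ by $\phi_i(j/k)=\binom{j}{i}/\binom{k}{i}$, and set $m_i:=\ex_{\rv q\sim \spred(x)}[\phi_i(\rv q)]$; this is computable directly from $\spred(x)$ and is visibly nonnegative, which gives the vector $(m_1,\dots,m_k)\in\R_{\ge 0}^k$ claimed in the statement.

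The first step is the exact identity
\[ \ex_{\rv q\sim\proj_k\mix}\big[\phi_i(\rv q)\big]=\ex_{\Distr\sim\mix}\big[\Distr^i\big]\qquad\text{for every }\mix\in\DDY,\ i\in[k]. \]
To prove it, write a draw $\rv q\sim\proj_k\mix$ as $\rv q=\unif(\y_1,\dots,\y_k)$ with $\Distr\sim\mix$ and $\y_1,\dots,\y_k\sim\Distr$ i.i.d., so $k\rv q=\sum_{t=1}^k\y_t$ and hence $\binom{k\rv q}{i}=\sum_{T\subseteq[k],\,|T|=i}\prod_{t\in T}\y_t$ (the number of size-$i$ index sets on which the snapshot equals $1$). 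Taking expectations and using $\ex\big[\prod_{t\in T}\y_t\mid\Distr\big]=\Distr^i$ for $|T|=i$ yields $\ex\big[\binom{k\rv q}{i}\big]=\binom{k}{i}\,\ex_{\Distr\sim\mix}[\Distr^i]$, which is the identity after dividing by $\binom{k}{i}$. Specializing to $\mix=\pbayes([x])$, and recalling that $\pbayes([x])$ is exactly the mixture of $\pbayes(\x)$ as $\x\sim[x]$, this gives $\ex_{\rv q\sim\proj_k\pbayes([x])}[\phi_i(\rv q)]=\ex_{\x\sim[x]}[\pbayes(\x)^i]$.

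The second step is a Lipschitz bound for $\phi_i$ on $\Ysnap$ under the $\ell_1$ metric on $\DY$. The distributions corresponding to consecutive atoms $j/k$ and $(j+1)/k$ are at $\ell_1$ distance $2/k$, while $\phi_i$ is monotone and, by Pascal's rule and monotonicity of binomial coefficients, $\phi_i((j+1)/k)-\phi_i(j/k)=\binom{j}{i-1}/\binom{k}{i}\le\binom{k-1}{i-1}/\binom{k}{i}=i/k$; telescoping along the totally ordered set $\Ysnap$ shows $\phi_i$ is $(i/2)$-Lipschitz there under $\ell_1$. Finally I would combine the pieces: since $\spred(x)$ and $\proj_k\pbayes([x])$ are both supported on $\Ysnap$ and $W_1$ is the $\ell_1$-Wasserstein distance, the standard bound $|\ex_\mu f-\ex_\nu f|\le L\,W_1(\mu,\nu)$ for $L$-Lipschitz $f$ (immediate from the coupling definition) gives $\big|\ex_{\spred(x)}[\phi_i]-\ex_{\proj_k\pbayes([x])}[\phi_i]\big|\le\tfrac{i}{2}W_1(\spred(x),\proj_k\pbayes([x]))\le\tfrac{i}{2}\epsilon$, and by the two identities above the left side is exactly $\big|m_i-\ex_{\x\sim[x]}[\pbayes(\x)^i]\big|$. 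The only mildly delicate points are getting the combinatorial identity in the first step right and tracking the factor $2$ between the $\ell_1$ metric on the binary simplex and the absolute-value metric on $[0,1]$ (which is precisely why the constant is $i/2$ rather than $i$); the rest is routine.
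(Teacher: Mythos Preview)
Your proposal is correct and follows essentially the same approach as the paper: the paper defines the same estimator $M_{k,i}(p)=\binom{pk}{i}/\binom{k}{i}$, proves the identical unbiasedness identity (phrased as ``probability that a random permutation of the snapshot starts with $i$ ones'' rather than your subset-counting, but equivalent), establishes the same Lipschitz bound via the same Pascal-rule telescoping, and concludes via the coupling definition of $W_1$ with the same factor-of-$2$ accounting between $|p-p'|$ and $\|p-p'\|_1$.
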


The proof will make use of an intermediate lemma, which tells us that the first $k$-moments of a mixture $\mix \in \DDY$ can be exactly reconstructed using only its $k\th$-order projection:

\begin{lemma}\label{lem:ko-moments}
    Consider any mixture $\mix \in \Delta[0,1]$ and its $k\th$-order projection $\proj_k \mix \in \Delta\{0, 1/k, ..., 1\}$. For $m \in [k]$, let $M_{k,m}: \{0, 1/k, 2/k, ..., 1\} \rightarrow [0,1]$ be a function defined as follows:
    \[M_{k,m}(\distr) = \begin{cases}
        \frac{{\distr k \choose m}}{{k\choose m}} & \distr \geq m/k\\
        0 & \text{otherwise}
    \end{cases}.\]

    Then, 
    \[\ex_{\Distr \sim \proj_k \mix}[M_{k, m}(\Distr)] = \ex_{\Distr \sim \mix}[\Distr^m].\]
\end{lemma}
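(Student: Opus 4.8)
The plan is to unpack the definition of the $k\th$-order projection $\proj_k\mix$ and reduce the claimed identity to a classical fact about factorial moments of the binomial distribution. First I would recall that a sample from $\proj_k\mix$ is generated by drawing $\Distr \sim \mix$, then drawing labels $\y_1, \dots, \y_k \sim \Distr$ i.i.d., and outputting the normalized histogram $\unif(\y_1, \dots, \y_k)$. In the binary case $\calY = \{0,1\}$ this histogram is identified with the fraction $\frac1k\sum_{i=1}^k \y_i \in \{0, 1/k, \dots, 1\}$ of coordinates equal to $1$, so conditioned on $\Distr = p$ the integer $\sum_i \y_i$ is distributed as $\Bin(k, p)$. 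I would also observe at the outset that with the standard convention $\binom{j}{m} = 0$ for integers $0 \le j < m$, the two branches in the definition of $M_{k,m}$ coincide, so that $M_{k,m}(\distr) = \binom{\distr k}{m}/\binom{k}{m}$ on the entire support $\{0,1/k,\dots,1\}$; this removes the need to treat the case $\distr < m/k$ separately.

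The key step is to show that for every fixed $p \in [0,1]$ one has $\ex_{\y_1, \dots, \y_k \sim \mathrm{Bernoulli}(p)}\big[\binom{\sum_i \y_i}{m}\big] = \binom{k}{m} p^m$. I would prove this combinatorially: since $\sum_i \y_i$ is a nonnegative integer, $\binom{\sum_i \y_i}{m}$ counts the $m$-element subsets $T \subseteq [k]$ all of whose coordinates equal $1$, i.e.\ $\binom{\sum_i \y_i}{m} = \sum_{T \subseteq [k],\, |T| = m} \prod_{i \in T} \y_i$. Taking expectations and using independence of the $\y_i$, each of the $\binom{k}{m}$ terms equals $\ex[\prod_{i \in T}\y_i] = p^m$. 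Dividing through by $\binom{k}{m}$ gives $\ex\big[M_{k,m}(\unif(\y_1,\dots,\y_k)) \mid \Distr = p\big] = p^m$.

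Finally I would take the expectation over $\Distr \sim \mix$ and apply the tower rule to conclude
\[ \ex_{\Distr' \sim \proj_k\mix}[M_{k,m}(\Distr')] = \ex_{\Distr \sim \mix}\,\ex\big[M_{k,m}(\unif(\y_1,\dots,\y_k)) \mid \Distr\big] = \ex_{\Distr \sim \mix}[\Distr^m], \]
which is the claim. I do not expect a genuine obstacle here: the only points requiring mild care are the bookkeeping in the first step (identifying the snapshot with a rescaled binomial random variable, and checking that the degenerate case $\distr k < m$ is correctly handled by the $\binom{j}{m}=0$ convention), and recognizing the identity in the second step as the statement that the $m\th$ factorial moment of $\Bin(k,p)$ equals $\binom{k}{m} m!\, p^m$. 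Once established, this lemma feeds directly into the proof of Lemma~\ref{lem:ko-moment-approx} by combining exact moment recovery with the $\epsilon$-$k\th$-order calibration guarantee (noting that $M_{k,m}$ is bounded, so Wasserstein closeness of $\spred(x)$ to $\proj_k\pbayes([x])$ translates into a small error in the induced moment estimate).
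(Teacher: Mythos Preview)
Your proposal is correct and follows essentially the same approach as the paper: both unfold the definition of $\proj_k\mix$, establish the conditional identity $\ex[M_{k,m}(\text{snapshot}) \mid \Distr = p] = p^m$, and then integrate over $\mix$. The paper phrases the key step as ``the probability that a random permutation of the snapshot begins with $m$ ones,'' whereas you use the equivalent subset-sum identity $\binom{\sum_i \y_i}{m} = \sum_{|T|=m}\prod_{i\in T}\y_i$ (i.e.\ the factorial-moment formula for the binomial); these are the same computation in slightly different dress, and your explicit handling of the $\distr k < m$ case via the $\binom{j}{m}=0$ convention is a small improvement in clarity.
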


\begin{proof}
    Consider the expectation of $M_{k, m}$ under $\proj_k \mix$. 

    \begin{align*}
        \ex_{\hat{\Distr} \sim \proj_k \mix}[M_{k, m}(\hat{\p})] &= \ex_{\substack{\Distr \sim \mix\\\y_1, ..., \y_k \sim \Distr, \text{i.i.d.}}}\left[M_{k, m}\left(\sum_{i = 1}^k \frac{\y_i}{k}\right)\right]
    \end{align*}

    One way to view $M_{k,m}(\distr)$ is to view $\distr = i/k$ as a snapshot with $i$ 1s and $k - i$ 0s. $M_{k, m}(\distr)$ gives the probability that a random permutation of this snapshot starts with $m$ consecutive 1s. Because the distribution over possible permutations of snapshots is symmetric, we can equivalently express the above expectation as just the probability that a random $k$-snapshot starts with $m$ 1s:

    \begin{align*}
        \ex_{\hat{\Distr} \sim \proj_k \mix}[M_{k, m}(\hat{\Distr})] &= \ex_{\substack{\Distr\sim \mix\\\y_1, ..., \y_k \sim \Distr, \text{i.i.d.}}}[\mathbf{1}[\y_1, ..., \y_m = 1]]\\
        &= \ex_{\substack{\Distr\sim \mix\\\y_1, ..., \y_k \sim \Distr, \text{i.i.d.}}}[\prod_{i = 1}^m \y_i]\\
        &= \ex_{\Distr\sim \mix}[\prod_{i = 1}^m \ex_{\y \sim \Distr}[\y]] \tag{independence of $\y_i$s}\\
        &= \ex_{\Distr\sim \mix}[\prod_{i = 1}^m \p] \\
        &= \ex_{\Distr\sim \mix}[\p^m].
    \end{align*}

    Thus, the expectation of $M_{k, m}$ over the mixture $\proj_k \mix$ is guaranteed to be exactly the $m$-th moment of the higher-order mixture $\mix$.     
\end{proof}

Additionally, we show that each $M_{j,k}$ satisfies a Lipschitz condition, which will imply that evaluating the expectation of $M_{j, k}$ on a distribution close in Wasserstein distance to $\proj_k \pbayes([x])$ will closely approximate the true moment. 

\begin{lemma}\label{lem:moment-fn-smoothness}
    Let $M_{k, m}$ be defined as in Lemma~\ref{lem:ko-moments}. Then, for any $p_1, p_2 \in \{0, 1/k, ..., 1\}$, we are guaranteed that 
    \[|M_{k, m}(p_1) - M_{k,m}(p_2)| \leq m|p_1 - p_2|.\]
\end{lemma}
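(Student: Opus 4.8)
\textbf{Proof plan for Lemma~\ref{lem:moment-fn-smoothness}.}
The plan is to reduce the statement to the case of \emph{adjacent} grid points and then verify a clean one-step estimate using Pascal's rule. Since $p_1, p_2 \in \{0, 1/k, \dots, 1\}$, write $p_1 = i/k$ and $p_2 = j/k$ with $0 \le i \le j \le k$ without loss of generality. By the triangle inequality,
\[
|M_{k,m}(i/k) - M_{k,m}(j/k)| \le \sum_{l=i}^{j-1}\bigl|M_{k,m}((l+1)/k) - M_{k,m}(l/k)\bigr|,
\]
so it suffices to show $|M_{k,m}((l+1)/k) - M_{k,m}(l/k)| \le m/k$ for every $l \in \{0,\dots,k-1\}$; telescoping then yields $|M_{k,m}(p_1) - M_{k,m}(p_2)| \le (j-i)\cdot m/k = m|p_1 - p_2|$.

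First I would compute the one-step difference. If $l+1 < m$, then both $M_{k,m}((l+1)/k)$ and $M_{k,m}(l/k)$ vanish, so the difference is $0$. If $l \ge m-1$ (so $l+1 \ge m$), then $M_{k,m}((l+1)/k) = \binom{l+1}{m}/\binom{k}{m}$ while $M_{k,m}(l/k) = \binom{l}{m}/\binom{k}{m}$ (interpreting $\binom{m-1}{m}=0$, which correctly handles the boundary case $l = m-1$). Pascal's rule $\binom{l+1}{m} = \binom{l}{m} + \binom{l}{m-1}$ then gives the uniform formula
\[
M_{k,m}((l+1)/k) - M_{k,m}(l/k) = \frac{\binom{l}{m-1}}{\binom{k}{m}} \ge 0,
\]
valid for all $l$ with $m-1 \le l \le k-1$ (and the difference is $0$ otherwise).

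Finally I would bound this quantity. Since $l \le k-1$, monotonicity of binomial coefficients gives $\binom{l}{m-1} \le \binom{k-1}{m-1}$, and the standard identity $\binom{k-1}{m-1}\big/\binom{k}{m} = \frac{(k-1)!\,m!\,(k-m)!}{(m-1)!\,(k-m)!\,k!} = \frac{m}{k}$ finishes the one-step bound: $0 \le M_{k,m}((l+1)/k) - M_{k,m}(l/k) \le m/k$. Combining with the telescoping argument above completes the proof. No step here is a genuine obstacle; the only thing requiring care is the case split at the threshold $p = m/k$ and the boundary value $l = m-1$, which the convention $\binom{m-1}{m} = 0$ absorbs so that the single formula $\binom{l}{m-1}/\binom{k}{m}$ covers all nonzero one-step increments.
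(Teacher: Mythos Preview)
Your proof is correct and follows essentially the same approach as the paper: reduce to consecutive grid points via telescoping, apply Pascal's rule to express the one-step increment as $\binom{l}{m-1}/\binom{k}{m}$, and bound by the maximal value $\binom{k-1}{m-1}/\binom{k}{m} = m/k$. The only minor difference is that the paper treats the threshold case $l = m-1$ as a separate case, whereas you absorb it into the general formula via the convention $\binom{m-1}{m}=0$; this is purely cosmetic.
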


\begin{proof}
    We note that it suffices to show that for any consecutive $p_1$ and $p_2 = p_1 - 1/k$, we have 
    \[|M_{k, m}(p_1) - M_{k,m}(p_2)| \leq m/k,\]

    as then it follows by a telescoping sum that for any $p_1, p_2 \in \{0, 1/k, ..., 1\}$, assuming without loss of generality that $p_1 \geq p_2$, we have 
    \[|M_{k, m}(p_1) - M_{k,m}(p_2)| \leq m|p_1 - p_2|,\]

    giving the desired bound. 

    There are three potential cases: either (1) $M_{k, m}(p_1),M_{k, m}(p_2) > 0$, (2) $M_{k, m}(p_1) = 0, M_{k, m}(p_2) > 0$ (wlog), or (3) $M_{k, m}(p_1) = M_{k, m}(p_2) = 0$. 

    In the last case, we trivially have $|M_{k,m}(p_1) - M_{k, m}(p_2)| = 0-0 = 0$. We focus on the other two cases. 
        
    \paragraph{Case 1: $M_{k, m}(p_1),M_{k, m}(p_2) > 0$}. By assumption, there is some $i >m$ such that $p_1k = i$, $p_2k = i - 1$. We use this to simplify the difference between $M_{k, m}$ values:

    \begin{align*}
        |M_{k, m}(p_1) - M_{k, m}(p_2)| = \frac{{i \choose m}}{{k\choose m}} -  \frac{{i-1 \choose m}}{{k\choose m}} = \frac{{i - 1 \choose m - 1}}{{k \choose m}}
    \end{align*}
        
    \eat{     &= \frac{1}{{k \choose m}}\left({i \choose m} - { i - 1 \choose m}\right)\\
       
        &= \frac{1}{{k \choose m}}\left(\frac{i!}{(i - m)!m!} - \frac{(i - 1)!}{(i - 1 - m)!m!}\right)\\
        &= \frac{1}{{k \choose m}}\left(\frac{i! - (i - 1)!(i - m)}{(i - m)!m!}\right)\\
        &= \frac{1}{{k \choose m}}\left(\frac{m(i - 1)!}{(i - m)!m!}\right)\\
        
        &}

    We observe that for a fixed $m$, and $k$, this quantity is increasing in $i$ for $i \geq m$ because the denominator is constant, and ${i - 1 \choose m - 1}$ is increasing in $i \geq m$. Thus, we can upper bound the difference by the difference achieved by the largest possible value of $i$, which is $i = k$:
    \begin{align*}
        |M_{k, m}(p_1) - M_{k, m}(p_2)| \leq \frac{{k -1 \choose m -1}}{{k \choose m}} = \frac{m}{k}.
    \end{align*}

\eat{   
        \frac{{i \choose m}}{{k\choose m}} -  \frac{{i-1 \choose m}}{{k\choose m}}\\
        &\leq \frac{{k \choose m}}{{k\choose m}} -  \frac{{k-1 \choose m}}{{k\choose m}}\\}
        
    Finally, we consider the last case:

    \paragraph{Case 2: $M_{k, m}(p_1) > 0$, $M_{k, m}(p_2) =0$.} Because $p_1 = p_2 + 1/k$, and $M_{k, m}(p)$ is always positive for any $p \geq m/k$, this case can only happen when $p_1 = m/k$, and $p_2 = (m - 1)/k$. We calculate the difference for these values:

    \begin{align*}
        |M_{k, m}(p_1) - M_{k, m}(p_2)| = \frac{{m \choose m}}{{k\choose m}} - 0 
        = \frac{1}{{k\choose m}}    
        = \frac{m!(k - m)!}{k!}
        = \frac{m}{k}\frac{1}{{k - 1 \choose m - 1}}
        \leq \frac{m}{k}.
    \end{align*}
    
    Thus, all consecutive differences are bounded by $m/k$, completing the proof. 
\end{proof}

We are now ready to prove Lemma~\ref{lem:ko-moment-approx} in full. 

\begin{proof}[Proof of Lemma~\ref{lem:ko-moment-approx}]
    At a high-level, to calculate an approximation of $\ex_{x \sim [x]}[\pbayes(x)^i]$ using $\spred(x)$, we will calculate $\ex_{\Distr \sim \spred(x)}[M_{k, i}(\Distr)]$ and show that it is a $i\epsilon/2$ approximation of the $i\th$ moment, $\ex_{x \sim [x]}[\pbayes(x)^i]$.

    Consider any $x \in \calX$ and $i \in [k]$. Using the triangle inequality, we upper bound the difference between our estimate and the true moment value by:

    \begin{align*}&\left|\ex_{\hat{\Distr} \sim \spred(x)}[M_{k,i}(\hat{\Distr})] - \ex_{\Distr_* \sim \pbayes([x])}[\Distr_*^i]\right| \\
    &\leq \left|\ex_{\hat{\Distr} \sim \spred(x)}[M_{k,i}(\hat{\Distr})] - \ex_{\Distr \sim \proj_k \pbayes([x])}[M_{k,i}(\p)]\right| + \left|\ex_{\Distr \sim \proj_k \pbayes(x)}[M_{k,i}(\p)]- \ex_{\Distr_* \sim \pbayes([x])}[\Distr_*^i]\right|\\
    &= \left|\ex_{\hat{\Distr} \sim \spred(x)}[M_{k,i}(\hat{\Distr})] - \ex_{\Distr \sim \proj_k \pbayes([x])}[M_{k,i}(\p)]\right| + 0 \tag{Lemma~\ref{lem:ko-moments}}
    \end{align*}

    We will now use Lemma~\ref{lem:moment-fn-smoothness} to show that the remaining term is bounded by $m\epsilon/2$.
    
    Let $\Gamma(\spred(x), \proj_k \pbayes([x]))$ be the space of couplings of $\spred(x)$ and $\proj_k \pbayes([x])$. We can re-express the difference in expectations as 

     \begin{align*}
         \left|\ex_{\hat{\Distr} \sim \spred(x)}[M_{k,i}(\hat{\Distr})] - \ex_{\Distr_* \sim \pbayes([x])}[\Distr_*^i]\right| &= \left|\ex_{\hat{\Distr} \sim \spred(x)}[M_{k,i}(\hat{\Distr})] - \ex_{\Distr \sim \proj_k \pbayes([x])}[M_{k,i}(\Distr)]\right| \\
         &= \min_{\gamma \in \Gamma(\spred(x), \proj_k \pbayes([x]))}\left|\ex_{(\hat{\Distr}, \Distr)\sim \gamma}[M_{k,i}(\hat{\Distr}) - M_{k,i}(\Distr)]\right|\\
         &\leq \min_{\gamma \in \Gamma(\spred(x), \proj_k \pbayes([x]))}\ex_{(\hat{\Distr}, \Distr)\sim \gamma} [i|\hat{\Distr} - \Distr|] \tag{Lemma~\ref{lem:moment-fn-smoothness}}\\
         &= iW_1(\spred(x), \proj_k \pbayes([x]))/2\\
         &\leq i\epsilon/2 \tag{$k\th$-order calibration}
     \end{align*}

     we note that the factor of $1/2$ comes in because $|\hat{\Distr} - \Distr|$ is 1/2 of the $\ell_1$ distance when $\hat{\Distr}$ and $\Distr$ are viewed as distributions over $\{0, 1\}$. 

     Thus, we've shown an additive error of $i \epsilon/2$ for estimating any moment $i \in[k]$, and thus can construct a vector of such moments for each $i \in [k]$ satisfying the desired guarantee. 
\end{proof}

\subsection{Proof of Theorem~\ref{thm:sample-cxty}}\label{sec:sample-cxty-proof}

We restate the theorem for readability:

\begin{theorem}[Restatement of Theorem~\ref{thm:sample-cxty}, Empirical estimate of $k\th$-order projection guarantee]
    Consider any $x \in \calX$, and a sample $\Distr_1, \cdots, \Distr_N \in \Ysnap$ where each $\Distr_i$ is drawn i.i.d. from $\proj_k \pbayes([x])$. Given $\epsilon > 0$ and $0 \leq \delta \leq 1$, if $N \geq (2(|\Ysnap|\log(2) + \log(1/\delta)))/\epsilon^2$, then we can guarantee that with probability at least $1-\delta$ over the randomness of the sample we will have 
    \[W_1(\proj_k \pbayes([x]), \unif(\Distr_1, ..., \Distr_N)) \leq \epsilon.\]
\end{theorem}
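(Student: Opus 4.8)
The plan is to reduce the Wasserstein guarantee to a standard $\ell_1$ concentration bound for the empirical distribution over the \emph{finite} alphabet $\Ysnap$. First I would observe that $\proj_k\pbayes([x])$ is a distribution supported on the finite set $\Ysnap \subseteq \DY$, and that $\unif(\Distr_1,\dots,\Distr_N)$ is exactly the empirical distribution of the $N$ i.i.d.\ draws $\Distr_i \sim \proj_k\pbayes([x])$ over that same alphabet. Since $\Ysnap \subseteq \DY = \Delta_\ell$ has $\ell_1$-diameter at most $2$, the coupling argument behind Lemma~\ref{lem:wass-ub} (Wasserstein is at most diameter times total variation) applies to these two elements of $\Delta\Ysnap \subseteq \DDY$ and gives
\[ W_1\bigl(\proj_k\pbayes([x]),\ \unif(\Distr_1,\dots,\Distr_N)\bigr) \;\le\; 2\,d_{TV}\bigl(\proj_k\pbayes([x]),\ \unif(\Distr_1,\dots,\Distr_N)\bigr). \]
So it suffices to show that this total variation distance is at most $\epsilon/2$ with probability at least $1-\delta$.

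Next I would bound the total variation distance by a union bound over subsets of $\Ysnap$. Writing $p = \proj_k\pbayes([x])$ and $\hat p = \unif(\Distr_1,\dots,\Distr_N)$ as probability vectors on $\Ysnap$, one has $d_{TV}(\hat p, p) = \max_{A \subseteq \Ysnap}\bigl(\hat p(A) - p(A)\bigr)$, the maximum being attained at the data-dependent set $A^\ast = \{s \in \Ysnap : \hat p(s) \ge p(s)\}$. For each \emph{fixed} $A$, the quantity $\hat p(A) = \tfrac1N\sum_{i=1}^N \mathbf 1[\Distr_i \in A]$ is an average of $N$ i.i.d.\ Bernoulli$(p(A))$ random variables, so Hoeffding's inequality gives $\Pr\bigl[\hat p(A) - p(A) \ge \epsilon/2\bigr] \le \exp(-N\epsilon^2/2)$. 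Taking a union bound over all $2^{|\Ysnap|}$ subsets $A$ (which in particular covers $A^\ast$, whatever the sample turns out to be) yields
\[ \Pr\bigl[d_{TV}(\hat p, p) \ge \epsilon/2\bigr] \;\le\; 2^{|\Ysnap|}\exp\!\bigl(-N\epsilon^2/2\bigr). \]

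Finally I would pick $N$ so that the right-hand side is at most $\delta$: solving $2^{|\Ysnap|}\exp(-N\epsilon^2/2) \le \delta$ for $N$ gives precisely $N \ge 2\bigl(|\Ysnap|\log 2 + \log(1/\delta)\bigr)/\epsilon^2$, and on the complementary event we get $W_1 \le 2\cdot(\epsilon/2) = \epsilon$, as claimed. I do not expect any serious obstacle here; the only two points needing a little care are (i) justifying the diameter-times-TV bound on the space $\Delta\Ysnap \subseteq \DDY$, which follows verbatim from the proof of Lemma~\ref{lem:wass-ub} once one notes $\mathsf{diam}_{\ell_1}(\Ysnap) \le 2$, and (ii) ensuring the union bound covers the sample-dependent maximizing set $A^\ast$, which is handled simply by ranging over \emph{all} subsets of the finite alphabet $\Ysnap$. (An alternative to (ii) is to invoke the Bretagnolle--Huber--Carol inequality directly, which gives the same $2^{|\Ysnap|}e^{-N\epsilon^2/2}$ tail for the $\ell_1$ deviation of an empirical distribution.)
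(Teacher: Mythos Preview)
Your proposal is correct and follows essentially the same approach as the paper: bound $W_1$ by $2\,d_{TV}$ via Lemma~\ref{lem:wass-ub}, then control the total variation of the empirical measure on the finite alphabet $\Ysnap$ by Hoeffding plus a union bound over all $2^{|\Ysnap|}$ subsets, and solve for $N$. Your careful remarks about (i) the diameter bound applying on $\Delta\Ysnap$ and (ii) the union bound covering the data-dependent maximizer are valid refinements that the paper glosses over, but the core argument is identical.
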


As some discussion before the proof, we note that this sample complexity guarantee is obtained via bounding the total variation distance between $\proj_k\pbayes([x])$ and $\unif(\Distr_1, ... \Distr_N)$, which provides an upper bound on the Wasserstein distance. This approach is reasonable for small values of $k$, which we consider to be the most realistic setting where $k$-snapshots might be obtained. When $k$ is large compared to $\ell$, we can alternatively obtain sample complexity bounds depending only on the dimensionality $\ell$ (roughly of the form $(1/\epsilon)^{O(\ell)}$) by directly exploiting known results on the convergence of empirical measures to true measures in Wasserstein distance \citep{yukich1989optimal}.


\begin{proof}
    For ease of notation, we will denote $\hat{\mix} := \unif(\Distr_1, ..., \Distr_N)$ and $\mix^* := \proj_k \pbayes([x])$. 

    We first observe that by Lemma~\ref{lem:wass-ub}, we are guaranteed that 
    \[W_1(\hat{\mix}, \mix^*) \leq 2d_{TV}(\hat{\mix}, \mix_*).\]

    Thus, we conclude that whenever $d_{TV}(\mix^*, \hat{\mix}) \leq \epsilon/2$, then we are also guaranteed that \[W_1(\mix^*, \hat{\mix}) \leq \epsilon.\] 

    This reduces our problem to bounding the total variation distance of the empirical estimate of the discrete distribution $\mix^* = \proj_k \pbayes([x])$, which we observe is supported on $|\Ysnap|$ points.

    We now use a standard argument for bounding the total variation distance (see~\cite{canonne2020short} for a discussion of various approaches to this argument). 

    Note that the total variation distance can be alternatively expressed as 
    \[d_{TV}(\mix^*, \hat{\mix}) = \sup_{S \subseteq \Ysnap}\left| \Pr_{\Distr \sim \mix^*}[\Distr \in S] - \Pr_{\Distr \sim \hat{\mix}}[\Distr \in S]\right|.\]

    Thus, $d_{TV}(\mix^*, \hat{\mix}) > \epsilon/2$ iff there exists some $S \subseteq \Ysnap$ such that $\Pr_{\Distr \sim \hat{\mix}}[\Distr \in S] > \Pr_{\Distr \sim \mix^*}[\Distr \in S] + \epsilon/2$.

    We bound the probability that this happens for any $S$ via a union bound. Select any $S \subseteq \Ysnap$, and note that $\Pr_{\Distr \sim \hat{\mix}}[\Distr \in S]$ can be written as a sum of $N$ i.i.d. Bernoulli random variables $\rv{X}_1, ..., \rv{X}_N$, all with mean $\Pr_{\Distr \sim \mix^*}[\Distr \in S]$. This allows us to apply Hoeffding's Inequality to conclude that 
    \begin{align*}
        \Pr\left[\Pr_{\Distr \sim \hat{\mix}}[\Distr \in S] > \Pr_{\Distr \sim \mix^*}[\Distr \in S] + \epsilon/2\right] &= \Pr\left[\frac{1}{N}\sum_{i = 1}^N\rv{X}_i > \ex[\frac{1}{N}\sum_{i = 1}^N\rv{X}_i ] + \epsilon/2\right]\\
        &\leq \exp(-N\epsilon^2/2) \tag{Hoeffding's Inequality}
    \end{align*}

    Thus, for any $N \geq 2(|\Ysnap|\log(2) + \log(1/\delta))/\epsilon^2$, we are guaranteed that 
    \begin{align*}
    \Pr\left[\Pr_{\Distr \sim \hat{\mix}}[\Distr \in S] \geq  \Pr_{\Distr \sim \mix^*}[\Distr \in S] + \epsilon/2\right] \leq \delta/2^{|\Ysnap|}.
    \end{align*}
 Union bounding over all $2^{|\Ysnap|}$ possible $S$, we get that 
    \begin{align*}
        \Pr[d_{TV}(\mix^*, \hat{\mix}) > \epsilon/2] &= \Pr\left[\exists S \subseteq \Ysnap \text{s.t.} \Pr_{\Distr \sim \hat{\mix}}[\Distr \in S] > \Pr_{\Distr \sim \mix^*}[\Distr \in S] + \epsilon/2\right]\\
        \leq 2^{|\Ysnap|}\frac{\delta}{2^{\Ysnap}} = \delta.
    \end{align*}

    Thus, we have shown thatn $N \geq \frac{2(|\Ysnap|\log(2) + \log(1/\delta))}{\epsilon^2}$ samples are sufficient to guarantee that with probability at least $1 - \delta$, 
    \[W_1(\proj_k \pbayes([x]), \unif(\Distr_1, ..., \Distr_N)) \leq \epsilon\]


\end{proof}

\section{Proofs and Discussion from Section~\ref{sec:decomps}: Estimating Aleatoric Uncertainty with $k\th$-order calibration }
\ifbool{arxiv}{
\subsection{Proof of Theorem~\ref{thm:conv}}\label{sec:hoc-au-eq}

We restate the theorem for readability:

\begin{theorem}[Restatement of Theorem~\ref{thm:conv}]
    A higher-order predictor $\pred : \calX \to \DDY$ is perfectly higher-order calibrated to $\pbayes$ wrt a partition $[\cdot]$ if and only if $\AU_G(\pred : x) = \ex_{\x \sim [x]}[\AU^*_G(\x)]$ wrt all concave functions $G: \DY \rightarrow \R$.
\end{theorem}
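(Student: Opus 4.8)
The forward implication is essentially free: it is exactly \cref{lem:perfect-HOC-implies-AU}. If $\pred$ is perfectly higher-order calibrated then $\pred(x)=\pbayes([x])$ as mixtures for every $x$, so for any concave $G$ we get $\AU_G(\pred:x)=\ex_{\Distr\sim\pred(x)}[G(\Distr)]=\ex_{\Distr^*\sim\pbayes([x])}[G(\Distr^*)]=\ex_{\x\sim[x]}[G(\pbayes(\x))]=\ex_{\x\sim[x]}[\AU^*_G(\x)]$. (Concavity is not even needed here.) So the whole content is the converse, and the plan is to use a sufficiently rich family of concave test functions to pin down the predicted mixture.

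For the converse, fix $x$ and write $\mix=\pred(x)$, $\mixbayes=\pbayes([x])$, regarded as Borel probability measures supported on the compact simplex $\DY\subset\R^\ell$. Using the identity $\ex_{\x\sim[x]}[\AU^*_G(\x)]=\ex_{\Distr^*\sim\mixbayes}[G(\Distr^*)]$, the hypothesis says $\ex_{\Distr\sim\mix}[G(\Distr)]=\ex_{\Distr^*\sim\mixbayes}[G(\Distr^*)]$ for every concave $G:\DY\to\R$. I would plug in the negative exponentials $G_\theta(\distr):=-\exp(\inn{\theta,\distr})$, one for each $\theta\in\R^\ell$; each $G_\theta$ is concave on $\R^\ell$, hence on $\DY$, being the negative of the convex function $\distr\mapsto\exp(\inn{\theta,\distr})$. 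Applying the hypothesis to $G_\theta$ and cancelling the overall sign gives $\ex_{\Distr\sim\mix}\big[e^{\inn{\theta,\Distr}}\big]=\ex_{\Distr^*\sim\mixbayes}\big[e^{\inn{\theta,\Distr^*}}\big]$ for all $\theta\in\R^\ell$; that is, the moment generating functions of $\mix$ and $\mixbayes$ agree on all of $\R^\ell$. Since both measures are supported on the bounded set $\DY$, these MGFs are finite (indeed real-analytic) everywhere, and the standard uniqueness theorem for moment generating functions forces $\mix=\mixbayes$. As $x$ was arbitrary, $\pred$ is perfectly higher-order calibrated wrt $[\cdot]$, which completes the equivalence.

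The only step that is more than bookkeeping is the appeal to uniqueness of distributions from their moment generating functions; this is standard precisely because the supporting set $\DY$ is compact, so the MGF is entire and the moment problem is determinate, but it is the crux of the argument and I would state it carefully. Everything else (concavity of $G_\theta$, the cancellation of signs, and translating ``$\mix=\mixbayes$ for all $x$'' into higher-order calibration) is routine. As a minor refinement worth noting, one can replace $G_\theta$ by $C_\theta-\exp(\inn{\theta,\cdot})$ with $C_\theta:=e^{\max_i\theta_i}$, which is continuous, concave, \emph{and} nonnegative on $\DY$; the additive constant cancels on both sides of the hypothesis, so the equivalence continues to hold even under the more restrictive definition of generalized entropy function (continuous, concave, nonnegative) discussed after the theorem statement.
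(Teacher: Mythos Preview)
Your proposal is correct and essentially identical to the paper's own proof: both directions match, and for the converse you use exactly the same family of concave test functions $G_\theta(\distr)=-e^{\inn{\theta,\distr}}$ together with uniqueness of moment generating functions for compactly supported measures. The only cosmetic difference is that the paper restricts $\theta$ to $[-1,1]^\ell$ and invokes the MGF uniqueness theorem on that cube, whereas you take all $\theta\in\R^\ell$; your additive-constant remark to recover nonnegativity also mirrors the paper's footnote.
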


\begin{proof}
    The higher-order calibration implies $\AU_G(\pred : x) = \ex_{\x \sim [x]}[\AU^*_G(\x)]$ direction is proved in Lemma~\ref{lem:perfect-HOC-implies-AU}. We focus on the reverse direction.

    We will leverage the notion of \emph{moment generating functions}. The moment generating function of a random variable $\rv{X} \in \R^{\ell}$ is a function $M_{\rv{X}}: \R^{\ell} \rightarrow \R$ defined by $M_{\rv{X}}(t) = \ex_{\rv{X}}[e^{t^T\rv{X}}]$.
    An important property of the moment generating function is that if $M_{\rv{X}}(t)$ is finite for all $t \in [-c, c]^{\ell}$ for some $c > 0$, then its values \emph{uniquely determine} the distribution of $\rv{X}$ (\cite{dasgupta2010fundamentals}, Theorem 11.8). In other words, if we have two random variables $\rv{X}$ and $\rv{Y}$ such that $M_{\rv{X}}(t) = M_{\rv{Y}}(t) < \infty$ for all $t \in [-c, c]^{\ell}$, then $\rv{X}$ and $\rv{Y}$ must be the same distribution. 

    We will use this line of reasoning to show that the predicted and true mixture for some partition $[x]$ must be the same if their estimates of all concave functions are identical. 

    Fix some $x \in X$, and consider the predicted mixture $\Distr \sim \pred(x)$ as well as the true mixture $\Distr^* \sim \pbayes([x]).$ 

    We consider the set of functions $\calG = \{G_t\}_{t \in [-1, 1]^{\ell}}$ where each $G_t: \DY \rightarrow \mathbb{R}$ is defined as $G_t(\distr) := -e^{t^T \distr}$.\footnote{Note that scaling by an additive constant can ensure this function is non-negative on $\DY$ if we would like to require that entropy functions are non-negative, without changing the result of the proof.}

    Note that by definition, for any $t \in [-1, 1]^{\ell}$,
    \begin{equation}\label{eq:au-moment-eq-1}
        \AU_{G_t}(\pred : x) = \ex_{\Distr \sim \pred(x)}[G_t(\Distr)] = \ex_{\Distr \sim \pred(x)}[-e^{t^T \Distr}] = -M_{\Distr}(t)
    \end{equation}
    
    and similarly
    \begin{equation}\label{eq:au-moment-eq-2}
        \ex_{\rv{x} \sim [x]}[\AU_{G_t}^*(\rv{x})] = \ex_{\Distr^* \sim \pbayes([x])}[G_t(\Distr^*)] = \ex_{\Distr^* \sim \pbayes([x])}[-e^{t^T \Distr^*}] = -M_{\Distr^*}(t)    
    \end{equation}

    We make two assumptions which we will prove later:
    \begin{enumerate}
        \item Every $G_t \in \calG$ is concave. 
        \item $\AU_{G_t}(\pred : x)$ and $\ex_{\rv{x} \sim [x]}[\AU_{G_t}^*(\rv{x})] $ are finite for all $G_t \in \calG$.
    \end{enumerate}

    With these two assumptions, the proof is quickly complete. In particular, if all $G_t$ are concave, then by the main assumption of our theorem, we have 
    \[\AU_{G_t}(\pred : x) = \ex_{\rv{x} \sim [x]}[\AU_{G_t}^*(\rv{x})]\]
    for all $G_t \in \cal{G}$. The equivalences shown in \ref{eq:au-moment-eq-1} and \ref{eq:au-moment-eq-2} imply that we also have 
    \[M_{\Distr}(t) = M_{\Distr*}(t)\]
    for all $t \in [-1, 1]^{\ell}$. Our additional finiteness assumption implies that these moment generating functions are finite for all $t \in [-1, 1]^{\ell}$, and thus by the uniqueness of moment generating functions, we conclude that the distributions $\pred(x)$ and $\pbayes([x])$ are identical. Because this holds for any $x \in \calX$, we conclude that $\pred(x)$ is perfectly higher-order calibrated. 

    It remains to prove our assumptions of concavity and finiteness. 

    We first show finiteness, which follows almost immediately because $\DY$ is bounded, and thus for any $\distr \in \DY$ and $t \in [-c, c]^{\ell}$, we have $t^T\distr \in [-c, c]$ and thus $-e^{t^T\distr} \in [e^{-c}, e^{c}]$. Thus the expectation of $e^{t^T\Distr}$ for any random variable $\Distr$ over $\DY$ is guaranteed to be finite. 

    We finally show concavity. Consider any $G_t \in \calG$ and $\distr_1, \distr_2 \in \DY$, $\lambda \in [0, 1]$. We have
    \begin{align*}
        G_t(\lambda \distr_1 + (1 - \lambda)\distr_2) &= -e^{t^T(\lambda\distr_1 + (1 - \lambda)\distr_2)}\\
        &= -e^{\lambda t^T\distr_1 + (1 - \lambda)t^T \distr_2}
    \end{align*}

    because $-e^X$ is a concave function, we have that 
    \begin{align*}
        -e^{\lambda t^T\distr_1 + (1 - \lambda)t^T \distr_2} &\geq -\lambda e^{t^T\distr_1} - (1 - \lambda)e^{t^T\distr_2} \\
        &= \lambda G_t(\distr_1) + (1 - \lambda)G_t(\distr_2)
    \end{align*}

    Thus, we conclude that $G_t(\lambda \distr_1 + (1 - \lambda)\distr_2) \geq \lambda G_t(\distr_1) + (1 - \lambda)G_t(\distr_2)$, and so each $G_t \in \calG$ is concave.  
\end{proof}

}{\subsection{Equivalence between perfect higher-order calibration and calibrated uncertainty estimates}\label{sec:hoc-au-eq}

At first sight, the ability to produce calibrated estimates of aleatoric uncertainty may appear to be merely one small consequence of higher-order calibration. But somewhat remarkably, it turns out that having accurate estimates of AU wrt all concave generalized entropy functions is \emph{equivalent} to higher-order calibration.

Such a statement requires a definition of what entails a ``concave generalized entropy function.'' We take the most general view possible, building on the work of \cite{gneiting2007strictly}, who show that every proper loss has an associated generalized concave entropy function, and in particular every concave function $G:\DY \rightarrow \R$ is associated with some proper loss (see Section~\ref{sec:proper_losses} for a more in-depth discussion of proper losses and their associated entropy functions). From this point of view, the class of generalized concave entropy functions is exactly the set of all concave functions. Under this definition, we get an equivalence between calibrated estimates of concave entropy functions and higher-order calibration.

\begin{theorem}\label{thm:conv}
    A higher-order predictor $\pred : \calX \to \DDY$ is perfectly higher-order calibrated to $\pbayes$ wrt a partition $[\cdot]$ if and only if $\AU_G(\pred : x) = \ex_{\x \sim [x]}[\AU^*_G(\x)]$ wrt all concave functions $G: \DY \rightarrow \R$.
\end{theorem}

\begin{proof}
    The higher-order calibration implies $\AU_G(\pred : x) = \ex_{\x \sim [x]}[\AU^*_G(\x)]$ direction is proved in Lemma~\ref{lem:perfect-HOC-implies-AU}. We focus on the reverse direction.

    We will leverage the notion of \emph{moment generating functions}. The moment generating function of a random variable $\rv{X} \in \R^{\ell}$ is a function $M_{\rv{X}}: \R^{\ell} \rightarrow \R$ defined by $M_{\rv{X}}(t) = \ex_{\rv{X}}[e^{t^T\rv{X}}]$.
    An important property of the moment generating function is that if $M_{\rv{X}}(t)$ is finite for all $t \in [-c, c]^{\ell}$ for some $c > 0$, then its values \emph{uniquely determine} the distribution of $\rv{X}$ (\cite{dasgupta2010fundamentals}, Theorem 11.8). In other words, if we have two random variables $\rv{X}$ and $\rv{Y}$ such that $M_{\rv{X}}(t) = M_{\rv{Y}}(t) < \infty$ for all $t \in [-c, c]^{\ell}$, then $\rv{X}$ and $\rv{Y}$ must be the same distribution. 

    We will use this line of reasoning to show that the predicted and true mixture for some partition $[x]$ must be the same if their estimates of all concave functions are identical. 

    Fix some $x \in X$, and consider the predicted mixture $\Distr \sim \pred(x)$ as well as the true mixture $\Distr^* \sim \pbayes([x]).$ 

    We consider the set of functions $\calG = \{G_t\}_{t \in [-1, 1]^{\ell}}$ where each $G_t: \DY \rightarrow \mathbb{R}$ is defined as $G_t(\distr) := -e^{t^T \distr}$.\footnote{Note that scaling by an additive constant can ensure this function is non-negative on $\DY$ if we would like to require that entropy functions are non-negative, without changing the result of the proof.}

    Note that by definition, for any $t \in [-1, 1]^{\ell}$,
    \begin{equation}\label{eq:au-moment-eq-1}
        \AU_{G_t}(\pred : x) = \ex_{\Distr \sim \pred(x)}[G_t(\Distr)] = \ex_{\Distr \sim \pred(x)}[-e^{t^T \Distr}] = -M_{\Distr}(t)
    \end{equation}
    
    and similarly
    \begin{equation}\label{eq:au-moment-eq-2}
        \ex_{\rv{x} \sim [x]}[\AU_{G_t}^*(\rv{x})] = \ex_{\Distr^* \sim \pbayes([x])}[G_t(\Distr^*)] = \ex_{\Distr^* \sim \pbayes([x])}[-e^{t^T \Distr^*}] = -M_{\Distr^*}(t)    
    \end{equation}

    We make two assumptions which we will prove later:
    \begin{enumerate}
        \item Every $G_t \in \calG$ is concave. 
        \item $\AU_{G_t}(\pred : x)$ and $\ex_{\rv{x} \sim [x]}[\AU_{G_t}^*(\rv{x})] $ are finite for all $G_t \in \calG$.
    \end{enumerate}

    With these two assumptions, the proof is quickly complete. In particular, if all $G_t$ are concave, then by the main assumption of our theorem, we have 
    \[\AU_{G_t}(\pred : x) = \ex_{\rv{x} \sim [x]}[\AU_{G_t}^*(\rv{x})]\]
    for all $G_t \in \cal{G}$. The equivalences shown in \ref{eq:au-moment-eq-1} and \ref{eq:au-moment-eq-2} imply that we also have 
    \[M_{\Distr}(t) = M_{\Distr*}(t)\]
    for all $t \in [-1, 1]^{\ell}$. Our additional finiteness assumption implies that these moment generating functions are finite for all $t \in [-1, 1]^{\ell}$, and thus by the uniqueness of moment generating functions, we conclude that the distributions $\pred(x)$ and $\pbayes([x])$ are identical. Because this holds for any $x \in \calX$, we conclude that $\pred(x)$ is perfectly higher-order calibrated. 

    It remains to prove our assumptions of concavity and finiteness. 

    We first show finiteness, which follows almost immediately because $\DY$ is bounded, and thus for any $\distr \in \DY$ and $t \in [-c, c]^{\ell}$, we have $t^T\distr \in [-c, c]$ and thus $-e^{t^T\distr} \in [e^{-c}, e^{c}]$. Thus the expectation of $e^{t^T\Distr}$ for any random variable $\Distr$ over $\DY$ is guaranteed to be finite. 

    We finally show concavity. Consider any $G_t \in \calG$ and $\distr_1, \distr_2 \in \DY$, $\lambda \in [0, 1]$. We have
    \begin{align*}
        G_t(\lambda \distr_1 + (1 - \lambda)\distr_2) &= -e^{t^T(\lambda\distr_1 + (1 - \lambda)\distr_2)}\\
        &= -e^{\lambda t^T\distr_1 + (1 - \lambda)t^T \distr_2}
    \end{align*}

    because $-e^X$ is a concave function, we have that 
    \begin{align*}
        -e^{\lambda t^T\distr_1 + (1 - \lambda)t^T \distr_2} &\geq -\lambda e^{t^T\distr_1} - (1 - \lambda)e^{t^T\distr_2} \\
        &= \lambda G_t(\distr_1) + (1 - \lambda)G_t(\distr_2)
    \end{align*}

    Thus, we conclude that $G_t(\lambda \distr_1 + (1 - \lambda)\distr_2) \geq \lambda G_t(\distr_1) + (1 - \lambda)G_t(\distr_2)$, and so each $G_t \in \calG$ is concave.  
\end{proof}

We add some additional discussion of more stringent definitions of what qualifies as a generalized entropy function. Above, our only requirement was concavity, though we note that the nature of the proof suggests that this class could be further restricted to the set of continuous, concave, non-negative functions on $\DY$. 

Another potential requirement for entropy functions is symmetry---invariance to permutations of the class probabilities. We highlight that requiring symmetry would break the equivalence between calibrated entropy estimates and higher-order calibration. To illustrate this, consider a simple counterexample:

Assume $\pbayes(x)$ has no aleatoric uncertainty for each $x$, and thus assigns probability 1 to some class $i \in [\ell]$. Now, consider a higher-order predictor that, for each $x$, predicts a point mixture concentrated on a distribution that gives 100\% probability to some class $j \in [\ell]$. Under any symmetric entropy function, this predictor would appear to have perfect aleatoric uncertainty estimates. Both $\pbayes$ and the predictor assign 100\% probability to a single class for each $x$. However, this predictor is far from being higher-order calibrated, as it may consistently predict the wrong class for every $x \in \calX$. 

}
\subsection{Proofs associated with Theorem~\ref{thm:ud-from-koc}}\label{sec:ud-from-koc-proofs}

In this section, we present proofs associated with the statement of Theorem~\ref{thm:ud-from-koc}. We first prove the last statement of the theorem, which follows from a general guarantee in terms of a concave entropy function's modulus of continuity.

\begin{definition}[Uniform continuity and modulus of continuity]
\label{def:uniform-cont-2}
    Consider a function $G : \DY \to \R$, where we view $\DY$ as a subset of $\R^{\ell}$ equipped with the $\ell_1$ distance metric. We say that $G$ is \emph{uniformly continuous} if there exists a function $\omega_G : \R_{\geq 0} \to \R_{\geq 0}$ vanishing at $0$ such that for all $p, p' \in \DY$, we have \[ |G(p) - G(p')| \leq \omega_G(\|p - p'\|_1). \] The function $\omega_G$ is called the \emph{modulus of continuity} of $G$.
\end{definition}

\begin{remark}
\label{rem:uniform-cont}
    Uniform continuity can be usefully thought of as a relaxation of Lipschitzness. The latter is the special case where $\omega_G(\delta) \leq O( \delta)$. We will always be concerned with the behavior of $\omega_G$ for small values of $\delta \ll 1$, and in this regime we may assume without loss of generality that $\omega_G(\delta) \leq O(\delta^{\alpha})$ for some $\alpha$. Moreover, we may assume that $\alpha \leq 1$, simply because if $\alpha > 1$ then $\delta^{\alpha} \leq \delta$ (for small $\delta$). In particular, $\omega_G$ can be assumed WLOG to be a concave, non-decreasing function for small $\delta$.
\end{remark}

With this notation in hand, we are now ready to present the statement in full. The proof is provided in the following section (\ref{sec:koc-aleatoric-approx-proof}).

\begin{theorem}[Informally stated in last point of Theorem~\ref{thm:ud-from-koc}]\label{thm:koc-aleatoric-approx}
    Consider a $k\th$-order predictor $\spred: \calX \rightarrow \Delta \Ysnap$ that satisfies $\epsilon$-$k\th$-order calibration with respect to a partition $[\cdot]$. Let $G$ be any concave entropy function that satisfies uniform continuity with modulus of continuity $\omega_G: \R_{\geq 0} \rightarrow \R_{\geq 0}$. Then, $\spred$'s estimate of aleatoric uncertainty with respect to $G$ satisfies the following guarantee for all $x \in \calX$:
    \[\left|\AU_G(\spred: x) - \ex_{\x \sim [x]}[\AU_G^*(\x)]\right|\leq \omega_G(\epsilon + \frac{|\calY|}{2\sqrt{k}}).\]  
\end{theorem}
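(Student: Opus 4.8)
The plan is to reduce the claim to the Wasserstein bound already established in Theorem~\ref{thm:koc-approx-hoc} and then transport that bound through $G$ via a coupling and Jensen's inequality. First I would rewrite both sides as expectations over mixtures: by definition $\AU_G(\spred: x) = \ex_{\Distr \sim \spred(x)}[G(\Distr)]$, and since $\ex_{\x \sim [x]}[\AU_G^*(\x)] = \ex_{\x \sim [x]}[G(\pbayes(\x))] = \ex_{\Distr^* \sim \pbayes([x])}[G(\Distr^*)]$, the target quantity is $|\ex_{\Distr \sim \spred(x)}[G(\Distr)] - \ex_{\Distr^* \sim \pbayes([x])}[G(\Distr^*)]|$. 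These are well-defined since $\Ysnap \subseteq \DY$, so $G$ may be evaluated on coarsened (snapshot) distributions without issue.

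Next I would invoke Theorem~\ref{thm:koc-approx-hoc} (equivalently Lemma~\ref{lem:pred-k-pred-conv} together with the triangle inequality for $W_1$) to get $W_1(\spred(x), \pbayes([x])) \leq \epsilon + |\calY|/(2\sqrt{k})$ for every $x$, where the Wasserstein distance is taken with respect to the $\ell_1$ ground metric on $\DY$ --- exactly the metric in which $\omega_G$ is defined. Writing $\mix = \spred(x)$, $\mix^* = \pbayes([x])$, and $\eta = \epsilon + |\calY|/(2\sqrt{k})$, I would fix a coupling $\mu \in \Gamma(\mix, \mix^*)$ with $\ex_{(\Distr,\Distr^*)\sim\mu}[\|\Distr - \Distr^*\|_1] \leq \eta$ (a near-optimal coupling, passing to a limit if the infimum is not attained). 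Chaining the triangle inequality for absolute values, uniform continuity of $G$, Jensen's inequality, and monotonicity of $\omega_G$ then gives
\[ \Bigl|\ex_{\Distr \sim \mix}[G(\Distr)] - \ex_{\Distr^* \sim \mix^*}[G(\Distr^*)]\Bigr| = \Bigl|\ex_{\mu}[G(\Distr) - G(\Distr^*)]\Bigr| \leq \ex_{\mu}\bigl[\omega_G(\|\Distr - \Distr^*\|_1)\bigr] \leq \omega_G\bigl(\ex_{\mu}[\|\Distr - \Distr^*\|_1]\bigr) \leq \omega_G(\eta), \]
which is precisely the claimed bound.

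The only real subtlety --- and the step I would be most careful about --- is that the Jensen step requires $\omega_G$ to be concave and the final step requires it to be non-decreasing. By Remark~\ref{rem:uniform-cont} both may be assumed without loss of generality over the small range of arguments that arises here: replacing $\omega_G$ by a concave, non-decreasing majorant (of the form $O(\delta^\alpha)$ with $\alpha \leq 1$) only weakens the bound, so the stated inequality still holds. Everything else is the standard argument for pushing Wasserstein closeness through a uniformly continuous functional, so I expect no further obstacles.
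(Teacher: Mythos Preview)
Your proposal is correct and follows essentially the same approach as the paper's proof: rewrite both quantities as expectations of $G$ over the respective mixtures, invoke Theorem~\ref{thm:koc-approx-hoc} for the Wasserstein bound, pass to an (near-)optimal coupling, and then chain uniform continuity, Jensen (via concavity of $\omega_G$), and monotonicity of $\omega_G$, appealing to Remark~\ref{rem:uniform-cont} for the latter two properties. The only cosmetic difference is that the paper takes the optimal coupling directly via $\argmin$ while you allow for a near-optimal one; this does not affect the argument.
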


We now move on to the first two statements of Theorem~\ref{thm:ud-from-koc}, both of which follow from an alternative means of estimating the true aleatoric uncertainty under $k\th$-order calibration using polynomial approximation.

In the rest of this section we specialize to binary labels $\calY = \{0, 1\}$, where $\DY$ may be identified with $[0, 1]$, and $G : [0, 1] \to \R$ becomes a simple real-valued function on the unit interval. We now state Jackson's well-known theorem from approximation theory on approximating such functions using polynomials.

\begin{definition}
 We say that $G : [0, 1] \to \R$ admits a $(d, \alpha, B)$-polynomial approximation if there exists a degree $d$ polynomial
$p(t) = \sum_{i=0}^d \beta_i t^i$ such that $|\beta_i|\leq B$ for all $i$ and
\[ \sup_{t \in [0,1]} \left|p(t) - G(t)\right| \leq \alpha.\]
\end{definition}

\begin{theorem}[{Jackson's theorem; see e.g.\ \cite[Thm 1.4]{rivlin1981introduction}}]\label{thm:jacksons}
    Let $G : [0, 1] \to \R$ be a uniformly continuous function with modulus of continuity $\omega_G$. Then for any $d > 0$, $G$ admits a $(d, O(\omega_G(\frac{1}{d})), \exp(\Theta(d)))$-polynomial approximation.
\end{theorem}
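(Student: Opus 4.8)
The error half of the statement is the classical Jackson theorem, so the only genuinely new work is to additionally track the coefficient bound $B=\exp(\Theta(d))$; the plan is to carry out the standard Jackson-kernel construction but keep the monomial coefficients under control at every step. First I would reduce from $[0,1]$ to the unit circle: affinely substitute $s=2t-1\in[-1,1]$, setting $\tilde G(s)=G((s+1)/2)$ (this only rescales the modulus of continuity by a constant, and un-doing it at the very end inflates monomial coefficients by at most $\exp(O(d))$), and then substitute $s=\cos\theta$ to obtain the even $2\pi$-periodic function $g(\theta):=\tilde G(\cos\theta)$, which inherits the modulus of continuity $\omega_g(\delta)\le \omega_{\tilde G}(\delta)\le\omega_G(\delta)$ since $|\cos\theta-\cos\theta'|\le|\theta-\theta'|$. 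By Remark~\ref{rem:uniform-cont} we may take $\omega_G$ concave and non-decreasing, hence subadditive, so that $\omega_g(\lambda\delta)\le(1+\lambda)\,\omega_g(\delta)$.

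Next, introduce the Jackson kernel $K_d(\theta)=c_d\bigl(\tfrac{\sin(m\theta/2)}{\sin(\theta/2)}\bigr)^{4}$ with $m=\lfloor d/2\rfloor+1$ and $c_d$ normalized so that $\tfrac1{2\pi}\int_{-\pi}^{\pi}K_d=1$; this is a nonnegative trigonometric polynomial of degree $\le d$. The classical fact I would cite (e.g.\ \cite{rivlin1981introduction}) is the second-moment bound $\tfrac1{2\pi}\int_{-\pi}^{\pi}\theta^2 K_d(\theta)\,d\theta=O(1/d^2)$, whence $\tfrac1{2\pi}\int|\theta|K_d\,d\theta=O(1/d)$ by Cauchy--Schwarz (the fourth power, rather than the square as in Fejér's kernel, is exactly what makes this first moment $O(1/d)$ instead of $O(\log d/d)$). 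Setting $P:=g*K_d$, a degree-$\le d$ trigonometric polynomial which is even because $g$ is, one gets
\[
|P(\theta)-g(\theta)|\le\frac1{2\pi}\int_{-\pi}^{\pi}\omega_g(|\phi|)\,K_d(\phi)\,d\phi\le\omega_g(1/d)\Bigl(1+d\cdot\tfrac1{2\pi}\textstyle\int|\phi|K_d(\phi)\,d\phi\Bigr)=O\bigl(\omega_G(1/d)\bigr),
\]
using subadditivity of $\omega_g$. Undoing $s=\cos\theta$, the cosine polynomial $P(\theta)=\sum_{j=0}^{d}a_j\cos(j\theta)$ becomes the algebraic polynomial $\tilde p(s):=\sum_{j=0}^{d}a_jT_j(s)$ with $T_j$ the Chebyshev polynomial, satisfying $\sup_{s\in[-1,1]}|\tilde p(s)-\tilde G(s)|=O(\omega_G(1/d))$.

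For the coefficient bound, note $|a_j|$ is a Fourier coefficient of the bounded function $P$, so $|a_j|\le 2\|P\|_\infty\le 2\|g\|_\infty=2\|G\|_\infty$ (a fixed finite constant for the entropy functions we care about, since $\|K_d\|_1/(2\pi)=1$ and $K_d\ge0$); meanwhile the $\ell_1$-norm of the monomial-coefficient vector of $T_j$ is at most $(1+\sqrt2)^j\le\exp(d)$, so collecting the coefficient of $s^i$ gives $\sum_j|a_j|\cdot\exp(d)=\exp(O(d))$. Finally $p(t):=\tilde p(2t-1)$ transfers the sup-norm error verbatim to $[0,1]$ and multiplies the coefficients by a further $\exp(O(d))$, which is absorbed; this yields a $(d,O(\omega_G(1/d)),\exp(O(d)))$-polynomial approximation as claimed. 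The $\exp(\Omega(d))$ lower bound in the exponent is never used in the applications, but it cannot be removed in general: approximating even a fixed rational function with a pole just outside $[0,1]$ to any nontrivial accuracy forces coefficients growing exponentially in the degree, so $B=\poly(d)$ is impossible.

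The step I expect to demand the most care is the round trip through the substitution $s=\cos\theta$ while keeping \emph{monomial} coefficients controlled: the error and sup-norm estimates pass through transparently, but one has to be explicit about the Chebyshev expansion $\tilde p=\sum a_jT_j$ and the exponential-in-$d$ size of the monomial coefficients of each $T_j$, and then also about the affine rescaling from $[-1,1]$ back to $[0,1]$. The classical analytic heart of the argument --- the $O(1/d^2)$ second-moment estimate for the Jackson kernel --- I would simply invoke as standard.
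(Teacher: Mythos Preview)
Your proposal is correct. The paper does not actually prove this theorem at all; it treats it as a black box, citing Rivlin for the approximation-error statement and then remarking in one sentence that the coefficient bound ``can be shown using standard bounds on the coefficients of the Chebyshev polynomials, but also holds more generally for any polynomial $F$ that is bounded on the interval'' (citing Natanson). Your write-up is a self-contained fleshing out of the first of those two routes: you run the Jackson-kernel construction explicitly, note that the resulting Fourier (hence Chebyshev) coefficients $a_j$ are bounded by $O(\|G\|_\infty)$, and then invoke the $\exp(O(d))$ monomial-coefficient bound for each $T_j$.

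The paper's second suggested route is slightly cleaner as a black box and worth knowing: once classical Jackson gives you \emph{any} degree-$d$ polynomial $p$ with $\sup_{[0,1]}|p-G|=O(\omega_G(1/d))$, you have $\|p\|_\infty\le\|G\|_\infty+O(\omega_G(1/d))$, and then the general fact that a degree-$d$ polynomial bounded by $M$ on $[0,1]$ has all monomial coefficients at most $M\cdot\exp(O(d))$ immediately yields the $B=\exp(O(d))$ bound, with no need to track coefficients through the construction or through the $\cos\theta$ substitution. Your approach has the advantage of being fully constructive; the paper's has the advantage of decoupling the error and coefficient arguments completely.
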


The bound on the coefficients can be shown using standard bounds on the coefficients of the Chebyshev polynomials, but also holds more generally for any polynomial $F$ that is bounded on the interval (see e.g.\ \cite[Cor 2, p56]{natanson1964constructive}).



We now give an alternative to \cref{thm:koc-aleatoric-approx} that can provide better error guarantees for particular choices of $G$ in regimes where $k$ is quite small, and thus the $\frac{|\calY|}{2\sqrt{k}}$ term arising in that theorem could dominate the overall error. Recall that we are working with the binary case ($\calY = \{0 ,1\}$) for simplicity.
In this case our mixture gives us a random variable on $[0, 1]$, and we have the following theorem:

\begin{theorem}\label{thm:koc-degk-aleatoric-approx}
    Let $\calY = \{0, 1\}$ and let $G : [0, 1] \to \R$ be a concave generalized entropy function that has a $(d, \alpha, B)$-polynomial approximation. Then, if $\spred: \calX \rightarrow \Delta \Ysnap$ is an $\epsilon$-$k\th$-order calibrated predictor for any $k \geq d$, then for each $x \in \calX$, we can use $\spred$ to estimate $\ex_{\x \sim [x]}[\AU_G^*(\x)]$ to within an additive error $\delta = \alpha + d^2\epsilon B/2.$
\end{theorem}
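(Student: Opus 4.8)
The plan is to turn the estimation of $\ex_{\x \sim [x]}[\AU^*_G(\x)] = \ex_{\x \sim [x]}[G(\pbayes(\x))]$ into the estimation of the first $d$ moments of the Bayes mixture $\pbayes([x])$, which is precisely what $k\th$-order calibration provides (for $k \ge d$) via \cref{lem:ko-moment-approx}. First I would fix the $(d, \alpha, B)$-polynomial approximation $p(t) = \sum_{i=0}^d \beta_i t^i$ guaranteed by hypothesis, so that $|\beta_i| \le B$ for all $i$ and $\sup_{t \in [0,1]} |p(t) - G(t)| \le \alpha$. Since $\pbayes(\x) \in [0,1]$ for every $\x$, applying this uniform bound pointwise and taking the expectation over $\x \sim [x]$ gives $\bigl| \ex_{\x \sim [x]}[G(\pbayes(\x))] - \ex_{\x \sim [x]}[p(\pbayes(\x))] \bigr| \le \alpha$. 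Hence it suffices to estimate $\ex_{\x \sim [x]}[p(\pbayes(\x))] = \sum_{i=0}^d \beta_i \, \ex_{\x \sim [x]}[\pbayes(\x)^i]$ to within $d^2 \epsilon B / 2$.

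Next I would invoke \cref{lem:ko-moment-approx}: since $\spred$ is $\epsilon$-$k\th$-order calibrated and $k \ge d$, it yields moment estimates $m_1, \dots, m_d$ (and we set $m_0 = 1$) with $|m_i - \ex_{\x \sim [x]}[\pbayes(\x)^i]| \le i\epsilon/2$ for every $i \in [d]$. Define the aleatoric estimate $\hat{\AU}_G := \sum_{i=0}^d \beta_i m_i$; the $i=0$ term is the exact constant $\beta_0$ and contributes no error. By the triangle inequality,
\[
\bigl| \hat{\AU}_G - \ex_{\x \sim [x]}[p(\pbayes(\x))] \bigr| \le \sum_{i=1}^d |\beta_i| \cdot \frac{i\epsilon}{2} \le \frac{B\epsilon}{2}\sum_{i=1}^d i = \frac{B\epsilon}{2}\cdot\frac{d(d+1)}{2} \le \frac{d^2 B \epsilon}{2},
\]
using $d(d+1)/2 \le d^2$ for $d \ge 1$. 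Combining with the polynomial-approximation error from the first step, a final triangle inequality gives $\bigl| \hat{\AU}_G - \ex_{\x \sim [x]}[\AU^*_G(\x)] \bigr| \le \alpha + d^2 B\epsilon/2 = \delta$, as claimed.

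I do not expect a serious obstacle here: the argument is a clean composition of Jackson-type polynomial approximation (which controls the bias term $\alpha$) with the moment-recovery guarantee of \cref{lem:ko-moment-approx} (which controls the variance-like term $d^2 B \epsilon / 2$). The only points needing care are bookkeeping: isolating the constant term $\beta_0$ so the error sum starts at $i=1$, using the hypothesis $k \ge d$ so that \cref{lem:ko-moment-approx} actually supplies all $d$ moments, and the crude estimate $\sum_{i=1}^d i \le d^2$. The first two bullets of \cref{thm:ud-from-koc} then follow by specializing: for Brier entropy $G_\brier(p) = 4p(1-p)$ there is an exact degree-$2$ approximation with $\alpha = 0$, $B = 4$, giving error $8\epsilon$ and hence error $\epsilon$ from $(\epsilon/8)$-second-order calibration; for Shannon entropy one plugs a Jackson approximation (\cref{thm:jacksons}) of the binary entropy function into \cref{thm:koc-degk-aleatoric-approx} and optimizes the degree $d$ against the target error $\epsilon$.
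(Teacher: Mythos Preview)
Your proposal is correct and follows essentially the same approach as the paper's proof: approximate $G$ by the polynomial, bound the resulting bias by $\alpha$, then use \cref{lem:ko-moment-approx} to estimate each moment and sum the errors via the triangle inequality. The only cosmetic difference is that the paper bounds each moment error by $d\epsilon/2$ before summing over $d$ terms, whereas you sum $\sum_{i=1}^d i\epsilon/2$ directly and then use $d(d+1)/2 \le d^2$; both arrive at the same $d^2 B\epsilon/2$ bound.
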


The full proof can be found in Section~\ref{sec:koc-degk-aleatoric-approx-proof}. To apply this theorem, we need to show that commonly used entropy functions have good polynomial approximations. We show that this is indeed the case for the examples considered earlier.  The Brier entropy or Gini impurity $G_\brier(p) = 4p(1-p)$ is trivial since it is itself a quadratic. Hence we get the following corollary (informally stated as the first bullet point in Theorem~\ref{thm:ud-from-koc}):

\begin{corollary}[Corollary to Theorems~\ref{thm:koc-degk-aleatoric-approx} and~\ref{thm:sample-cxty}, Estimating Brier Entropy]\label{cor:brier-entropy-est}

    Let $\epsilon > 0$. Let $g : \calX \to \Delta \Y^{(2)}$ be an $(\epsilon/8)$-second-order calibrated predictor. Let $G_\brier(p) = 4p(1-p)$ denote the Brier entropy. Then we can use $g$ to obtain an estimate $\hat{\AU}_\brier$ such that with high probability \[\left|\hat{\AU}_\brier - \ex_{\x \sim [x]}[\AU^*_{G_{\brier}}(\x)]\right| \leq \epsilon.\]
    In particular, we require only $N \geq 128(4\log(2) + \log(1/\delta))/\epsilon^2$ $2$-snapshot examples from $[x]$ for this guarantee to hold with probability at least $1-\delta$.
    
\end{corollary}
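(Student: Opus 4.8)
The plan is to invoke Theorem~\ref{thm:koc-degk-aleatoric-approx} with the trivial polynomial approximation of $G_\brier$, and then separately invoke Theorem~\ref{thm:sample-cxty} to convert the ``$\epsilon$-$k\th$-order calibrated predictor'' hypothesis into a concrete sample complexity bound for the post-hoc algorithm. First I would observe that $G_\brier(p) = 4p(1-p) = 4p - 4p^2$ is itself a degree-$2$ polynomial with coefficients $\beta_0 = 0$, $\beta_1 = 4$, $\beta_2 = -4$. Hence it admits a $(d,\alpha,B)$-polynomial approximation with $d = 2$, $\alpha = 0$, and $B = 4$. Plugging $d = 2$, $\alpha = 0$, $B = 4$, and calibration parameter $\epsilon' = \epsilon/8$ into the additive error bound $\delta = \alpha + d^2 \epsilon' B / 2$ from Theorem~\ref{thm:koc-degk-aleatoric-approx} gives $\delta = 0 + 4 \cdot (\epsilon/8) \cdot 4 / 2 = \epsilon$. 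Since $d = 2$, the requirement $k \geq d$ is met by $k = 2$, so $(\epsilon/8)$-second-order calibration suffices. This yields the estimate $\hat\AU_\brier$ with $|\hat\AU_\brier - \ex_{\x \sim [x]}[\AU^*_{G_\brier}(\x)]| \leq \epsilon$, establishing the first display.

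For the sample complexity claim, I would note that by Theorem~\ref{thm:sample-cxty}, drawing $N$ i.i.d.\ $2$-snapshots from $\proj_2 \pbayes([x])$ and outputting the empirical mixture gives, with probability at least $1 - \delta$, a predictor $g$ satisfying $W_1(\proj_2 \pbayes([x]), g(x)) \leq \epsilon'$ provided $N \geq 2(|\Y^{(2)}|\log 2 + \log(1/\delta))/(\epsilon')^2$. In the binary case $|\Y^{(2)}| = k+1 = 3$; to be safe and match the stated bound one can use the crude bound $|\Y^{(2)}| \leq \ell^k = 4$ (or simply note $3 \le 4$). Setting $\epsilon' = \epsilon/8$ so that the calibration error feeding into the previous paragraph is as required, the sample size condition becomes $N \geq 2(4\log 2 + \log(1/\delta))/(\epsilon/8)^2 = 128(4\log 2 + \log(1/\delta))/\epsilon^2$, which is exactly the claimed bound. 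Combining the two pieces: with probability at least $1-\delta$ over the $N$ snapshots, $g$ is $(\epsilon/8)$-second-order calibrated, and then the deterministic guarantee of Theorem~\ref{thm:koc-degk-aleatoric-approx} gives the $\epsilon$-accurate aleatoric estimate.

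I do not expect any real obstacle here, since both ingredients are already proved earlier in the paper and this corollary is purely a matter of substituting constants. The only mild care needed is bookkeeping: making sure the calibration parameter $\epsilon'$ is threaded consistently through both Theorem~\ref{thm:koc-degk-aleatoric-approx} (where it multiplies $d^2 B/2 = 8$) and Theorem~\ref{thm:sample-cxty} (where it appears as $1/(\epsilon')^2$), and confirming that the support size $|\Y^{(2)}|$ is handled by the bound $\ell^k$ so the stated constant $4$ in $128(4\log 2 + \log(1/\delta))$ is correct. One could alternatively use the sharper $|\Y^{(2)}| = 3$ and obtain a slightly smaller constant, but the stated bound follows immediately from the $\ell^k$ upper bound noted after Theorem~\ref{thm:sample-cxty}.
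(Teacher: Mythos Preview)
Your proposal is correct and follows essentially the same approach as the paper's own proof: both observe that $G_\brier$ is itself a degree-$2$ polynomial with coefficients bounded by $4$, apply Theorem~\ref{thm:koc-degk-aleatoric-approx} with $(d,\alpha,B)=(2,0,4)$ and calibration error $\epsilon/8$ to obtain additive error $\epsilon$, and then invoke Theorem~\ref{thm:sample-cxty} with $|\Y^{(2)}|\le 4$ and $\epsilon'=\epsilon/8$ to get the stated sample complexity. Your extra remark that $|\Y^{(2)}|=3$ would give a slightly sharper constant is correct but not needed.
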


\begin{proof}
    The Brier entropy presents a particularly easy case of Theorem~\ref{thm:koc-degk-aleatoric-approx} because $G_{\text{Brier}}(p) = 4p(1-p)$ is itself a polynomial of degree 2 with coefficients bounded in absolute value by 4. 

    Theorem~\ref{thm:koc-degk-aleatoric-approx} thus guarantees that an $(\epsilon/8)$-second-order calibrated predictor can estimate $\ex_{\x \sim [x]}[\AU^*_{G_{\text{Brier}}}(\pbayes(\x))]$ to within an additive error of $\epsilon$.

    Theorem~\ref{thm:sample-cxty} tells us that $N \geq 128(4\log(2) + \log(1/\delta))/\epsilon^2$ samples from $[x]$ are sufficient to guarantee an $\epsilon/8$-second-order-calibrated prediction for $[x]$ with probability at least $1-\delta$, thus giving an additive error of $\epsilon$ when estimating the aleatoric uncertainty. 
\end{proof}

In the case of the Shannon entropy, we get the following corollary, captured in the second statement of Theorem~\ref{thm:ud-from-koc}:

\begin{corollary}[Corollary to Theorems~\ref{thm:koc-degk-aleatoric-approx} and~\ref{lem:ko-moment-approx}, Estimating Shannon Entropy]\label{cor:shannon-entropy-est}

    Let $\epsilon > 0$. Let $g : \calX \to \Delta \Ysnap$ be an $\epsilon'$-$k\th$-order calibrated predictor where $k \geq \Theta(\left(\frac{1}{\epsilon}\right)^{\ln 4}$), and $\epsilon' \leq \frac{\epsilon}{\exp(\Theta(k))}$. Let $G_{\shn}(p) = -p\log p - (1 - p)\log(1-p)$ denote the Shannon entropy. Then we can use $g$ to obtain an estimate $\hat{\AU}_\shn$ such that with high probability, \[\left|\hat{\AU}_{\shn} - \ex_{\x \sim [x]}[\AU^*_{G_{\shn}}(\x)]\right| \leq \epsilon.\]
    In particular, we require only $N \geq O(\log(1/\delta)\exp(O((1/\epsilon)^{\ln 4})))$ $k$-snapshot examples from $[x]$ for this guarantee to hold with probability at least $1-\delta$.
\end{corollary}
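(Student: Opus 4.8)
The plan is to derive the corollary from Theorem~\ref{thm:koc-degk-aleatoric-approx} by exhibiting a good polynomial approximation of the binary Shannon entropy $G_\shn(p) = -p\log p - (1-p)\log(1-p)$ through Jackson's theorem. The first step is to control the modulus of continuity $\omega_{G_\shn}$ of $G_\shn$ on $[0,1]$. Using the standard information-theoretic fact that $|G_\shn(p) - G_\shn(q)| \le G_\shn(|p-q|)$ whenever $|p-q| \le 1/2$ (and $G_\shn \le 1$ in general), together with $G_\shn(\delta) = O(\delta\log(1/\delta))$ near $0$, one gets $\omega_{G_\shn}(\delta) = O(\delta\log(1/\delta))$. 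Since $t\log(1/t) = O(t^{c})$ for every constant $c < 1$ (cf.\ Remark~\ref{rem:uniform-cont}), this gives $\omega_{G_\shn}(\delta) = O(\delta^{c})$ for any $c<1$; taking $c = 1/\ln 4$ is what makes the final degree bound work out to $\Theta((1/\epsilon)^{\ln 4})$. Feeding this into Jackson's theorem (Theorem~\ref{thm:jacksons}) yields, for every degree $d$, a $(d,\ O(d^{-1/\ln 4}),\ \exp(\Theta(d)))$-polynomial approximation $p(t) = \sum_{i=0}^d \beta_i t^i$ of $G_\shn$.

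Next I would fix the degree. Set $d = k$ and take $k = \Theta((1/\epsilon)^{\ln 4})$ large enough that the approximation error $\alpha = O(d^{-1/\ln 4})$ is at most $\epsilon/2$; the coefficient bound is then $B = \exp(\Theta(k))$. Applying Theorem~\ref{thm:koc-degk-aleatoric-approx} to $G = G_\shn$ with this approximation and the given $\epsilon'$-$k\th$-order calibrated predictor $g$ (legitimate since $k \ge d$) produces the estimate $\hat{\AU}_\shn = \sum_{i=0}^d \beta_i m_i$, where the $m_i$ are the moment estimates extracted from $g$ via Lemma~\ref{lem:ko-moment-approx}, with error at most $\alpha + d^2 \epsilon' B/2$. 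Choosing the constant in the hypothesis $\epsilon' \le \epsilon/\exp(\Theta(k))$ so that $\exp(\Theta(k))$ dominates $d^2\exp(\Theta(d))$ makes the second term at most $\epsilon/2$, whence the total error is at most $\epsilon$, establishing the displayed inequality.

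For the sample-complexity statement I would invoke Theorem~\ref{thm:sample-cxty}. In the binary case $|\Ysnap| = k+1$, so drawing $N \ge 2((k+1)\log 2 + \log(1/\delta))/(\epsilon')^2$ $k$-snapshots from each level set and returning the empirical mixture gives an $\epsilon'$-$k\th$-order calibrated predictor with probability at least $1-\delta$. Substituting $k = \Theta((1/\epsilon)^{\ln 4})$ and $(\epsilon')^{-2} = \exp(\Theta(k))/\epsilon^2 = \exp(\Theta((1/\epsilon)^{\ln 4}))$, and absorbing the polynomial-in-$k$ and $1/\epsilon^2$ factors into the exponential, yields $N = O(\log(1/\delta)\exp(O((1/\epsilon)^{\ln 4})))$, as claimed.

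The crux is purely quantitative. Jackson's theorem bounds the coefficients of the approximating polynomial only by $\exp(\Theta(d))$, and since $d$ must grow polynomially in $1/\epsilon$ to make the approximation error $\alpha$ small, the error term $d^2 \epsilon' B$ in Theorem~\ref{thm:koc-degk-aleatoric-approx} forces $\epsilon'$ (and hence, by Theorem~\ref{thm:sample-cxty}, the number of snapshots) to be exponential in $\mathrm{poly}(1/\epsilon)$; this blow-up is the price of $G_\shn$ not being a polynomial (contrast the clean $k=2$, $N = O(1/\epsilon^2)$ bound for Brier). The main care needed is (i) pinning down $\omega_{G_\shn}$ precisely enough that the exponent comes out as $\ln 4$, and (ii) verifying that all polynomial prefactors — the $d^2$ term, the $(k+1)\log 2$ term in the sample bound, and similar — are safely absorbed into the $\exp(\Theta(\cdot))$ terms.
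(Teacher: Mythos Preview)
Your proposal is correct and follows essentially the same route as the paper: bound $\omega_{G_\shn}$, apply Jackson's theorem (Theorem~\ref{thm:jacksons}) to obtain a $(d,\,O(d^{-1/\ln 4}),\,\exp(\Theta(d)))$-polynomial approximation, invoke Theorem~\ref{thm:koc-degk-aleatoric-approx} with $d=k=\Theta((1/\epsilon)^{\ln 4})$, and finish with the sample bound from Theorem~\ref{thm:sample-cxty}. The only variation is how the exponent $1/\ln 4$ arises: the paper observes $\omega_{G_\shn}(x)=G_\shn(x)$ and then applies Tops{\o}e's sharp inequality $G_\shn(x)\le(4x(1-x))^{1/\ln 4}$ directly, whereas you first pass through $\omega_{G_\shn}(\delta)=O(\delta\log(1/\delta))$ and then weaken to $O(\delta^{1/\ln 4})$ --- which works, but makes the choice $c=1/\ln 4$ look ad hoc rather than intrinsic.
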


\begin{proof}
    The Shannon entropy $G_\shn(p) = -p \log p - (1-p)\log (1-p)$ can be dealt with using Jackson's theorem (Theorem~\ref{thm:jacksons}), which tells us that for any $d \geq 1$, there exists a degree-$d$ polynomial $F$ and constant $C_1$ such that 
    \[\sup_{x \in [0, 1]}|F(x) - G_\shn(x)| \leq C_1\omega_{G}(1/d),\]
    and the coefficients of $F$ are bounded by $\exp(C_2 d)$ for some constant $C_2$. Here $\omega_{G}$ is the modulus of continuity of $G_{\shn}$ (Definition~\ref{def:uniform-cont-2}), defined by 
    \[ \omega_G(x) = \sup \{ |G_\shn(p) - G_\shn(p')| \mid p, p' \in [0, 1],\ |p - p'| \leq x\}. \]

    In the case of the binary entropy function, it is easy to see that the supremum is achieved at the endpoints, e.g.\ at $p = 0, p' = x$, and so $\omega_G(x) = G_\shn(x)$.
    Now, it turns out that the binary entropy function satisfies the following useful bound \citep{topsoe2001bounds}: \[ G_\shn(x) \leq (4x(1-x))^{1/\ln 4} \leq (4x)^{1/\ln 4}. \] 

    Thus, to ensure $C_1\omega_{G}(1/d) \leq \epsilon/2,$ it suffices to take $k \geq d \geq \frac{1}{4}\left(\frac{2C_1}{\epsilon}\right)^{\ln 4}.$ We choose the minimal snapshot size that can achieve this, and take $k = \frac{1}{4}\left(\frac{2C_1}{\epsilon}\right)^{\ln 4}$.

    By Theorem~\ref{thm:koc-degk-aleatoric-approx}, for this value of $k$ we are guaranteed that an $\epsilon'$-$k\th$-order calibrated predictor will give estimates of the aleatoric uncertainty with additive error at most $\epsilon/2 + k^2\epsilon'e^{C_2 k}/2$. 

    Thus, there exists a constant $C_3$ such that it suffices to have $\epsilon' \leq \frac{\epsilon}{e^{C_3 k}}$ to guarantee an error of at most $\epsilon$. Plugging this bound into Theorem~\ref{thm:sample-cxty} tells us that there exists a constant $C_4$ such that $N \geq \exp(C_4 (1/\epsilon)^{\ln 4})\log(1/\delta)$ samples from $[x]$ are enough to guarantee that the $k\th$-order calibration error in that partition is at most  $\frac{\epsilon}{e^{C_3 k}}$ with probability at least $1 - \delta$, thus guaranteeing that the overall additive approximation error of estimating the Shannon entropy on $[x]$ will be at most $\epsilon$ with probability at least $1 - \delta$. 

    We complete the proof by noting that $\exp(C_4 (1/\epsilon)^{\ln 4})\log(1/\delta) = O(\log(1/\delta)\exp(O((1/\epsilon)^{\ln 4}))).$
\end{proof}

\subsubsection{Proof of Theorem~\ref{thm:koc-aleatoric-approx}}\label{sec:koc-aleatoric-approx-proof}
We restate the theorem for readability:

\begin{theorem}[Restatement of Theorem~\ref{thm:koc-aleatoric-approx}]
    Consider a $k\th$-order predictor $\spred: \calX \rightarrow \Delta \Ysnap$ that satisfies $\epsilon$-$k\th$-order calibration with respect to a partition $[\cdot]$. Let $G$ be any concave entropy function that satisfies uniform continuity with modulus of continuity $\omega_G: \R_{\geq 0} \rightarrow \R_{\geq 0}$. Then, $\spred$'s estimate of aleatoric uncertainty with respect to $G$ satisfies the following guarantee for all $x \in \calX$:
    \[\left|\AU_G(\spred: x) - \ex_{\x \sim [x]}[\AU_G^*(\x)]\right|\leq \omega_G(\epsilon + \frac{|\calY|}{2\sqrt{k}}).\]  
\end{theorem}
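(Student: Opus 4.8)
The plan is to reduce the statement to the continuity of $G$ with respect to the $\ell_1$ metric, exploiting the fact that $\epsilon$-$k\th$-order calibration already yields $(\epsilon + |\calY|/(2\sqrt k))$-higher-order calibration. First I would rewrite both quantities as expectations of $G$ over mixtures: by \cref{def:uncertainty-defs}, $\AU_G(\spred : x) = \ex_{\Distr \sim \spred(x)}[G(\Distr)]$, and exactly as in the proof of \cref{lem:perfect-HOC-implies-AU}, $\ex_{\x \sim [x]}[\AU_G^*(\x)] = \ex_{\Distr^* \sim \pbayes([x])}[G(\Distr^*)]$. So the claim is simply that $\bigl|\ex_{\Distr \sim \spred(x)}[G(\Distr)] - \ex_{\Distr^* \sim \pbayes([x])}[G(\Distr^*)]\bigr|$ is small. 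By \cref{thm:koc-approx-hoc} (equivalently, combining $\epsilon$-$k\th$-order calibration with \cref{lem:pred-k-pred-conv} and the triangle inequality), we have $W_1(\spred(x), \pbayes([x])) \le \epsilon + |\calY|/(2\sqrt k)$ for every $x \in \calX$.

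The heart of the argument is a general lemma: for any two mixtures $\mix, \mix' \in \DDY$, $\bigl|\ex_{\Distr \sim \mix}[G(\Distr)] - \ex_{\Distr' \sim \mix'}[G(\Distr')]\bigr| \le \omega_G(W_1(\mix, \mix'))$. To prove it I would take an optimal (or arbitrarily near-optimal) coupling $\mu \in \Gamma(\mix, \mix')$ with $\ex_{(\Distr, \Distr') \sim \mu}[\|\Distr - \Distr'\|_1] = W_1(\mix, \mix')$, and estimate
\[ \Bigl| \ex_{\Distr \sim \mix}[G(\Distr)] - \ex_{\Distr' \sim \mix'}[G(\Distr')] \Bigr| = \Bigl| \ex_{(\Distr,\Distr') \sim \mu}[G(\Distr) - G(\Distr')] \Bigr| \le \ex_\mu\bigl[ |G(\Distr) - G(\Distr')| \bigr] \le \ex_\mu\bigl[ \omega_G(\|\Distr - \Distr'\|_1) \bigr], \]
using the definition of the modulus of continuity in the last step. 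By \cref{rem:uniform-cont} we may take $\omega_G$ to be concave and non-decreasing on the relevant range $[0,2]$ (note $\|\Distr - \Distr'\|_1 \le \mathsf{diam}(\DY) = 2$), so Jensen's inequality gives $\ex_\mu[\omega_G(\|\Distr - \Distr'\|_1)] \le \omega_G(\ex_\mu[\|\Distr - \Distr'\|_1]) = \omega_G(W_1(\mix, \mix'))$; if only near-optimal couplings are available one passes to the limit using continuity of $\omega_G$.

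Finally I would apply this lemma with $\mix = \spred(x)$ and $\mix' = \pbayes([x])$, and combine with monotonicity of $\omega_G$ and the bound $W_1(\spred(x), \pbayes([x])) \le \epsilon + |\calY|/(2\sqrt k)$ to get
\[ \bigl| \AU_G(\spred : x) - \ex_{\x \sim [x]}[\AU_G^*(\x)] \bigr| \le \omega_G\bigl( W_1(\spred(x), \pbayes([x])) \bigr) \le \omega_G\Bigl( \epsilon + \tfrac{|\calY|}{2\sqrt k} \Bigr), \]
as desired. The only real point of care — and the step I expect to be the main obstacle — is the appeal to concavity of $\omega_G$ in the Jensen step: the raw modulus $\omega_G(\delta) = \sup\{|G(p)-G(p')| : \|p-p'\|_1 \le \delta\}$ need not be concave, so one formally invokes the convention of \cref{rem:uniform-cont} (replacing $\omega_G$ by its least concave majorant on $[0,2]$, which remains a valid non-decreasing modulus of continuity for the bounded diameter at hand). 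Everything else is routine manipulation of couplings and the triangle inequality for $W_1$.
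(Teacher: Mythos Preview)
Your proposal is correct and follows essentially the same approach as the paper's proof: rewrite both sides as expectations of $G$ over mixtures, pass to an optimal coupling, apply the modulus of continuity pointwise, then use Jensen via the concavity/monotonicity of $\omega_G$ from \cref{rem:uniform-cont}, and finally invoke \cref{thm:koc-approx-hoc} for the $W_1$ bound. The only cosmetic difference is that you package the coupling-and-Jensen step as a standalone lemma and are slightly more explicit about why one may assume $\omega_G$ concave, but the argument is otherwise identical.
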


\begin{proof}[Proof of Theorem~\ref{thm:koc-aleatoric-approx}]
    Consider any $x \in \calX$. Using the definition of aleatoric uncertainty with respect to $G$, we can rewrite the distance between the true and estimated AU as 
    \[\left|\AU_G(\spred:x) - \ex_{\x \sim [x]}[\AU_G^*(\x)]\right| = \left|\ex_{\Distr \sim \spred(x)}[G(\Distr)] - \ex_{\Distr^* \sim \pbayes([x])}[G(\Distr^*)]\right|\]

    Let $\mu$ be the optimal coupling of $\spred(x)$ and $\pbayes([x])$, i.e., \[\mu := \argmin_{\mu' \in \Gamma(\spred(x), \pbayes([x]))}\ex_{(\Distr, \Distr^*) \sim \mu'}[\|\Distr - \Distr^*\|_1].\]
    
    We can rewrite the above expression in terms of $\mu$ as 
    
    \begin{align*}
        &\left|\AU_G(\spred:x) - \ex_{\x \sim [x]}[\AU_G^*(\x)]\right|\\
        &= \left|\ex_{(\Distr, \Distr^*) \sim \mu}[G(\Distr) - G(\Distr^*)]\right|\\
        &\leq \ex_{(\Distr, \Distr^*) \sim \mu}[\left|G(\Distr) - G(\Distr^*)\right|]\\
        &\leq \ex_{(\Distr, \Distr^*) \sim \mu}[\omega_G(\|\Distr - \Distr^*\|_1)] &&\text{(uniform continuity of $G$)}\\
        &\leq \omega_G(\ex_{(\Distr, \Distr^*) \sim \mu}[\|\Distr - \Distr^*\|_1]) &&\text{(Jensen's Inequality + Concavity of $\omega_G$)}
    \end{align*}
    See Remark~\ref{rem:uniform-cont} for a discussion of why we can assume that $\omega_G$ is concave.

    Because $\spred$ is $\epsilon$-$k\th$-order calibrated, by Theorem~\ref{thm:koc-approx-hoc}, we can guarantee that it is also $(\epsilon + \frac{|\calY|}{2\sqrt{k}})$-higher-order calibrated, and thus we are guaranteed that $W_1(\spred(x), \pbayes([x])) \leq \epsilon + \frac{|\calY|}{2\sqrt{k}}$. 

    Expanding out the definition of $\ell_1$-Wasserstein distance, because $\mu$ was defined as the optimal coupling of $\spred(x)$ and $\pbayes([x])$, we have
    \begin{align*}
        W_1(\spred(x), \pbayes([x])) = \argmin_{\mu' \in \Gamma(\spred(x), \pbayes([x]))}\ex_{(\Distr, \Distr^*) \sim \mu'}[\|\Distr - \Distr^*\|_1] &= \ex_{(\Distr, \Distr^*) \sim \mu}[\|\Distr - \Distr^*\|_1] \leq \epsilon + \frac{|\calY|}{2\sqrt{k}}.
    \end{align*}

    We can use this upper bound to further simplify our above inequality as 
    \begin{align*}
        \left|\AU_G(\spred:x) - \ex_{\x \sim [x]}[\AU_G^*(\x)]\right| &\leq \omega_G(\ex_{(\Distr, \Distr^*) \sim \mu}[\|\Distr - \Distr^*\|_1])\\
        &\leq \omega_G(\epsilon + \frac{|\calY|}{2\sqrt{k}}),
    \end{align*}
    giving us the desired upper bound on the error of our aleatoric uncertainty estimate. Note that the final step assumes $\omega_G$ is non-decreasing. We refer to Remark~\ref{rem:uniform-cont} for a discussion of why this is a reasonable assumption. 
\end{proof}

\subsubsection{Proof of Theorem~\ref{thm:koc-degk-aleatoric-approx}}\label{sec:koc-degk-aleatoric-approx-proof}

\begin{theorem}[Restatement of Theorem~\ref{thm:koc-degk-aleatoric-approx}]
    Let $\calY = \{0,1 \}$ and let $G : [0, 1] \to \R$ be a concave generalized entropy function that has $(d, \alpha, B)$-polynomial approximations. Then, if $\spred: \calX \rightarrow \Delta \Ysnap$ is an $\epsilon$-$k\th$-order calibrated predictor for any $k \geq d$, then for each $x \in \calX$, we can use $\spred$ to estimate $\ex_{\x \sim [x]}[\AU_G^*(\x)]$ to within an additive error $\delta = \alpha + d^2\epsilon B/2.$
\end{theorem}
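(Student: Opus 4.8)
The plan is to combine a polynomial approximation of $G$ with the moment-recovery guarantee of \cref{lem:ko-moment-approx}. Fix $x \in \calX$, write $\mixbayes = \pbayes([x])$ for the Bayes mixture over the level set, and for brevity let $\Distr^* \sim \mixbayes$ denote a random draw, so that the target quantity is $\ex_{\x \sim [x]}[\AU_G^*(\x)] = \ex_{\Distr^*}[G(\Distr^*)]$. Since $k \geq d$, \cref{lem:ko-moment-approx} lets us extract from $\spred(x)$ moment estimates $m_1, \dots, m_k$ with $|m_i - \ex_{\Distr^*}[(\Distr^*)^i]| \leq i\epsilon/2$ for every $i \in [k]$; in particular this holds for all $i \leq d$, and we additionally set $m_0 = 1$, which is exact. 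Let $p(t) = \sum_{i=0}^d \beta_i t^i$ be the guaranteed $(d, \alpha, B)$-polynomial approximation of $G$, with $|\beta_i| \leq B$ and $\sup_{t \in [0,1]} |p(t) - G(t)| \leq \alpha$. The estimator is then simply $\hat{\AU}_G := \sum_{i=0}^d \beta_i m_i$, i.e.\ the approximating polynomial evaluated on the estimated moments.

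The error bound follows from a two-step triangle inequality. First, since $\Distr^* \in [0,1]$ almost surely, the uniform approximation bound gives $|\ex_{\Distr^*}[G(\Distr^*)] - \ex_{\Distr^*}[p(\Distr^*)]| \leq \alpha$, and by linearity $\ex_{\Distr^*}[p(\Distr^*)] = \sum_{i=0}^d \beta_i \ex_{\Distr^*}[(\Distr^*)^i]$. Second, comparing this with $\hat{\AU}_G$ coefficient by coefficient: the $i=0$ term cancels exactly (since $m_0$ is exact), and for $i \geq 1$ the moment-estimate guarantee gives
\[ \left| \hat{\AU}_G - \sum_{i=0}^d \beta_i \ex_{\Distr^*}[(\Distr^*)^i] \right| \;\leq\; \sum_{i=1}^d |\beta_i|\,\bigl|m_i - \ex_{\Distr^*}[(\Distr^*)^i]\bigr| \;\leq\; \sum_{i=1}^d B \cdot \frac{i\epsilon}{2} \;=\; \frac{B\epsilon}{2}\cdot\frac{d(d+1)}{2} \;\leq\; \frac{d^2 B \epsilon}{2}. \]
Combining the two steps yields $\bigl|\hat{\AU}_G - \ex_{\x \sim [x]}[\AU_G^*(\x)]\bigr| \leq \alpha + d^2 B\epsilon/2 = \delta$, as claimed. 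If one wants the full sample-complexity statement, one plugs in \cref{thm:sample-cxty} to realize the $\epsilon$-$k\th$-order-calibrated predictor from $k$-snapshots, exactly as in \cref{cor:brier-entropy-est,cor:shannon-entropy-est}.

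I do not expect any genuine obstacle here; the theorem is essentially a corollary of \cref{lem:ko-moment-approx} once the polynomial approximation is fixed. The only points needing care are bookkeeping: (i) the moment-estimate indices in \cref{lem:ko-moment-approx} range over $[k]$, so $k \geq d$ is exactly what is needed to have an estimate for every monomial of $p$; (ii) the constant term $\beta_0$ must be handled separately so that it contributes zero error; and (iii) the factor $\frac12$ in the per-moment error is the binary-case artifact from \cref{lem:ko-moment-approx} and propagates cleanly into the final $d^2 B\epsilon/2$. One could keep the tighter $\sum_{i=1}^d i = d(d+1)/2$, but the cruder bound is what matches the stated $\delta$.
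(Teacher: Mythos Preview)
Your proposal is correct and follows essentially the same approach as the paper's proof: both combine the polynomial approximation with the moment estimates from \cref{lem:ko-moment-approx}, bound the approximation error by $\alpha$ and the moment-substitution error term-by-term, and combine via the triangle inequality. The only cosmetic difference is that the paper bounds each $i\epsilon/2$ by $d\epsilon/2$ before summing (giving $d\cdot d\epsilon/2$ directly), whereas you keep the sharper $\sum_{i=1}^d i = d(d+1)/2$ and then relax to $d^2$; either way the final bound is the same.
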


At a high-level, the proof will leverage Lemma~\ref{lem:ko-moment-approx}, which tells us that $k\th$-order calibrated predictors can be used to obtain good estimates for the first $k$ moments of the true bayes mixture. Because a degree-d polynomial of a random variable $\p$ can be computed using only the first $d$ moments of $\p$, we can use our moment estimates to compute any degree $d \leq k$ polynomial, which will provide a good estimate of aleatoric uncertainty when that polynomial closely approximates $G$.

\begin{proof}[Proof of Theorem~\ref{thm:koc-degk-aleatoric-approx}]
    Fix some $x \in \calX$. By assumption of the theorem, $G$ has $(d, \alpha, B)$-polynomial approximations. This means that there exists a degree-$d$ polynomial $c(t) = \sum_{i = 0}^d \beta_i t^i$ such that for all $i \in \{0, ..., d\}$, $|\beta_i| \leq B$, and for all $t \in [0, 1]$, $|c(t) - G(t)| \leq \alpha$. 

    The polynomial $c$ can thus be used to approximate $\ex_{\x \sim [x]}[\AU_G^*(\x)]$ to within an additive error of $\alpha$: 
    \begin{align*}
        \left|\ex_{x \sim [x]}[c(\pbayes(\x))] - \ex_{\x \sim [x]}[\AU_G^*(\x)]\right| &= \left|\ex_{x \sim [x]}[c(\pbayes(\x))] - \ex_{\x \sim [x]}[G(\pbayes(\x))]\right|\\
        &\leq \ex_{x \sim [x]}[\left|c(\pbayes(\x)) - G(\pbayes(\x))\right|]\\
        &\leq \ex_{x \sim [x]}[\alpha]\\
        &= \alpha
    \end{align*}

    We now want to show that we can obtain a good estimate of $\ex_{x \sim [x]}[c(\pbayes(\x))]$ only with access to $\spred$. We expand out the definition of $c$ to have
    \begin{align*}
        \ex_{x \sim [x]}[c(\pbayes(\x))] &= \ex_{x \sim [x]}[\sum_{i = 0}^d \beta_i (\pbayes(\x))^i]\\
        &= \sum_{i = 0}^d\beta_i \ex_{x \sim [x]}[\pbayes(\x)^i]
    \end{align*}

    Let $m_1, ..., m_d \in \R_{\geq 0}$ be the moment estimates of $\ex_{x \sim [x]}[\pbayes(\x)^1], ..., \ex_{x \sim [x]}[\pbayes(\x)^d]$, respectively that are obtained from $\spred$ via Lemma~\ref{lem:ko-moment-approx}. Because $d \leq k$, for each $i \in [d]$, the lemma guarantees that 
    \begin{align}|m_i - \ex_{x \sim [x]}[\pbayes(\x)^i]| \leq i\epsilon/2 \leq d\epsilon/2.\label{eq:moment-approx-guarantee}\end{align}

    Substituting in each of these moment estimates for $\ex_{x \sim [x]}[\pbayes(\x)^i]$ in the computation of $c$ will give a $d^2\epsilon B/2$-additive approximation of $\ex_{x \sim [x]}[c(\pbayes(\x))]$:

    \begin{align*}
        \left|\ex_{x \sim [x]}[c(\pbayes(\x))] - \left(\beta_0 + \sum_{i = 1}^d\beta_i m_i\right)\right| &= \left|\sum_{i = 0}^d\beta_i \ex_{x \sim [x]}[\pbayes(\x)^i] - \left(\beta_0 + \sum_{i = 1}^d\beta_i m_i\right)\right|\\
        &\leq \sum_{i = 1}^d\left|\beta_i\right|\left|\ex_{x \sim [x]}[\pbayes(\x)^i] - m_i\right|\\
        &\leq B\sum_{i = 1}^d\left|\ex_{x \sim [x]}[\pbayes(\x)^i] - m_i\right| \tag{$|\beta_i| \leq B$ for all $i \in [d]$}\\
        &\leq B\sum_{i = 1}^d d\epsilon/2 \tag{\ref{eq:moment-approx-guarantee}}\\
        &= Bd^2\epsilon/2. 
    \end{align*}

    Putting all our pieces together, we show that $\beta_0 + \sum_{i = 1}^d\beta_i m_i$, which is computed only using $\spred$, gives an additive $\alpha+ d^2B\epsilon/2$-approximation to $\ex_{\x \sim [x]}[\AU_G^*(\x)]$. By the triangle inequality, 

    \begin{align*}
        &\left|\ex_{\x \sim [x]}[\AU_G^*(\x)] - \left(\beta_0 + \sum_{i = 1}^d\beta_i m_i\right)\right|\\
        &\leq \left|\ex_{\x \sim [x]}[\AU_G^*(\x)] - \ex_{x \sim [x]}[c(\pbayes(\x))]\right| + \left|\ex_{x \sim [x]}[c(\pbayes(\x))] - \left(\beta_0 + \sum_{i = 1}^d\beta_i m_i\right)\right|\\
        &\leq \alpha + Bd^2\epsilon/2 
    \end{align*}

    giving the desired additive approximation. 
    
\end{proof}

\section{Additional experiments}
\label{appendix:experiments}
\subsection{Experimental details}

We configure our wide ResNets on CIFAR-10 using the ``Naive NN'' formula from Table 2 of \cite{johnson2024experts} and the Python implementation of the network from the \texttt{uncertainty\_baselines} package\footnote{https://github.com/google/uncertainty-baselines} \citep{nado2021uncertainty}. We use a learning rate of $3.799\mathrm{e}{-3}$ and relatively large weight decay of $3.656\mathrm{e}{-1}$. We train the model for 50 epochs with the AdamW optimizer \citep{kingma2014adam, loshchilov2017decoupled}. For the first epoch, we warm up the learning rate and apply cosine decay thereafter. All models are trained using AugMix data augmentation \citep{hendrycks2020augmix} with the standard hyperparameters used in the \texttt{uncertainty\_baselines} package.

To calibrate the model for each value of $k$, we create $k$-snapshots by sampling $k$ labels at random once for each image in the calibration set. We do not sample multiple snapshots per image.

\subsection{Evaluating other uncertainty estimation methods in terms of higher-order calibration}

While the snapshot algorithms we present in Section \ref{sec:achieving_k} \textit{provably} tend towards higher-order calibration, they are not a requirement for higher-order calibration. It can in principle be achieved by any mixture model or higher-order predictor, even one trained without explicit access to snapshots at all. In this section, we measure how well-calibrated other methods for uncertainty estimation are on the CIFAR-10 image classification task.

In addition to higher-order calibrated snapshot models from our work, we evaluate the following methods:

\textbf{Naive decomposition}: As a sanity check, we train a 1-snapshot predictor and treat it as a naive mixture model (i.e., one that places all of its probability mass on the lone distribution output by the predictor).

\textbf{Ensemble}: One simple way to obtain a mixture model is to train $N$ independent 1-snapshot predictors (let's call them $\pred_1, ..., \pred_N : \calX \to \DY$) and output the uniform mixture over their predicted label distributions: $g(x) = \unif(\pred_1(x), ..., \pred_N(x))$. Concretely, we use the same network architecture we calibrated in Section \ref{sec:experiments} and, as in \cite{johnson2024experts}, we use $N=8$.

\textbf{Epinet}: A popular method for uncertainty estimation is the Epinet \citep{osband2023epistemic}. In addition to regular inputs, Epinets are conditioned on an additional random vector $\rv{z}$. By integrating over possible values of $\rv{z}$, it is possible to elicit joint predictions from the network (or, in the terminology of this paper, snapshot distributions). Epinets are usually instantiated with a pair of MLPs affixed to the end of a network, trained in tandem with or after the main trunk. One MLP is frozen at initialization; the other is not. Both are conditioned on $z$ and the penultimate hidden state of the main network. Loosely speaking, the Epinet learns to mitigate noise introduced by the frozen MLP with the trainable one, and it learns to do this best in regions of the input space encountered most often during training (that is to say, regions of the input space where there is least epistemic uncertainty). 

We attach such MLPs to a regular 1-snapshot CIFAR-10 model and train both in tandem for about 20 additional epochs, using the default hyperparameters in the \texttt{enn} Python library\footnote{https://github.com/google-deepmind/enn} \citep{osband2023epistemic} (notably, 50-dimensional hidden layers, 20-dimensional index vectors, and 5 index vectors per training input) and stopping early based on loss on our validation set. Index vectors are sampled from $\mathcal{N}(0, I_d)$, where $d = 20$ is the dimension of the index vectors, and then normalized. At evaluation time, we use 50 index vectors per input and treat the resulting cloud of predictions as the predicted mixture at that point. If $f(x, z)$ is our finished epinet, then our final model is $g(x) = \unif(f(x, \rv{z}_1), ..., f(x, \rv{z}_{50}))$ for some $\rv{z}_1, \dots, \rv{z}_{50} \sim \calN(0, I_d)$.

\begin{table}
    \centering
    \begin{tabular}{c|cc}
        \hline
        \textsc{Method} & \textsc{Aleatoric error} & \textsc{Acc} \\
        \hline
        \textsc{Ensemble} & $0.308 \pm 0.018$ & $0.932 \pm 0.004$ \\
        \textsc{Epinet} & $0.307 \pm 0.006 $ & $0.900 \pm 0.008$ \\
        \textsc{1-snapshot model} & $0.501 \pm 0.006$ & $0.912 \pm 0.004$ \\
        \textsc{2-snapshot model} & $0.307 \pm 0.010$ & $0.912 \pm 0.004$ \\
        \textsc{5-snapshot model} & $0.158 \pm 0.010$ & $0.912 \pm 0.006$ \\
        \textsc{10-snapshot model} & $0.088 \pm 0.010$ & $0.912 \pm 0.004$ \\
        \textsc{50-snapshot model} & $\mathbf{0.026} \pm 0.006$ & $0.912 \pm 0.004$ \\
        \hline
    \end{tabular}
    \caption{Expected pointwise aleatoric error (Eq. \ref{eq:au-est-error}) of various methods for uncertainty estimation. 95\% confidence intervals are computed across 50 random resamplings of the test and calibration sets, where applicable.}
    \label{tab:calibration_baselines}
\end{table}

Results are given in Table \ref{tab:calibration_baselines}. We see that calibration with large snapshots is a simple and effective path to strong higher-order calibration.

\subsection{Comparing our higher-order calibration algorithms}

In this section we compare our two algorithms for achieving higher-order calibration (namely \emph{learning directly from snapshots} and \emph{post-hoc} calibration; see Section \ref{sec:achieving_k}) on the FER+ \citep{barsoum2016training} dataset. We use this rather than CIFAR-10H for a fairer comparison since the learning from snapshots algorithm requires more multi-label data to be effective, and FER+ is a larger dataset than CIFAR-10H ({\facialsize} vs.\ {\cifarhsize}).

FER+ is a facial recognition dataset of approx. {\facialsize} ${\facialdimension} \times {\facialdimension}$ grayscale images with {\facialclasses} possible classes (emotions such as happy, neutral, sad, etc.), and with {\facialannotators} independent human label annotations per image. Thus $\calY$ is the space of {\facialclasses} possible emotions, and for each image $x \in \calX$, $\pbayes(x)$ is the uniform distribution over its {\facialannotators} independent annotations. There is substantial disagreement among annotators; the annotations (when converted to a distribution over $\calY$) have a mean entropy of ${\facialmeanentropy}$ natural units (STD 0.5). In this paper's terminology, this means that the mean aleatoric uncertainty $\ex[\AU^*_H(\x)] = \ex[H(\pbayes(\x))] = \facialmeanentropy$, where $H$ is the Shannon entropy. Note that for training and evaluation we use a further 80/10/10 train/val/test split of the dataset. For the post-hoc calibration algorithm, the calibration set is composed of half of the test set.

\begin{table}
    \centering
    \begin{tabular}{c|c|cc}
        \hline
        \textsc{Method} & \textsc{No. snapshots} & \textsc{Aleatoric error} & \textsc{Acc} \\
        \hline
        \textsc{Learning from snapshots} & 1 & $0.615$ & 0.802 \\
        \textsc{Post-hoc} & 1 &  $0.615$ & $0.802$ \\
        \textsc{Learning from snapshots} & 2 & $0.418$ & $0.818$ \\
        \textsc{Post-hoc} & 2 & $0.349$ & $0.802$ \\
        \textsc{Learning from snapshots} & 3 & $0.369$ & $0.823$ \\
        \textsc{Post-hoc} & 3 & $0.217$ & $0.802$ \\
        \textsc{Learning from snapshots} & 4 & $0.355$ & $0.819$ \\
        \textsc{Post-hoc} & 4 & $0.139$ & $0.802$ \\
        \textsc{Post-hoc} & 10 & $0.041$ & $0.802$ \\
        \hline
    \end{tabular}
    \caption{Expected pointwise aleatoric error (Eq. \ref{eq:au-est-error}) of two higher-order calibration algorithms (see Section \ref{sec:achieving_k} for details) evaluated on FER+.}
    \label{tab:calibration_both_algos}
\end{table}

Results are given in Table \ref{tab:calibration_both_algos}. At least at this model scale, it appears that the post-hoc algorithm achieves strictly better calibration on FER+ for a given number of snapshots than learning from snapshots directly. While the latter has shown promise in \cite{johnson2024experts} and achieves nontrivial calibration, it has disadvantages compared to the former: unlike the post-hoc calibration scheme, it requires modifying the parameters of the predictor and also training an output layer of size that grows exponentially with the number of snapshots. However, note that this is not a perfect apples-to-apples comparison as the partitions induced by the two methods are not necessarily the same.

\subsection{Binary regression}

\begin{figure}[t]
\includegraphics[width=\textwidth]{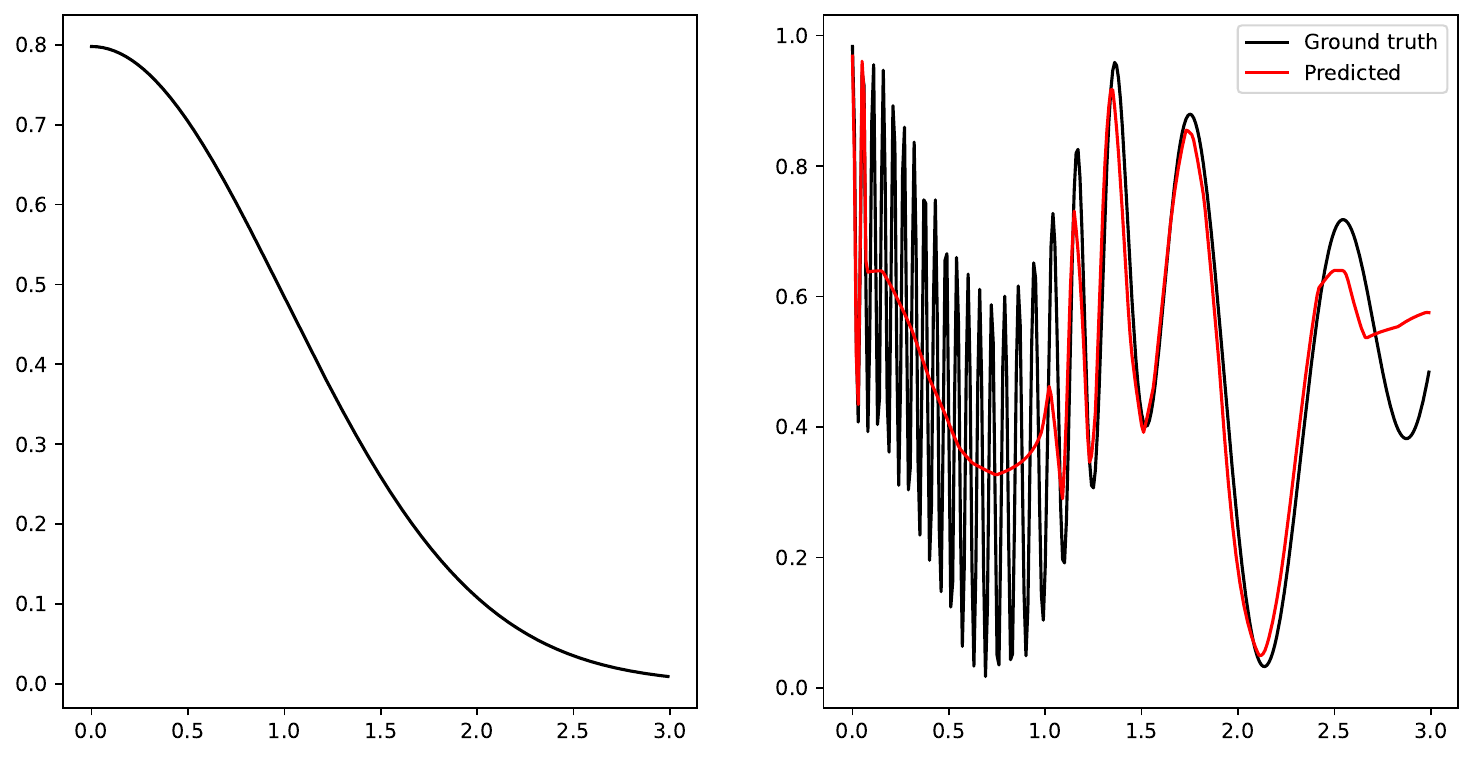}
\vspace{-7mm}
\caption{\small The simple binary regression task from \cite{johnson2024experts}. (Positive) inputs are drawn from a normal distribution (left), and outputs are determined by a fixed function $p(y | x)$ (right) with low- and high-frequency components, the latter of which our simple predictor ($k=1$) fails to learn completely.}
\label{fig:binary_regression}
\end{figure}

\begin{figure}[t]
\includegraphics[width=\textwidth]{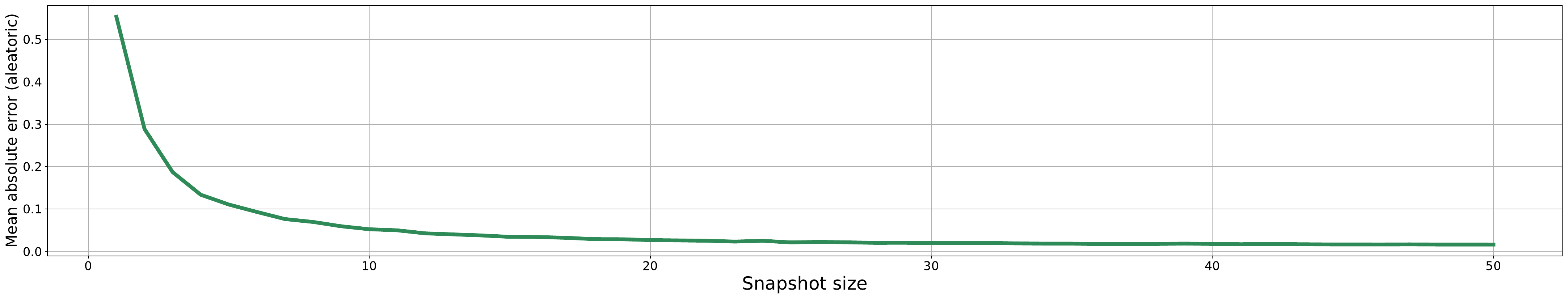}
\includegraphics[width=\textwidth]{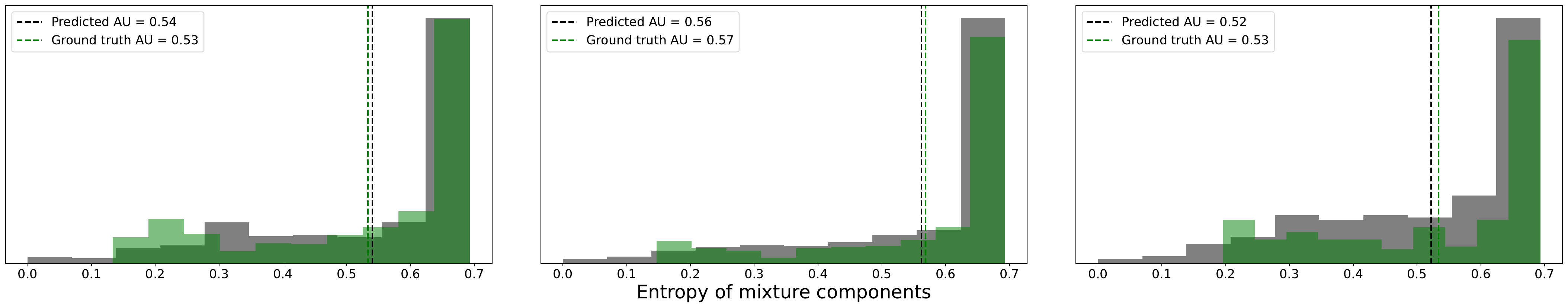}
\vspace{-7mm}
\caption{\small \textit{Top:} Average aleatoric uncertainty estimation error (\cref{eq:au-est-error}) of binary regression models calibrated using snapshots of increasing size. \textit{Bottom:} For three of the highest-entropy equivalence classes, we depict the distribution of entropies ranging over components of the predicted mixture (gray) and the Bayes mixture (green). We see that the distributions and in particular the means are similar.}
\label{fig:aleatoric_error_binary}
\end{figure}

As a sanity check, we verify that it is possible to create higher-order calibrated models on the synthetic 1D binary regression task from \cite{johnson2024experts}, which fixes a specific distribution of coins $p(x)$ and a bias distribution $p(y | x)$ with both high- and low-frequency variation. Specifically, we train the neural network defined in their Algorithm 2 to predict 1-snapshots from a training set sampled from the synthetic distribution. We then sample a separate calibration set of 5000 snapshots, apply the algorithm from Section \ref{calibration_algorithm} to the predictor, and finally measure $\epsilon$ (as specified in Definition \ref{def:approx_koc}) on a third test set.

We find that, indeed, a predictor calibrated in this fashion achieves calibration error $\epsilon$ lower than naive predictors that output uniform distributions over snapshots or the snapshot distribution that results from fair coins. A 1-snapshot predictor pretrained for 10,000 batches and calibrated with 5,000 10-snapshots reaches $\epsilon = 0.400 \pm 0.23$; both naive baselines achieve $0.959 \pm 0.014$, close to the maximum value of 1. If we relax the condition in Definition \ref{def:approx_koc} somewhat and exclude the noisiest 5\% of the equivalence classes, the 95\% percentile of error, the same model reaches $0.212 \pm 0.089$. All 95\% confidence intervals are taken over 50 resamplings of the calibration and test sets.

For a more easily interpretable measure of higher-order calibration, we also compute the mean aleatoric estimation error, as defined in Equation \ref{eq:au-est-error}. See Figure \ref{fig:aleatoric_error_binary} for results. As previously, increasing the size of the snapshots used for calibration does indeed improve estimates of aleatoric uncertainty.
\end{document}